\documentclass{article}


\usepackage[final]{neurips_2025}




\usepackage[utf8]{inputenc} 
\usepackage[T1]{fontenc}    
\usepackage{hyperref}       
\usepackage{url}            
\usepackage{booktabs}       
\usepackage{amsfonts}       
\usepackage{nicefrac}       
\usepackage{microtype}      
\usepackage{xcolor}         

\colorlet{inlinkcolor}{purple!80!black}
\colorlet{exlinkcolor}{purple}
\colorlet{citecolor}{blue}
\hypersetup{colorlinks=true,
	breaklinks=true,
	linkcolor=inlinkcolor,
	citecolor=citecolor,
	urlcolor=exlinkcolor,
	linktoc=page,
	breaklinks=true,
	plainpages=false}
\usepackage{natbib}

\usepackage{bm, bbm}
\usepackage{amsmath, amsthm, amssymb}
\usepackage{algorithm, algpseudocode}
\usepackage{longtable,booktabs}
\allowdisplaybreaks[4]

\usepackage{appendix}

\title{Adv-SSL: Adversarial Self-Supervised Representation Learning with Theoretical Guarantees}

%

\author{%
    \textbf{Chenguang Duan}\textsuperscript{$1$}
    \quad
    \textbf{Yuling Jiao}\textsuperscript{$2,3,4$}
    \quad
    \textbf{Huazhen Lin}\textsuperscript{$5,6,7,8$}\thanks{Corresponding author}
    \\
    \textbf{Wensen Ma}\textsuperscript{$1$}
    \quad
    \textbf{Jerry Zhijian Yang}\textsuperscript{$8,3,1,4$}
    \\[0.3cm]
    \textsuperscript{1}School of Mathematics and Statistics, Wuhan University\\
    \textsuperscript{2}School of Artificial Intelligence, Wuhan University \\
    \textsuperscript{3}National Center for Applied Mathematics in Hubei, Wuhan University \\
    \textsuperscript{4}Hubei Key Laboratory of Computational Science, Wuhan University \\
    \textsuperscript{5}Center of Statistical Research, Southwestern
 University of Finance and Economics \\
    \textsuperscript{6}School of Statistics and Data Science, Southwestern
 University of Finance and Economics \\
    \textsuperscript{7}New Cornerstone Science Laboratory, Southwestern University of Finance and Economics \\
    \textsuperscript{8}Institute for Math \& AI, Wuhan University \\
    \texttt{\{cgduan.math, yulingjiaomath, vincen, zjyang.math\}@whu.edu.cn}\\
    \texttt{linhz@swufe.edu.cn}
}

\DeclareMathOperator*{\argmin}{arg\,min}

\newcommand{\norm}[1]{\left\Vert{#1}\right\Vert}
\newcommand{\abs}[1]{\left\vert{#1}\right\vert}
\newcommand{\InnerProduct}[2]{\left\langle {#1},{#2} \right\rangle}

\newcommand{\floor}[1]{\displaystyle\left\lfloor{#1}\right\rfloor}

\newcommand{\ceil}[1]{\displaystyle\lceil{#1}\rceil}


\newcommand{\Hsquare}{\text{\fboxsep=-.2pt\fbox{\rule{0pt}{1ex}\rule{1ex}{0pt}}}}


\def\R{\mathbb{R}}
\def\N{\mathbb{N}}

\def\1{\mathbbm{1}}
\def\eps{\varepsilon}


\def\P{\mathbb{P}}

\def\E{\mathbb{E}}

\theoremstyle{plain}
\newtheorem{theorem}{Theorem}

\newenvironment{mythm}[1]
  {\innercustomthm}
  {\endinnercustomthm}

\newtheorem{proposition}{Proposition}
\newtheorem{lemma}{Lemma}
\newtheorem{corollary}{Corollary}
\theoremstyle{definition}
\newtheorem{definition}{Definition}
\newtheorem{example}{Example}
\newtheorem{assumption}{Assumption}
\theoremstyle{remark}
\newtheorem{remark}{Remark}

\begin{document}

\maketitle

\begin{abstract}
    Learning transferable data representations from abundant unlabeled data remains a central challenge in machine learning. Although numerous self-supervised learning methods have been proposed to address this challenge, a significant class of these approaches aligns the covariance or correlation matrix with the identity matrix. Despite impressive performance across various downstream tasks, these methods often suffer from biased sample risk, leading to substantial optimization shifts in mini-batch settings and complicating theoretical analysis. In this paper, we introduce a novel \underline{\bf Adv}ersarial \underline{\bf S}elf-\underline{\bf S}upervised Representation \underline{\bf L}earning (Adv-SSL) for unbiased transfer learning with no additional cost compared to its biased counterparts. Our approach not only outperforms the existing methods across multiple benchmark datasets but is also supported by comprehensive end-to-end theoretical guarantees. Our analysis reveals that the minimax optimization in Adv-SSL encourages representations to form well-separated clusters in the embedding space, provided there is sufficient upstream unlabeled data. As a result, our method achieves strong classification performance even with limited downstream labels, shedding new light on few-shot learning.
\end{abstract}


\section{Introduction}
\label{section: intro}

Collecting unlabeled data is considerably more convenient and cost-effective than gathering labeled data in real-world applications. Representations learned from such abundant data can be effectively transferred to various downstream tasks, thereby enhancing model performance or reducing the amount of labeled data required. Consequently, learning representations from abundant unlabeled data is both highly valuable and challenging.

Recently, self-supervised contrastive learning has emerged as a leading approach for learning representations from unlabeled data. This method aims to produce representations that are invariant to data augmentation. However, solely minimizing the distance between similar pairs can result in trivial solutions, known as model collapse. To address this issue, researchers have proposed various strategies, which can be broadly categorized into three types.

The first strategy treats augmented views from different images as negative pairs, ensuring that their representations remain dissimilar~\cite{invaspread, moco, simclr, mocov2, spectral, zhang2023tricontrastive}. However, these methods require large batch sizes to provide sufficient negative samples, resulting in substantial computational and memory demands that may be prohibitive in many applications. Additionally, by treating augmented views from different images as negative pairs, these approaches overlook semantic similarities between distinct images, potentially forcing apart representations of conceptually related content. As noted by \cite{chuang2020debiased, chuang2022robust}, this design can degrade representation performance.

The second strategy prevents model collapse through asymmetric network architectures~\cite{byol, simsiam, SwAV, DINO}. Although these methods eliminate the need for negative pairs, they exhibit significant sensitivity to architectural design choices, where minor modifications can lead to collapsed solutions~\cite{byol, simsiam}. Furthermore, these approaches introduce considerable challenges for interpretability.

The third line of work prevents model collapse by introducing a regularization term that aligns the covariance or correlation matrix with the identity matrix~\cite{barlowtwins, whitening, vicreg, DBLP:conf/nips/HaoChenWK022, bandara2023guarding, haochen2023theoretical, huang, zhang2024logDet}, thereby encouraging the separation of class centers. These methods do not require negative samples and also offer clearer theoretical interpretability. A typical regularization term employed in these approaches~\cite{barlowtwins, DBLP:conf/nips/HaoChenWK022, haochen2023theoretical, huang} is formulated as:
\begin{equation}\label{eq: R(f)}
    \mathcal{R}(f) = \Big\Vert\E_{\bm{x}_s \sim \P_s}\E_{\mathtt{x}_{s, 1},\mathtt{x}_{s,2} \in \mathcal{A}(\bm{x}_s)}\big\{f(\mathtt{x}_{s, 1})f(\mathtt{x}_{s, 2})^\top\big\} - I_{d^*}\Big\Vert_F^2,
\end{equation}
where $f:\mathbb{R}^{d}\rightarrow\mathbb{R}^{d^{*}}$ denotes a representation mapping from the original image space to the representation space, $\|\cdot\|_F$ is the Frobenius norm,  $\bm{x}_s$ represents an original image following a distribution $\mathbb{P}_{s}$, and $\mathcal{A}(\bm{x}_s)$ denotes the collection of all possible augmented views yielded from $\bm{x}_s$. The terms $\mathtt{x}_{s,1}, \mathtt{x}_{s,2} \in \mathcal{A}(\bm{x}_s)$ refer to two augmented views independently sampled from the uniform distribution on $\mathcal{A}(\bm{x}_s)$, while $I_{d^*}$ is a $d^{*}\times d^{*}$ identity matrix.

The population risk defined in equation~\eqref{eq: R(f)} is typically intractable. In~\cite{DBLP:conf/nips/HaoChenWK022, haochen2023theoretical, barlowtwins, huang}, the researchers estimate~\eqref{eq: R(f)} using the following sample-level regularization:
\begin{equation}\label{eq:Rhat}
    \widehat{\mathcal{R}}(f) =\Big\Vert\frac{1}{n_s}\sum_{i=1}^{n_s}f(\mathtt{x}_{s,1}^{(i)})f(\mathtt{x}^{(i)}_{s,2})^\top - I_{d^*}\Big\Vert_F^2,
\end{equation}
where $\{\bm{x}_s^{(i)}\}_{i\in[n_s]}$ denotes the original dataset, and $\widetilde{D}_s = \{(\mathtt{x}^{(i)}_{s, 1}, \mathtt{x}^{(i)}_{s, 2})\in \mathcal{A}(\bm{x}_s^{(i)})\}_{i \in [n_s]}$ represents the augmented dataset for learning representations. Unfortunately, it is evident that $\widehat{\mathcal{R}}(f)$ is a biased estimator of $\mathcal{R}(f)$, i.e., $\E_{\widetilde{D}_s}\{\widehat{\mathcal{R}}(f)\} \neq \mathcal{R}(f)$, due to the non-commutativity between the expectation and the Frobenius norm. This inherent bias gives rise to two significant challenges.

Firstly, the biased estimator~\eqref{eq:Rhat} used in~\cite{DBLP:conf/nips/HaoChenWK022, haochen2023theoretical, barlowtwins} introduces significant optimization deviations during training. Although theoretically $\E_{\widetilde{D}_s}\{\widehat{\mathcal{R}}(f)\}$ converges to $\mathcal{R}(f)$ as $n$ approaches infinity, practical memory constraints necessitate the use of mini-batch samples. As a result, the bias introduces an offset in the optimization direction at each iteration. Moreover, this offset can accumulate over successive training steps, since each gradient direction depends on the previous one. Ultimately, this compounding effect may cause the learned representation to diverge significantly from the true minimizer of the population risk, thereby impairing practical performance, as demonstrated in Table~\ref{tab: improvments}.

Secondly, this inherent bias presents significant obstacles to establishing end-to-end theoretical guarantees, which is crucial for addressing several fundamental questions:
\emph{How quickly does the downstream task error converge with respect to both the number of unlabeled samples in the source domain and the number of labeled samples in the target domain? What is the mechanism by which unlabeled data in self-supervised learning contributes to downstream task performance? Why do self-supervised learning methods remain effective even when downstream labeled data is limited?}

Although recent theoretical studies have significantly advanced the understanding of self-supervised learning, several issues remain unresolved. These studies can be broadly categorized into two main lines of research. The first line~\cite{garrido2022duality,DBLP:conf/nips/HaoChenWK022,negative,huang} focuses on analyzing the population risk of self-supervised learning methods. However, fundamental questions remain incompletely addressed due to the lack of discussion at the sample level. A comprehensive theoretical analysis requires bridging the gap between population-level~\eqref{eq: R(f)} and sample-level~\eqref{eq:Rhat} risks, which is challenging due to the bias inherent in methods such as~\cite{barlowtwins,DBLP:conf/nips/HaoChenWK022,haochen2023theoretical}.

A second line of theoretical research analyzes generalization error using Rademacher complexity~\cite{arora2019theoretical,spectral,ash2021investigating,lei2023generalization,haochen2023theoretical}, but frequently overlooks the approximation error. This omission is critical, as overall learning performance is determined by the total error, which is the sum of both generalization and approximation errors. Consequently, by focusing on only one component, these analyses may provide an incomplete picture of model performance.

In this study, we propose a novel self-supervised learning framework, \underline{\bf Adv}ersarial \underline{\bf S}elf-\underline{\bf S}upervised Representation \underline{\bf L}earning (Adv-SSL). Adv-SSL introduces an innovative iterative scheme that eliminates the bias between the population risk~\eqref{eq: R(f)} and its sample-level estimator~\eqref{eq:Rhat}, thereby addressing two critical challenges: training deviation and theoretical limitations caused by bias. Through comprehensive end-to-end analysis, we demonstrate how the amount of unlabeled data in the self-supervised pre-training phase enhances downstream task performance. Specifically, we show that representation learning with Adv-SSL enables downstream data to be effectively clustered by category in the representation space, provided that the upstream unlabeled sample size is sufficiently large. As a result, Adv-SSL achieves outstanding classification performance even with only a few downstream labeled samples, offering valuable insights for few-shot learning.

\subsection{Contributions}
\label{subsection: contributions}
Our main contributions can be summarized as follows:
\begin{itemize}
\item We introduce Adv-SSL, a novel unbiased self-supervised transfer learning method. This approach learns representations from unlabeled data by solving a min-max optimization problem that corrects the bias inherent in existing methods~\cite{DBLP:conf/nips/HaoChenWK022,barlowtwins}. Through extensive experiments, we demonstrate that Adv-SSL significantly outperforms previous biased sample risk (Table~\ref{tab: improvments}), as well as several existing self-supervised learning approaches (Table~\ref{tab: performance of Adv-SSL}).

\item We establish comprehensive end-to-end theoretical guarantees for Adv-SSL in transfer learning scenarios under misspecified setting (Theorem~\ref{theorem: sample theorem}). Our theoretical analysis shows that representations learned by Adv-SSL, through minimax optimization, enable downstream data to be clustered by category in the representation space, provided that the upstream unlabeled sample size is sufficiently large. Consequently, Adv-SSL achieves outstanding classification performance even with only a few downstream labeled samples, offering valuable insights for few-shot learning.
\end{itemize}

\subsection{Preliminaries}\label{subsection: preliminaries}
Given an integer $n \in \N$, we use $[n]$ to represent the integer set $\{1, 2, \cdots, n\}$. For any vector $\bm{x}$, we denote $\|\bm{x}\|_2$ and $\|\bm{x}\|_\infty$ as the $2$-norm and $\infty$-norm of $\bm{x}$ respectively. Let $A,B \in\mathbb{R}^{d_1\times d_2}$ be two matrices, we define their Frobenius inner product by $\InnerProduct{A}{B}_F=\mathrm{tr}(A^\top B)$. Moreover, we denote $\norm{A}_F$ as the Frobenius norm of $A$, which is the norm induced by Frobenius inner product, and $\|A\|_\infty= \sup_{\|\bm{x}\|_\infty \leq 1}\|A\bm{x}\|_\infty$ as the $\infty$-norm of $A$, which is the maximum $1$-norm of the rows of $A$. For a vector-valued map $f$, we adopt $\mathrm{dom}(f)$ to represent its domain. Further, given $0 \leq a_1 \leq a_2$, we use $a_1 \leq\|f\|_2 \leq a_2$ to denote $a_1 \leq \inf_{\bm{x}\in\mathrm{dom}(f)}\|f(\bm{x})\|_2\leq \sup_{\bm{x} \in \mathrm{dom}(f)}\|f(\bm{x})\|_2 \leq a_2$. Besides that, the Lipschitz norm of $f$ is given by $\|f\|_{\mathrm{Lip}}=\sup_{\bm{x}\neq \bm{y}}\frac{\|f(\bm{x})-f(\bm{y})\|_2}{\|\bm{x}-\bm{y}\|_2}$. Additionally, we use $f \in \mathrm{Lip}(L)$ to represent $\|f\|_{\mathrm{Lip}} \leq L$. Finally, if $X$ and $Y$ are two quantities, for ease of presentation, we employ $X \lesssim Y$ or $Y \gtrsim X$ to indicate the statement that $X \leq CY$ for some $C > 0$ and denote $X \asymp Y$ when $X \lesssim Y \lesssim X$ throughout this paper.

We subsequently adopt the following neural networks as the hypothesis space.
\begin{definition}[ReLU neural networks]
\label{def: reLU neural networks}
Let $d_1, d_2\in\mathbb{N}$ and $L,N_1,\ldots,N_L\in\N$. A ReLU neural network with depth $L$ and width $W:=\max\{N_1,\ldots,N_L\}$ has the following form:
\begin{equation}\tag{NN}\label{eq:nn}
f_{\bm{\theta}}(\bm{x})=A_L\sigma(A_{L-1}\sigma(\cdots\sigma(A_0\bm{x}+b_0))+b_{L-1}),
\end{equation}
where $A_i\in\R^{N_{i + 1}\times N_i}$, $b_{i}\in\R^{N_{i+1}}$, and $\sigma(\cdot)$ is the element-wise ReLU activation function. Denote by $\bm{\theta}:=((A_0,b_{0}), \ldots,(A_{L-1},b_{L-1}),A_L)$ the collection of parameters of the neural network~\eqref{eq:nn}. Furthermore, define $\kappa(\bm{\theta})=\|A_L\|_\infty\prod_{l= 0}^{L-1}\max\{\|(A_{l},b_{l})\|_{\infty}, 1\}$. For $\mathcal{K}>0$, it follows from Appendix~\ref{subsection: Lip < K} that $\|f_{\bm{\theta}}\|_{\mathrm{Lip}}\leq\mathcal{K}$, provided that $\kappa(\bm{\theta})\leq\mathcal{K}$. 

Let $\mathcal{K}>0$ and $0<B_{1}<B_{2}$, a ReLU network class $\mathcal{NN}_{d_1, d_2}(W, L, \mathcal{K}, B_1, B_2)$ is defined as 
\begin{align*}
\Big\{
f_{\bm{\theta}}~\text{has the form}~\eqref{eq:nn}:
N_{0}=d_{1},~N_{L+1}=d_{2},~\kappa(\bm{\theta}) \leq \mathcal{K},~B_1 \leq \Vert f_{\bm{\theta}}\Vert_2 \leq B_2
\Big\}.
\end{align*}
\end{definition}
Finally, for two given measures $\mu$ and $\nu$, we define the $1$-Wasserstein distance as $\mathcal{W}(\mu, \nu):= \max_{g \in \mathrm{Lip}(1)}\E_{X \sim \mu}\{g(X)\} - \E_{Y \sim \nu}\{g(Y)\}$.

\subsection{Organization}
The rest of this paper is structured as follows: Section~\ref{section: method} introduces the core concept of Adv-SSL and presents our alternating optimization algorithm. In Section~\ref{section: theory}, we develop a comprehensive end-to-end theoretical guarantee for Adv-SSL with proof details in Section~\ref{appendix: deferred proof}. Section~\ref{section: performance enhancement} demonstrates Adv-SSL's effectiveness through extensive experimental evaluations across diverse datasets and metrics. Section~\ref{section: conclusion} summarizes the conclusions of this work.

The appendices provide a review of existing studies (Appendix~\ref{subsection: related work}), a notation summary (Appendix~\ref{section:notation}), experimental details (Appendix~\ref{subsection: experimental details}), additional numerical experiments (Appendix~\ref{section:add:experiments}), discussions on assumptions (Appendix~\ref{section:assumption:discussion}), and  complete theoretical proofs (Appendices~\ref{appendix: deferred proof} to~\ref{section: auxiliary lemmas}).

\section{Adversarial Self-Supervised Representation Learning}\label{section: method}

\subsection{Notations}
\label{subsection: notations}
Throughout this paper, we use $d$ and $d^*$ to represent the dimensions of the original image space and the representation space, respectively. We use the letter $\bm{x}_s$ and its variants to denote image instances from the source domain $\mathcal{X}_s \subseteq [0,1]^d$ with the source distribution $\P_s$. Correspondingly, we use the letter $\bm{x}_t$ and its variants for image instances from the target domain $\mathcal{X}_t \subseteq [0,1]^d$, while $\P_t$ represents the measure regarding the entry $(\bm{x}_t, y)$ with label $y \in [K]$. In this context, we can independently and identically sample a total of $n_s$ source image instances from $\P_s$ and $n_t$ labeled downstream samples from $\P_t$, and refer to them as $D_s = \{\bm{x}_s^{(i)}\}_{i\in[n_s]}$ and $D_t = \{(\bm{x}_t^{(i)}, y_i)\}_{i \in [n_t]}$, respectively.

Since the primary objective of contrastive learning is to learn a representation that is invariant to different augmentations, data augmentation plays a crucial role in this field. A augmentation $A:\mathbb{R}^{d}\rightarrow\mathbb{R}^{d}$ is essentially a predefined transformation applied to original images. Common augmentations include the composition of random transformations, such as RandomCrop, HorizontalFlip, and Color Distortion~\cite{simclr}. We refer to $\mathcal{A} = \{A_i(\cdot)\}_{i \in [m]}$ as the collection of used data augmentations, where $m$ is the total number of data augmentations, which is finite since only a finite number of augmentations will be used in practice. Based on it, we can construct an augmented dataset $\widetilde{D}_s=\{\tilde{\mathtt{x}}_s^{(i)}\}_{i\in[n_s]}$, where $\tilde{\mathtt{x}}_s^{(i)}=(\mathtt{x}_{s, 1}^{(i)},\mathtt{x}_{s, 2}^{(i)})=(A_{i,1}(\bm{x}_s^{(i)}),A_{i,2}(\bm{x}_s^{(i)}))$,
and $A_{i,1}$ and $A_{i,2}$ are independently drawn from the uniform distribution on $\mathcal{A}$. The Appendix~\ref{section:notation} summarizes the notations used throughout this work for easy reference and cross-checking.

\subsection{Adversarial self-supervised learning}
\label{subsection: mini-max}
The regularization term $\mathcal{R}(f)$ defined in~\eqref{eq: R(f)} has been adopted in various studies~\cite{DBLP:conf/nips/HaoChenWK022, haochen2023theoretical, huang} to prevent model collapse. Specifically, we aim to identify an encoder that is as close as possible to $f^*$:
\begin{gather*}\label{eq: f*}
 f^* \in \argmin_{f: B_1 \leq \norm{f}_2 \leq B_2}\mathcal{L}(f)= \mathcal{L}_{\mathrm{align}}(f) + \lambda \mathcal{R}(f), \\
 \mathcal{L}_{\mathrm{align}}(f) = \E_{\bm{x}_s\sim\P_s}\E_{\mathtt{x}_{s, 1}, \mathtt{x}_{s, 2} \in \mathcal{A}(\bm{x}_s)}\big\{\big\Vert f(\mathtt{x}_{s,1}) - f(\mathtt{x}_{s,2})\big\Vert_2^2\big\}.
\end{gather*}
Intuitively, imposing the constraint $B_1 \leq \norm{f}_2 \leq B_2$ does not impair encoder performance, as the key aspect of data representation is the ability to distinguish between features rather than the scale of their values. As we will demonstrate, this constraint actually can actually facilitate the theoretical analysis of Adv-SSL.

Since the expectation in this regularization term is challenging to compute practically, it is necessary to approximate it using an empirical average based on the collected samples. One of the most commonly-used empirical versions is $\widehat{\mathcal{R}}(f)$ defined in~\eqref{eq:Rhat}~\cite{DBLP:conf/nips/HaoChenWK022, haochen2023theoretical}. However, as stated in Section~\ref{section: intro}, $\widehat{\mathcal{R}}(f)$ is a biased estimation of $\mathcal{R}(f)$, i.e., $\E_{\widetilde{D}_s}\{\widehat{\mathcal{R}}(f)\} \neq \mathcal{R}(f)$, which introduces optimization deviation from $f^*$ and hinders establishing a theoretical understanding of the empirical risk minimizer. 

To address these issues, we propose a novel \emph{unbiased} sample-level estimator for the population risk~\eqref{eq: R(f)}. A key observation that motivates Adv-SSL is that 
we can  rewrite  $\mathcal{R}(f) $ as
\begin{eqnarray}
\label{eq: R(f) = supR(f, G)}
\mathcal{R}(f) = \sup_{G \in \mathcal{G}(f)}\mathcal{R}(f,G),
\end{eqnarray}
where $G \in \R^{d^*\times d^*}$ is a matrix variable, and 
\begin{gather*}
    \mathcal{R}(f,G) = \InnerProduct{\E_{\bm{x}\sim\P_s}\E_{\mathtt{x}_{s,1}, \mathtt{x}_{s,2} \in \mathcal{A}(\bm{x}_s)}\big\{f(\mathtt{x}_{s,1})f(\mathtt{x}_{s,2})^\top\big\}- I_{d^*}}{G}_F, \\
    \mathcal{G}(f) = \big\{G \in \R^{d^* \times d^*}: \big\Vert G\big\Vert_F \leq \sqrt{\mathcal{R}(f)}\big\}.
\end{gather*}
The equation~\eqref{eq: R(f) = supR(f, G)} holds because of 
the fact that $\InnerProduct{A}{B}_F \leq \norm{A}_F\norm{B}_F$ for any matrices $A, B$ of same dimension, with equality holding if and only if $A = B$. Correspondingly, its sample-level counterpart defined in~\eqref{eq:Rhat} can be rewritten as
\begin{equation*}
\widehat{\mathcal{R}}(f) = \sup_{G \in \widehat{\mathcal{G}}(f)}\widehat{\mathcal{R}}(f,G),
\end{equation*}
where
\begin{gather*}
    \widehat{\mathcal{R}}(f, G) = \InnerProduct{\frac{1}{n_s}\sum_{i=1}^{n_s}f(\mathtt{x}_{s,1}^{(i)})f(\mathtt{x}_{s,2}^{(i)})^\top - I_{d^*}}{G}_F, \\
    \widehat{\mathcal{G}}(f) = \Big\{G \in \R^{d^* \times d^*}: \big\Vert G\big\Vert_F \leq \sqrt{\widehat{\mathcal{R}}(f)}\Big\}.
\end{gather*}
It can be shown that $\widehat{\mathcal{R}}(\cdot,\cdot)$ is an unbiased estimator of the population risk $\mathcal{R}(\cdot,\cdot)$, that is, for each fixed $f$ and auxiliary variable $G$,
\begin{equation*}
\mathcal{R}(f,G) = \E_{\widetilde{D}_s}\big\{\widehat{\mathcal{R}}(f,G)\big\}.
\end{equation*}
Hence, the equivalent transformation~\eqref{eq: R(f) = supR(f, G)}  help us avoid the issue introduced by bias. Specifically, with the equation~\eqref{eq: R(f) = supR(f, G)}  and its empirical version,  we learn a representation through Adv-SSL at the sample level, which can be formulated as a mini-max problem as follows:
\begin{gather*}
\min_{f \in \mathcal{F}}\max_{G \in \widehat{\mathcal{G}}(f)}\widehat{\mathcal{L}}(f,G)= \widehat{\mathcal{L}}_{\mathrm{align}}(f)+ \lambda  \widehat{\mathcal{R}}(f, G),  \\
\widehat{\mathcal{L}}_{\mathrm{align}}(f) = \frac{1}{n_s}\sum_{i=1}^{n_s}\big\Vert f(\mathtt{x}_{s,1}^{(i)}) - f(\mathtt{x}_{s,2}^{(i)})\big\Vert^2_2, \nonumber
\end{gather*}
where $\mathcal{F}$ is defined as $\mathcal{NN}(W, L, \mathcal{K}, B_1, B_2)$. We will specify the appropriate parameters $(W, L, \mathcal{K}, B_1, B_2)$ to satisfy the theoretical requirements in Section~\ref{section: theory}. The term $\widehat{\mathcal{L}}_{\mathrm{align}}(f)$ embodies the core idea of contrastive learning: learning a representation that is invariant to different augmentations. Additionally, $\lambda > 0$ serves as the regularization hyperparameter. 

This mini-max problem naturally suggests solving it via an alternating optimization algorithm,in which $G$ is held fixed during the optimization of the encoder $f$, and $f$ is held fixed during the optimization of $G$. This procedure is detailed in Algorithm~\ref{algorithm: alternative optimization algorithm}.

\begin{algorithm}[htbp]
\caption{Alternative Optimization Algorithm}
\begin{algorithmic}[1]
\Require Unlabeled dataset $D_s = \{\bm{x}_s^{(i)}\}_{i\in[n_s]}$, initial encoder parameter $\bm{\theta}_0$, iteration horizon $T$, mini-batch size $N$, learning rate $\eta$.
\State Construct an augmented dataset $\widetilde{D}_s=\{\tilde{\mathtt{x}}_s^{(i)}\}_{i \in [n_s]}$.
\For{$\tau\in \{0\}\cup[T-1]$}
    \State Sample a mini-batch $\mathcal{B}_\tau = \{\tilde{\mathtt{x}}_s^{(n^{(\tau)}_i)}\}_{i \in [N]} \subseteq D_s$ of size $N$, where $n_i^{(\tau)}$ represents the index of the $i$-th sample in the mini-batch $\mathcal{B}_\tau$ within $D_s$.
    \If{$\tau = 0$}
        \State $G_0 = \sum_{i=1}^Nf_{\bm{\theta}_0}(\mathtt{x}_{s,1}^{(n^{(\tau)}_i)})f_{\bm{\theta}_0}(\mathtt{x}_{s,2}^{(n^{(\tau)}_i)})^\top - I_{d^*}$.
        \State Detach: $G_0 \leftarrow  G_0.\mathrm{detach()}$.
    \EndIf
    \State Update encoder $\bm{\theta}_{\tau+1} = \bm{\theta}_\tau - \eta \Delta_{\bm{\theta}}$, where $\Delta_{\bm{\theta}}$ is given by
    \begin{align*}
        \Delta_{\bm{\theta}} =  \nabla_{\bm{\theta}}\frac{1}{N}\sum_{i=1}^{N}\Big\Vert f_{\bm{\theta}}(\mathtt{x}_{s,1}^{(n^{(\tau)}_i)}) - f_{\bm{\theta}}(\mathtt{x}_{s,2}^{(n^{(\tau)}_i)})\Big\Vert^2_2 + \InnerProduct{\nabla_{\bm{\theta}}\frac{1}{N}\sum\limits_{i=1}^Nf_{\bm{\theta}}(\mathtt{x}_{s,1}^{(n^{(\tau)}_i)})f_{\bm{\theta}}(\mathtt{x}_{s,2}^{(n^{(\tau)}_i)})^\top - I_{d^*}}{ G_{\tau}}_F.
    \end{align*}
    \State $G_{\tau+1} = \sum_{i=1}^Nf_{\bm{\theta}_{\tau+1}}(\mathtt{x}_{s,1}^{(n^{(\tau)}_i)})f_{\bm{\theta}_{\tau+1}}(\mathtt{x}_{s,2}^{(n^{(\tau)}_i)})^\top - I_{d^*}$.
    \State Detach: $G_{\tau+1} \leftarrow G_{\tau+1}.\mathrm{detach()}$.
\EndFor
\State\Return The learned encoder $f_{\bm{\theta}_{T}}$.
\end{algorithmic}
\label{algorithm: alternative optimization algorithm}
\end{algorithm}

\begin{remark}[Detach technique]
It is important to note that $G_{\tau}$ in Algorithm~\ref{algorithm: alternative optimization algorithm} has been detached from the computational graph when updating the encoder parameters $\bm{\theta}$, which implies that the gradient regarding $\bm{\theta}$ is given by the Step 8 of Algorithm~\ref{algorithm: alternative optimization algorithm}, rather than $\nabla_{\bm{\theta}}\big\Vert\frac{1}{N}\sum_{i=1}^{N}f_{\bm{\theta}}(\mathtt{x}_{s,1}^{(n^{(\tau)}_i)})f_{\bm{\theta}}(\mathtt{x}_{s,2}^{(n^{(\tau)}_i)})^{\top} - I_{d^*}\big\Vert_F^2$, which is the mini-batch gradient of $\widehat{\mathcal{R}}(f)$. In this regard, \textit{such a mini-max iteration format will yield a distinctly different encoder in the mini-batch scenario compared to previous studies}~\cite{barlowtwins,DBLP:conf/nips/HaoChenWK022}. 
\end{remark}

The natural question that arises is \textit{whether this adversarial iteration format will lead to better performance?}

To answer this question, we compare Adv-SSL against two biased self-supervised learning methods: Barlow Twins~\cite{barlowtwins} and the approach proposed by~\cite{DBLP:conf/nips/HaoChenWK022}, across multiple benchmark datasets. The experimental results, summarized in Table~\ref{tab: improvments}, demonstrate that Adv-SSL significantly improves downstream classification accuracy compared to both baseline methods, which are implemented using our repository, with a total training of 1000 epochs and a representation dimension of 512. It worth mentioning that our results are close to those reported in the well-known Python package \href{https://docs.lightly.ai/self-supervised-learning/getting_started/benchmarks.html}{LightlySSL}, suggesting our results align with expectations.

\begin{table}[ht]
    \centering
    \begin{tabular}{@{}lcccccc@{}}
    \toprule
    Method & \multicolumn{2}{c}{CIFAR-10} & \multicolumn{2}{c}{CIFAR-100} & \multicolumn{2}{c}{Tiny ImageNet} \\
    & Linear & $k$-nn & Linear & $k$-nn & Linear & $k$-nn \\
    \midrule
    Barlow Twins\cite{barlowtwins} & 87.32 & 84.74 & 55.88 & 46.41 & 41.52 & 27.00 \\
    Beyond Separability\cite{DBLP:conf/nips/HaoChenWK022}  & 86.95 & 82.04 & 56.48 & 48.62 & 41.04 & 31.58 \\
    \midrule
    Adv-SSL & \textbf{93.01} & \textbf{90.97} & \textbf{68.94} & \textbf{58.50} & \textbf{50.21} & \textbf{37.40} \\
    \bottomrule
    \end{tabular}
    \caption{Top-1 Accuracy Comparison for Different SSL Methods.}
    \label{tab: improvments}
\end{table}
The experimental details can be found in Appendix~\ref{subsection: experimental details}. In addition, more ablation studies are deferred to Appendices~\ref{subsection: ablation study on the regularization parameter},~\ref{subsection: ablation study on the data augmentations},~\ref{subsection: ablation study on the alignment term} and~\ref{subsection: transfer learning}, which respectively involve the choice of the regularization parameter $\lambda$, the impact of data augmentations, the influence of the alignment term and effectiveness in terms of transfer learning.

Furthermore, it naturally raises a question of \textit{whether the minimax iteration in Adv-SSL incurs any additional training cost?} Intuitively, the extra cost from adversarial updates is negligible, as the inner maximization problem admits an analytical solution, as shown in Step 9 of Algorithm~\ref{algorithm: alternative optimization algorithm}. To further support this view, we provide a detailed comparison of the timing and memory costs in Table~\ref{tab: time and memory}.

\begin{table}[ht]
    \centering
    \begin{tabular}{@{}lcccccc@{}}
    \hline
    Method & \multicolumn{2}{c}{CIFAR-10} & \multicolumn{2}{c}{CIFAR-100} & \multicolumn{2}{c}{Tiny ImageNet} \\
    & Memory & Time & Memory & Time & Memory & Time \\
    \hline
    Barlow Twins & 5598 MiB & 68s & 5598 MiB & 74s & 8307 MiB & 386s \\
    Adv-SSL & \textbf{5585 MiB} & \textbf{51s} & \textbf{5585 MiB} & \textbf{52s} & \textbf{8282 MiB} & \textbf{352s} \\
    \hline
    \end{tabular}
     \caption{Comparison of Training Memory and Time Costs Between Barlow Twins and Adv-SSL.}
    \label{tab: time and memory}
\end{table}
All experiments were conducted on a single Tesla V100 GPU. The time mentioned refers to the training time spent per epoch. As we seen, this observation aligns well with our intuition.

\section{End-to-End Theoretical Guarantee}\label{section: theory}
\subsection{Problem formulation} \label{subsection: problem formulation}
We first define $\hat{f}_{n_s}$ as the empirical risk minimizer for Adv-SSL as~\eqref{eq: f_hat} and hope to establish a rigorous theoretical guarantee for that.
\begin{gather}\label{eq: f_hat}
\hat{f}_{n_s} \in \arg\min_{f \in \mathcal{F}}\max_{G \in \widehat{\mathcal{G}}(f)}\widehat{\mathcal{L}}(f,G)= \widehat{\mathcal{L}}_{\mathrm{align}}(f)+ \lambda  \widehat{\mathcal{R}}(f, G)
\end{gather}
Moreover, following the similar process to that used for obtaining $\widetilde{D}_s$, we can construct the downstream augmented dataset $\widetilde{D}_t=\{(\tilde{\mathtt{x}}_{t}^{(i)}, y_i)\}_{i \in [n_t]}$, where $\tilde{\mathtt{x}}_{t}^{(i)} = (\mathtt{x}_{t,1}^{(i)},\mathtt{x}_{t,2}^{(i)}) \in \R^{2d}$ with $\mathtt{x}_{t,1}^{(i)}=A_{i,1}(\bm{x}_t^{(i)}),~\mathtt{x}_{t,2}^{(i)}=A_{i,2}(\bm{x}_t^{(i)})$. Therein, $A_{i,1}$, $A_{i,2}$ are independently and identically distributed samples drawn from the uniform distribution defined on $\mathcal{A}$. In this context, for a testing sample $\bm{x}$, we construct the following linear probe as a classifier:
\begin{equation}\label{eq: linear probe}
Q_{\hat{f}_{n_s}}(\bm{x})=\mathop{\arg\max}_{k \in [K]}\big(\widehat{W}\hat{f}_{n_s}(\bm{x})\big)_k,
\end{equation}
where the $k$-th row of $\widehat{W}$ is given by $\widehat{\mu}_t(k)=\frac{1}{2n_t(k)}\sum_{i=1}^{n_t}\big(\hat{f}_{n_s}(\mathtt{x}_{t,1}^{(i)}) + \hat{f}_{n_s}(\mathtt{x}_{t,2}^{(i)})\big)\1\big\{y_i = k\big\}$, therein, $n_t(k) = \sum_{i=1}^{n_t}\1\{y_i = k\}$. Here the Adv-SSL estimator $\hat{f}_{n_{s}}$ is defined as~\eqref{eq: f_hat}. The classifier defined in~\eqref{eq: linear probe} indicates that by calculating the average representations for each class, we build a template for each downstream class individually. Whenever a new sample needs to be classified, it is assigned to the category of the template that it most closely resembles. Furthermore, we use the following misclassification rate to evaluate the quality of $\hat{f}_{n_s}$.
\begin{equation}\label{eq: misclassification rate}
\mathrm{Err}(Q_{\hat{f}_{n_s}}) = \P_t\big\{Q_{\hat{f}_{n_s}}(\bm{x}_t) \neq y\big\},
\end{equation} 
Appendix~\ref{section:notation} summarizes the notations used throughout this paper for easy cross-checking.

\subsection{Theoretical limitation induced by bias} \label{subsection: limitations of unbiasedness}
In this section, we aim to elucidate the limitations imposed by bias from theoretical perspective. We first assert that $\E_{\widetilde{D}_s, \widetilde{D}_t}\big\{\mathrm{Err}(Q_{\hat{f}_{n_s}})\big\}\lesssim \sqrt{\E_{\widetilde{D}_s}\big\{\mathcal{L}(\hat{f}_{n_s})\big\}}$ under specific conditions, the details of which can be found in~\ref{proof of primary theorem}. Consequently, analyzing the sample complexity of $\E_{\widetilde{D}_s}\big\{\mathcal{L}(\hat{f}_{n_s})\big\}$ is essential for establishing an end-to-end theoretical guarantee for $\hat{f}_{n_s}$. However, the bias between $\mathcal{L}(f)$ and its empirical counterpart $\widehat{\mathcal{L}}(f)$ presents a significant challenge for this analysis. 

In fact, in the field of learning theory, the condition $\E_{\widetilde{D}_s}\big\{\widehat{\mathcal{L}}(f)\big\} = \mathcal{L}(f)$ is quite important to explore the sample complexity of $\E_{\widetilde{D}_s}\big\{\mathcal{L}(\hat{f}_{n_s})\big\}$. Specifically, let $\bar{f}$ satisfy $\mathcal{L}(\bar{f}) - \mathcal{L}(f^*) = \inf_{f \in \mathcal{F}}\{\mathcal{L}(f) - \mathcal{L}(f^*)\}$, where we recall $\widehat{\mathcal{R}}(f)$ is given by~\eqref{eq:Rhat}, then
\begin{align*}
    \mathcal{L}(\hat{f}_{n_{s}})
    &=\big\{\mathcal{L}(\hat{f}_{n_{s}})-\widehat{\mathcal{L}}(\hat{f}_{n_{s}})\big\}+\big\{\widehat{\mathcal{L}}(\hat{f}_{n_{s}}) -\mathcal{L}(\bar{f})\big\}+\big\{\mathcal{L}(\bar{f})-\mathcal{L}(f^{*})\big\}+\mathcal{L}(f^{*}) \\
    &\leq\big\{\mathcal{L}(\hat{f}_{n_{s}})-\widehat{\mathcal{L}}(\hat{f}_{n_{s}})\big\}+\big\{\widehat{\mathcal{L}}(\bar{f})-\mathcal{L}(\bar{f})\big\}+\big\{\mathcal{L}(\bar{f})-\mathcal{L}(f^{*})\big\}+\mathcal{L}(f^{*}) \\
    &\leq 2\sup_{f\in\mathcal{F}}\abs{\mathcal{L}(f)-\widehat{\mathcal{L}}(f)}+\inf_{f \in \mathcal{F}}\big\{\mathcal{L}(f)-\mathcal{L}(f^{*})\big\}+\mathcal{L}(f^{*}),
\end{align*}
where the first inequality follows from the fact that $\hat{f}_{n_{s}}$ minimizes the empirical risk $\widehat{\mathcal{L}}(f)$ over $\mathcal{F}$. Taking the expectation with respect to $\widetilde{D}_s$ on both sides yields $\E_{\widetilde{D}_s}\big\{\mathcal{L}(\hat{f}_{n_{s}})\big\} \leq \mathcal{L}(f^*) +  2\E_{\widetilde{D}_s}\big\{\sup_{f \in \mathcal{F}}\big\vert\mathcal{L}(f) - \widehat{\mathcal{L}}(f)\big\vert\big\} + \inf_{f \in \mathcal{F}}\{\mathcal{L}(f) - \mathcal{L}(f^*)\}$. Standard techniques from empirical process \cite{gine2016mathematical} can be used to estimate the second term in unbiased settings. However, the presence of bias complicates their direct application. In contrast, leveraging the unbiased nature of Adv-SSL, we develop a novel error decomposition as follows:
\begin{align}\label{eq: risk decomposition}
    \E_{\widetilde{D}_s}\Big\{\mathcal{L}(\hat{f}_{n_s})\Big\} &\lesssim  \mathcal{L}(f^*) + \E_{\widetilde{D}_s}\Big\{\sup_{f \in \mathcal{F}, G \in \widehat{\mathcal{G}}(f)}\abs{\mathcal{L}(f, G) - \widehat{\mathcal{L}}(f, G)}\Big\}+ \inf_{f \in \mathcal{F}}\Big\{\mathcal{L}(f) - \mathcal{L}(f^*)\Big\} \nonumber\\
    &\quad+ \E_{\widetilde{D}_s}\Big[\sup_{f \in \mathcal{F}}\Big\{G^*(f) - \widehat{G}(f)\Big\}\Big],
\end{align}
where $G^*(f) = \E_{\bm{x}_s\sim\P_s}\E_{\mathtt{x}_{s,1},\mathtt{x}_{s,2} \in \mathcal{A}(\bm{x}_s)}\big\{f(\mathtt{x}_{s,1})f(\mathtt{x}_{s,2})^\top\big\} - I_{d^*} \in \R^{d^*}$ and its sample counterpart $\widehat{G}(f) = \frac{1}{n_s}\sum_{i=1}^{n_s}f(\bm{x}^{(i)}_1)f(\bm{x}^{(i)}_2)^\top - I_{d^*}$. We defer the corresponding proof to Section~\ref{subsection: risk decomposition}. This decomposition allows us to directly apply empirical process methods to handle the second term on the right-hand side, as presented in Section~\ref{subsection: bound Esta}. Regarding the other terms, the first term vanishes under Assumption~\ref{assumption: distribution partition}, as demonstrated in Section~\ref{subsection: deal with L(f*)}. The third term, known as the approximation error, quantifies the error introduced by using $\mathcal{F}$ to approximate $f^*$; this can be controlled using existing results from \cite{Jiao}, as shown in Section~\ref{subsection: bound EF}. The last term can be reformulated as a standard problem concerning the rate of convergence of the empirical mean to the population mean, as discussed in Section~\ref{subsection: bound EG}. By combining these results, we leverage the adversarial formulation of Adv-SSL to successfully establish a end-to-end theoretical guarantee for $\hat{f}_{n_s}$.

\subsection{Assumptions} \label{subsection: assumptions}
We begin with defining the H{\"o}lder class, which plays a key role in bounding the approximation error.

\begin{definition}\label{def: Hölder class}
Let $d \in \N$ and $\alpha = r + \beta > 0$, where $r \in \N_0$ and $\beta \in (0,1]$. We assert $f: \R^d \to \R$ belongs to the Hölder class $\mathcal{H}^\alpha(\R^d)$ if and only if
\begin{align*}
\abs{\partial^{\bm{s}}f(\bm{x})} \leq 1  \text{ and }\max_{\norm{\bm{s}}_1 = r}\sup_{\bm{x} \neq \bm{y}}\frac{\partial^{\bm{s}}f(\bm{x}) - \partial^{\bm{s}}f(\bm{y})}{\norm{\bm{x} - \bm{y}}^\beta_\infty} \leq 1,
\end{align*}
where for a multi-index $\bm{s} = (s_1, \ldots, s_d) \in \N^d$ and $f: \R^d \to \R$, the symbol $\partial^{\bm{s}}f$ denotes the partial differential operator $\partial^{\bm{s}} = \frac{\partial^{s_1}}{\partial x_1^{s_1}}\frac{\partial^{s_2}}{\partial x_1^{s_2}}\cdots\frac{\partial^{s_d}}{\partial x_d^{s_d}}$. Furthermore, we define $\mathcal{H}^\alpha := \{f:[0,1]^d \rightarrow \R, f \in \mathcal{H}^\alpha(\R^d)\}$ as the restriction of $\mathcal{H}^\alpha(\R^d)$ to $[0,1]^d$.
\end{definition}
In this context, we make following assumption on $f^*$:
\begin{assumption}
\label{assumption: f*∈Hölder}
There exists $\alpha = r + \beta$ with $r \in \N$ and $\beta \in (0,1]$ s.t $f_i^* \in \mathcal{H}^\alpha$ for each $i\in[d^*]$.
\end{assumption}
Assumption~\ref{assumption: f*∈Hölder} is a standard assumption in the nonparametric statistics \cite{tsybakov2008introduction, hieber2020nonparametric}.

As for the term $\mathcal{L}(f^*)$ in eq~\eqref{eq: risk decomposition}, we need following Assumption~\ref{assumption: distribution partition} to justify $\mathcal{L}(f^*)=0$. 
\begin{assumption}
\label{assumption: distribution partition}
Assume there exists a measurable partition $\big\{\mathcal{P}_1, \ldots, \mathcal{P}_{d^*}\big\}$ of $\mathcal{X}_s$ satisfying $\P_s(\mathcal{P}_i) \in [\frac{1}{B_2^2}, \frac{1}{B_1^2}]$ for each $i\in[d^*]$.
\end{assumption}
Assumption~\ref{assumption: distribution partition} requires that the source data distribution is not overly singular. In particular, all common continuous distributions defined on the Borel algebra satisfy this condition, as the measure of any single point is zero. Further details regarding the vanishing of $\mathcal{L}(f^*)$ are provided in Section~\ref{subsection: deal with L(f*)}.

Additionally, we introduce two assumptions regarding the data augmentations. 
\begin{assumption}
\label{assumption: lip augmentation}
Assume any data augmentation $A_i\in \mathcal{A}$ is $M$-Lipschitz continuous, that is, $\norm{A_i(\bm{x}) - A_i(\bm{y})}_2 \leq M\big\Vert \bm{x} - \bm{y}\big\Vert_2$ for any $\bm{x},\bm{y}\in[0,1]^d$.
\end{assumption}
The most commonly used augmentations, including cropping, horizontal mirroring, color jittering, grayscale conversion, and Gaussian blurring, actually all satisfy this assumption. See Section~\ref{subsection: discussion on lip augmentation}.

In addition to the Lipschitz property of data augmentation, we adopt Definition~\ref{def: characterization of augmentaion} to mathematically quantify the quality of data augmentations. To present it, we define $C_t(k)$ as a set such that $\bm{x}_t \in C_t(k)$ if and only if $\bm{x}_t$ belongs to the $k$-th class. Correspondingly, similar to \cite{huang}, we assume that any upstream instance $\bm{x}_s$ can be categorized into one or more latent classes $\{C_s(k)\}_{k \in [K]}$.
\begin{definition} \label{def: characterization of augmentaion}
A data augmentations $\mathcal{A}$ is referred to as a $(\sigma_s, \sigma_t, \delta_s, \delta_t)$-augmentations if for each $k \in [K]$, there exists two subsets $\widetilde{C}_s(k) \subseteq C_s(k)$ and $\widetilde{C}_t(k) \subseteq C_t(k)$ such that (\romannumeral1) $\P_s\big(\bm{x}_s \in \widetilde{C}_s(k)\big) \geq \sigma_s \P_s\big(\bm{x}_s \in C_s(k)\big)$, (\romannumeral2) $\sup_{\bm{x}_{s,1},\bm{x}_{s,2} \in \widetilde{C}_s(k)}\min_{\mathtt{x}_{s,1} \in \mathcal{A}(\bm{x}_{s,1}), \mathtt{x}_{s,2} \in \mathcal{A}(\bm{x}_{s,2})}\big\Vert\mathtt{x}_{s,1} - \mathtt{x}_{s,2}\big\Vert_2\leq \delta_s$; (\romannumeral3) $\P_t\big(\bm{x}_t \in \widetilde{C}_t(k)\big) \geq \sigma_t \P_t\big(\bm{x}_t \in C_t(k)\big)$, (\romannumeral4) $\sup_{\bm{x}_{t,1},\bm{x}_{t,2} \in \widetilde{C}_t(k)}\min_{\mathtt{x}_{t,1} \in \mathcal{A}(\bm{x}_{t,1}), \mathtt{x}_{t,2} \in \mathcal{A}(\bm{x}_{t,2})}\big\Vert\mathtt{x}_{t,1} - \mathtt{x}_{t,2}\big\Vert_2\leq \delta_t$ and (\romannumeral5) $\P_t\big(\cup_{k=1}^K\widetilde{C}_t(k)\big) \geq \sigma_t$, where $\sigma_s,\sigma_t \in (0,1]$ and $\delta_{s},\delta_{t} \geq 0$.
\end{definition}
Broadly speaking, this definition emphasizes that robust data augmentation should consistently produce distance-closed augmented views for semantically similar original images. We refer to Section~\ref{subsection: discussion on augmentation sequence} for further explanations. In this context, we introduce the following assumption to delineate the data augmentation necessary for the end-to-end theoretical guarantee of Adv-SSL.
\begin{assumption}[Existence of augmentation sequence]
\label{assumption: existence of strong data augmentation sequence}
Assume there exists a sequence of $(\sigma_s^{(n)}, \sigma_t^{(n)}, \delta_s^{(n)}, \delta_t^{(n)})$-data augmentations $\mathcal{A}_{n} = \{A^{(n)}_i\}_{i \in [m]}$ such that (\romannumeral1) $\max\{\delta_s^{(n)}, \delta_t^{(n)}\} \lesssim {n}^{-\frac{\epsilon_\mathcal{A} + d + 1}{2(\alpha + d + 1)}}$ holds for some $\epsilon_\mathcal{A} > 0$, (\romannumeral2) $\min\{\sigma_s^{(n)}, \sigma_t^{(n)}\} \to 1$ as $n \to \infty$.
\end{assumption}
It is noteworthy that this assumption closely aligns with~\cite[Assumption 3.5]{spectral} and~\cite[Assumption 3.6]{haochen2023theoretical}, both of which require the augmentations must be sufficiently robust to ensure the internal connections within latent classes remain strong enough to prevent the separation of instance clusters.

We next introduce the assumption related to distribution shift. Prior to characterizing the transferability from the source domain to the target domain, we must first quantify the similarity between these domains. For ease of presentation, let $p_s(k) = \P_s\big(\bm{x}_s \in C_s(k)\big)$ and denote $\mathbb{P}_{s}(k)(\cdot)$ as the probability distribution of the source data that categorized into the $k$-th latent class $C_{s}(k)$, i.e., $\P_s(k)(\cdot) = \P_s\big(\cdot\vert \bm{x}_s \in C_s(k)\big)$. Similarly, let $p_t(k) = \P_t\big(\bm{x}_t \in C_t(k)\big)$ and $\P_t(k)(\cdot) = \P_t\big(\cdot\vert \bm{x}_t \in C_t(k)\big)$. In this context, we make following assumption:
\begin{assumption}[Domain shift]
\label{assumption: distributions shift}
There exists a $\epsilon_{\mathrm{ds}} > 0$ such that both $\max_{k \in [K]}\mathcal{W}\big(\P_s(k), \P_t(k)\big) \lesssim n_s^{-\frac{\epsilon_{\mathrm{ds}} + d + 1}{2(\alpha + d + 1)}}$ and  $\max_{k \in [K]}\abs{p_s(k) - p_t(k)}\lesssim  n_s^{-\frac{\epsilon_{\mathrm{ds}}}{2(\alpha + d + 1)}}$.
\end{assumption}

Generally speaking, smaller $\epsilon_{\mathrm{ds}}$ indicates less discrepancy between the source and target domains. Similar assumptions using alternative divergence measures have been proposed in~\cite{ben2010theory,Germain2013PAC,Cortes2019Adaptation}. To help readers quickly understand this assumption, we discuss it more specifically in Section~\ref{subsection: discussion on distribution shift}.


\subsection{End-to-end theoretical guarantee} \label{subsection: theorem}
We present the end-to-end theoretical guarantee as follow and defer its proof to Appendix~\ref{appendix: deferred proof}.
\begin{theorem} \label{theorem: sample theorem}
When Assumptions~\ref{assumption: f*∈Hölder}-\ref{assumption: distributions shift} all hold, set $\eps_{n_s} \asymp n_s^{-\frac{\min\{\alpha, \epsilon_{\mathrm{ds}}, \epsilon_{\mathcal{A}}\}}{8(\alpha + d + 1)}}, W \gtrsim n_s^\frac{2d + \alpha}{4(\alpha + d + 1)}$, $L  \geq 2\ceil{\log_2(d+r)} + 2, \mathcal{K} \asymp n_s^{\frac{d + 1}{2(\alpha + d + 1)}}$ and $\mathcal{A} = \mathcal{A}_{n_s}$ in Assumption \ref{assumption: existence of strong data augmentation sequence}, then we have
\begin{align*}
\E_{\widetilde{D}_s, \widetilde{D}_t}\big\{\mathrm{Err}(Q_{\hat{f}_{n_s}})\big\} \lesssim \big(1 - \sigma_s^{(n_s)}\big) + {n_s}^{-\frac{\min\{\alpha, \epsilon_{\mathcal{A}}, \epsilon_{\mathrm{ds}}\}}{32(\alpha + d + 1)}}+ \frac{1}{\min_{k}\sqrt{n_t(k)}}
\end{align*}
for sufficiently large $n_s$.
\end{theorem}

\paragraph{Interpretation of sample complexity regarding $n_s$}
The upper bound of misclassification error in Theorem~\ref{theorem: sample theorem} offers several key insights into the convergence behavior regarding $n_s$. First, as the data dimensionality $d$ increases, the convergence rate with respect to the sample size $n_s$ slows down, reflecting the curse of dimensionality. In contrast, as the augmentation quality $\epsilon_{\mathcal{A}}$ increases, indicating better augmentation, the convergence rate of the upper bound on the misclassification rate with respect to $n_s$ improves. Similarly, $\epsilon_{\mathrm{ds}}$ measures the extent of the shift between the source and target domains. A larger $\epsilon_{\mathrm{ds}}$ indicates the difference between the domains is smaller, a smaller domain difference, which makes the transfer learning task easier and further improves the convergence rate regarding $n_s$ increases. Finally, when both $\epsilon_{\mathcal{A}}$ and $\epsilon_{\mathrm{ds}}$ exceed $\alpha$, the convergence rate adopts a typical form found in nonparametric statistics~\cite{hieber2020nonparametric}, specifically $-\frac{\alpha}{32(\alpha + d + 1)}$.

\paragraph{Few-shot learning}
Theorem~\ref{theorem: sample theorem} demonstrates how the abundance of unlabeled data in the source domain leveraged by Adv-SSL benefits downstream tasks in the target domain. Specifically, the classification error of downstream tasks consists of three components: the first depends on the data distribution, the second diminishes as the number of unlabeled data in the source domain increases, and the third approaches zero as the quantity of labeled data in downstream tasks grows. Furthermore, when a sufficiently large number of unlabeled samples in the source domain is available, such that $n_{s}\gtrsim \min_k n_t(k)^{-\frac{16(\alpha + d + 1)}{\min\{\alpha, \epsilon_{\mathcal{A}}, \epsilon_{\mathrm{ds}}\}}}$, then $\E_{\widetilde{D}_s, \widetilde{D}_t}\big\{\mathrm{Err}(Q_{\hat{f}_{n_s}})\big\}\lesssim(1-\sigma_s^{(n_s)})+ \frac{1}{\min_k\sqrt{n_t(k)}}$. This finding indicates that classifiers powered by the representation learned by Adv-SSL can achieve excellent performance with minimal labeled samples, thereby providing rigorous theoretical understanding for few-shot learning~\cite{Liu2021LearningAF, MamshadExploring, yang2022fewshot, SCLFewShot}.

\paragraph{How does the rate change as $m$ increases}
If we consider the case where $m$ increases with $n_s$, according to our theoretical guarantee, we have following conclusions:
\begin{align*}
    \mathbb{E}_{\widetilde{D}_s, \widetilde{D}_t}\big\{\mathrm{Err}(Q_{\hat{f}_{n_s}})\big\} \lesssim (1 - \sigma_s^{(n_s)}) + m^2n_s^{-\frac{\min\{\alpha, \epsilon_\mathcal{A}, \epsilon_\mathrm{ds}\}}{32(\alpha + d + 1)}} + \frac{1}{\min_k\sqrt{n_t(k)}}.
\end{align*}
First of all, as long as $m \lesssim n_s^{\frac{\min\{\alpha, \epsilon_\mathcal{A}, \epsilon_{\mathrm{ds}}\}}{64(\alpha + d + 1)}}$, the desired asymptotic property can be guaranteed. However, as the growth rate of $m$ increases, the convergence rate with respect to $n_s$ becomes slower. This result is intuitive: a larger $m$ implies that more potential knowledge must be learned from the data, which in turn requires a larger sample size to maintain the same level of misclassification rate. For instance, if we set $m \asymp n_s^{\frac{\min\{\alpha, \epsilon_\mathcal{A}, \epsilon_{\mathrm{ds}}\}}{128(\alpha + d + 1)}}$, the resulting convergence rate is $n_s^{-\frac{\min\{\alpha, \epsilon_\mathcal{A}, \epsilon_{\mathrm{ds}}\}}{64(\alpha + d + 1)}}$.
\section{Comparison with Existing Methods}\label{section: performance enhancement}
As the experiments conducted in existing self-supervised learning methods, we pretrain the representation on CIFAR-10, CIFAR-100 and Tiny ImageNet, and subsequently conduct fine-tuning on each dataset with annotations. Table~\ref{tab: performance of Adv-SSL} shows the classification accuracy of representations learned by Adv-SSL, compared with baseline methods including SimCLR~\cite{simclr}, BYOL~\cite{byol}, WMSE~\cite{whitening}, where the results has been reported in~\cite{whitening}. In addition, we also compare Adv-SSL with VICReg~\cite{vicreg} and LogDet~\cite{zhang2024logDet}. The presented result shows that Adv-SSL consistently outperforms previous mainstream self-supervised methods. The experimental details are deferred to Section~\ref{subsection: experimental details} and the implementation can be found in \href{https://github.com/vincen-github/Adv-SSL}{https://github.com/vincen-github/Adv-SSL}.

\begin{table}[htbp]
    \centering
    \begin{tabular}{@{}lcccccc@{}}
    \toprule
    Method & \multicolumn{2}{c}{CIFAR-10} & \multicolumn{2}{c}{CIFAR-100} & \multicolumn{2}{c}{Tiny ImageNet} \\
    & Linear & $k$-nn & Linear & $k$-nn & Linear & $k$-nn \\
    \midrule
    SimCLR & 91.80 & 88.42 & 66.83 & 56.56 & 48.84 & 32.86\\
    BYOL & 91.73 & 89.45 & 66.60 & 56.82 & \textbf{51.00} & 36.24\\
    WMSE2 & 91.55 & 89.69 & 66.10 & 56.69 & 48.20 & 34.16\\
    WMSE4 & 91.99 &	89.87 &	67.64 &	56.45 &	49.20 &	35.44 \\
    VICReg & 91.23 & 89.15 & 67.61 & 57.04 & 48.55 & 35.62 \\
    LogDet & 92.47 & 90.19 & 67.32 & 57.56 & 49.13 & 35.78 \\
    \midrule
    Adv-SSL & \textbf{93.01} & \textbf{90.97} & \textbf{68.94} & \textbf{58.50} & 50.21 & \textbf{37.40} \\
    \bottomrule
    \end{tabular}
     \caption{Top-1 Accuracy Comparison for Different SSL Methods.}
     \label{tab: performance of Adv-SSL}
\end{table}
\section{Conclusion}\label{section: conclusion}

In this paper, we propose a novel adversarial contrastive learning method for unsupervised transfer learning. Our approach achieves state-of-the-art classification accuracy on various real datasets, outperforming existing self-supervised learning methods under both fine-tuned linear probe and $k$-NN protocols. Additionally, we provide an end-to-end theoretical guarantee for downstream classification tasks in misspecified and over-parameterized settings. Our analysis shows that the misclassification rate depends primarily on the strength of data augmentation applied to large amounts of unlabeled data, and offers new theoretical insights into the effectiveness of few-shot learning for downstream tasks with limited samples.
\section{Acknowledgments}
This work has been funded by the National Key Research and Development Program of China (No. 2024YFA1014200, No. 2023YFA1000103, No. 2022YFA1003702), the National Natural Science Foundation of China (Nos. 123B2019, 12125103, U24A2002, 12371441, 12426309), and the Fundamental Research Funds for the Central Universities.

\bibliography{bibliography}
\bibliographystyle{plain}

\begin{appendices}
\newpage
\tableofcontents
\section{Related Works}\label{subsection: related work}
\paragraph{Self-supervised contrastive loss} The loss function proposed by \cite{DBLP:conf/nips/HaoChenWK022} can be regarded as a special version of Adv-SSL with the constraint $\mathtt{x}_{s,1} = \mathtt{x}_{s,2}$. The main difference between Adv-SSL and the approach by \cite{DBLP:conf/nips/HaoChenWK022} lies in the iteration format. As stated in Section~\ref{section: intro}, optimization deviation can accumulate with each iteration, particularly in the mini-batch scenario, while Adv-SSL employs adversarial training to mitigate this issue. The same problem is encountered by \cite{barlowtwins}, which can be loosely regarded as a biased sample version of~\eqref{eq: R(f)}.

\paragraph{Self-supervised theory} 
Recent theoretical studies can be categorized into two main lines of research. The first line~\cite{garrido2022duality,DBLP:conf/nips/HaoChenWK022,negative,huang} focuses on analyzing the population risk of self-supervised learning methods, which can not characterize how the error in downstream tasks diminishes with increasing sample size.
The second line of research~\cite{arora2019theoretical,spectral,ash2021investigating,lei2023generalization,haochen2023theoretical} studies the generalization error through Rademacher complexity  without the consideration of approximation error.
However, the absence of approximation error renders the resulting generalization error analysis ineffective. Specifically, ignoring the approximation error by simply supposing $f$ belonging to a deep neural network class, the Rademacher complexity can be significantly reduced by controlling the scale of the network class, leading to impressive upper bounds. However, this controlled neural network class intuitively limits its approximation capacity. The increasing approximation error  results in a larger overall error. Therefore, these studies cannot provide theoretical guidance for hypothesis class selection nor fully characterize the total error of self-supervised learning methods.  In contrast, our work provides a comprehensive convergence analysis that characterizes how the downstream task error converges with respect to both the number of unlabeled samples in the source domain and labeled samples in the target domain.
\section{Notation List}\label{section:notation}
To reduce confusion and enhance comprehension regarding the symbols used in this study, we have created a list to provide readers with a convenient reference. This list directs readers to the first occurrence of each symbol in the relevant sections or equations. Within this table, the symbol $\Hsquare$ indicates an option for $s$ or $t$, representing the source domain and the target domain, respectively.  
\begin{longtable}{@{}cccc@{}}
    \toprule
    Symbol      & Description                           & Reference \\ \midrule
    \endfirsthead
    
    \toprule
    Symbol     & Description                           & Reference \\ \midrule
    \endhead
    
    \midrule
    \multicolumn{3}{r}{\textit{Continued on next page}} \\ 
    \endfoot
    
    \bottomrule
    \caption{Summary of Symbols}
    \endlastfoot 
    $D_\Hsquare $  & dataset      &  Section~\ref{subsection: notations}       \\
    $d^*$          & representation dimension & Section~\ref{subsection: notations} \\
    $n_t(k)$   & sample size of $k$-th target class  & Equation~\eqref{eq: linear probe}     \\
    $\mathcal{A}$   & data augmentation & Section~\ref{subsection: notations} \\
    $m$          & number of augmentations & Section~\ref{subsection: notations} \\
    $M$       & Lipschitz constant of augmentations & Assumption~\ref{assumption: lip augmentation} \\
    $\mathtt{x}_\Hsquare^{(i)}$  & augmented view & Section~\ref{subsection: notations}\\
    $\tilde{\mathtt{x}}_\Hsquare^{(i)}$ & concatenated augmented view  & Section~\ref{subsection: notations} \\
    $\widetilde{D}_s $  & source augmented dataset &  Section~\ref{subsection: notations}        \\
    $\widetilde{D}_t$  &  target augmented dataset  &    Section~\ref{subsection: transfer learning}      \\
    $K$    & the number classes  &    Section~\ref{subsection: notations}   \\
    $C_\Hsquare(k)$  & $k$-th source/target class            &   Definition~\ref{def: characterization of augmentaion}      \\
    $\widetilde{C}_\Hsquare(k)$  & main part of $C_\Hsquare(k)$   &  Definition~\ref{def: characterization of augmentaion}    \\
    $\P_\Hsquare$  & data distribution        &  Section~\ref{subsection: notations}        \\
    $\P_\Hsquare(k)$ & distribution conditioned on $\bm{x}_\Hsquare \in C_\Hsquare(k)$ &  Assumption~\ref{assumption: distributions shift}       \\
    $p_\Hsquare(k)$  & probability of $\bm{x}_\Hsquare \in C_\Hsquare(k)$ & Assumption~\ref{assumption: distributions shift}    \\
    $\mu_\Hsquare(k)$ & $k$-th representation center& Lemma~\ref{lemma: sufficient condition of small Err} and~\ref{lemma: difference between mu_s and mu_t}       \\
    $\hat{\mu}_t(k)$  & $k$-th empirical center& Equation~\eqref{eq: linear probe}\\
    $f^*$  & population optimal encoder & Equation~\eqref{eq: f*} \\
    $\hat{f}_{n_s}$   & sample optimal encoder & Equation~\eqref{eq: f_hat}\\
    $Q_{\hat{f}_{n_s}}$  &  classifier based on $\hat{f}_{n_s}$ & Equation~\eqref{eq: linear probe}\\
    $\mathrm{Err}$    & misclassification error & Equation~\eqref{eq: misclassification rate} \\
    $\mathcal{W}$ & Wasserstien Distance & Section~\ref{subsection: notations} \\
    $\mathcal{F}$ & neural network hypothesis space    &   Equation~\eqref{eq: f_hat}       \\
    $\mathcal{G}(f)$ & feasible set of $G$ & Section~\eqref{subsection: mini-max} \\
    $\mathcal{L}_{\mathrm{align}}$ & alignment term & Section~\ref{subsection: problem formulation} \\
    $\mathcal{R}(f, G)$ & regularization term & Definition~\ref{eq: R(f) = supR(f, G)} \\
    $\mathcal{R}(f)$ & regularization term & Definition~\ref{eq: R(f)} \\
    $\alpha$ & parameter of H{\"o}lder class & Definition~\ref{def: Hölder class}\\
    $\epsilon_\mathcal{A}$ & augmentation parameter & Assumption~\ref{assumption: existence of strong data augmentation sequence}        \\
    $\epsilon_\mathrm{ds}$ & distribution shift parameter & Assumption~\ref{assumption: distributions shift}      \\
    $\sigma_\Hsquare, \delta_\Hsquare$ & parameters of augmentation & Definition~\ref{def: characterization of augmentaion} \\
    $\epsilon_1, \epsilon_2$  & distribution shift &   Equation~\eqref{eq:shift} \\
    $W, L, \mathcal{K}, B_1,B_2$  & parameters of neural network &  Definition~\ref{def: reLU neural networks} \\
\end{longtable}
\section{Experimental Details}\label{subsection: experimental details}
\textbf{Implementation details.} Except for tuning $\lambda$ for different datasets, all other hyperparameters used in our experiments align with \cite{whitening}. 
We train for 1,000 epochs with a learning rate of $3 \times 10^{-3}$ for CIFAR-10 and CIFAR-100, and $2 \times 10^{-3}$ for Tiny ImageNet. A learning rate warm-up is applied for the first $500$ iterations of the optimizer, in addition to a $0.2$ learning rate drop at $50$ and $25$ epochs before the training end. We use a mini-batch size of $256$, and the dimension of the hidden layer in the projection head is set to $1024$. The weight decay is set to $10^{-3}$. We adopt an embedding size ($d^*$) of $512$. The backbone network used in our implementation is ResNet-18.

\textbf{Image transformation details.} We randomly apply crops with sizes ranging from $0.08$ to $1.0$ of the original area and aspect ratios ranging from $3/4$ to $4/3$ of the original aspect ratio. Furthermore, we apply horizontal mirroring with a probability of $0.5$. Additionally, color jittering is applied with a configuration of ($0.4; 0.4; 0.4; 0.1$) and a probability of $0.8$, while grayscaling is applied with a probability of $0.2$. For CIFAR-10 and CIFAR-100, random Gaussian blurring is adopted with a probability of $0.5$ and a kernel size of $0.1$. During testing, only one crop is used for evaluation.

\textbf{Evaluation protocol.} During evaluation, we freeze the network encoder and remove the projection head after pretraining, then train a supervised linear classifier on top of it, which is a fully-connected layer followed by softmax.
we train the linear classifier for $500$ epochs using the Adam optimizer with corresponding labeled training set without data augmentation. The learning rate is exponentially decayed from $10^{-2}$ to $10^{-6}$.
The weight decay is set as $10^{-4}$. we also include the accuracy of a k-nearest neighbors classifier with $k = 5$, which does not require fine tuning.

All experiments were conducted using a single Tesla V100 GPU unit. The PyTorch implementations can be found in supplementary material.

\section{Additional Numerical Experiments}
\label{section:add:experiments}

\subsection{Ablation study on the regularization parameter}\label{subsection: ablation study on the regularization parameter}

The regularization parameter $\lambda$ in Adv-SSL balances the alignment term $\mathcal{L}_{\mathrm{align}}(\cdot)$ and the regularization term $\mathcal{R}(\cdot)$. Our theoretical analysis suggests that $\lambda=\mathcal{O}(1)$. Specifically, 
\begin{itemize}
\item In Lemma~\ref{lemma: the effect of minimaxing our loss}, we demonstrate that the alignment factor $R_{t}(\varepsilon,f)$ can be bounded by the alignment term $\mathcal{L}_{\mathrm{align}}(\cdot)$, while the divergence factor $\max_{i\neq j}|\mu_{t}(t)^{\top}\mu_{t}(j)|$ is bounded by the regularization term $\mathcal{R}(\cdot)$.
\item Based on the definition of the population risk $\mathcal{L}(f)=\mathcal{L}_{\mathrm{align}}(f)+\lambda\mathcal{R}(f)$, we find 
\begin{equation*}
\mathcal{L}_{\mathrm{align}}(f)\leq\mathcal{L}(f) \quad\text{and}\quad \mathcal{R}(f)\leq\lambda^{-1}\mathcal{L}(f)\lesssim\mathcal{L}(f),
\end{equation*}
where we used $\lambda=\mathcal{O}(1)$. This allows us to bound both the alignment factor $R_{t}(\varepsilon,f)$ and the divergence factor $\max_{i\neq j}|\mu_{t}(t)^{\top}\mu_{t}(j)|$ in terms of the population risk $\mathcal{L}(f)$, which leads to the conclusion in Lemma~\ref{lemma: pop theorem}.
\end{itemize}

\begin{table}[htbp]
\centering
\begin{tabular}{lcccccc}
\toprule
Regularization & \multicolumn{2}{c}{CIFAR-10} & \multicolumn{2}{c}{CIFAR-100}  \\
parameter & Linear & $k$-nn & Linear & $k$-nn \\
\midrule
$\lambda=5.0\times 10^{-5}$ & 90.11 & 87.72 & 67.59 & 57.34 \\
$\lambda=1.0\times 10^{-4}$ & 90.53 & 88.12 & 68.01 & 57.59 \\
$\lambda=5.0\times 10^{-4}$ & 92.24 & 89.99 & 68.24 & 58.35 \\
$\lambda=1.0\times 10^{-3}$ & 92.01 & 90.18 & 67.88 & 57.89 \\
$\lambda=5.0\times 10^{-3}$ & 92.11 & 90.01 & 68.12 & 57.66 \\
$\lambda=1.0\times 10^{-2}$ & 92.47 & 90.33 & \textbf{68.94} & \textbf{58.50} \\
$\lambda=5.0\times 10^{-2}$ & \textbf{93.01} & \textbf{90.97} & 67.68 & 57.13 \\
$\lambda=1.0\times 10^{-1}$ & 92.77 & 90.38 & 67.82 & 57.49 \\
$\lambda=1.0$ & 91.75 & 89.76 & 66.78 & 56.76 \\
\bottomrule
\end{tabular}
\caption{Comparisons of Adv-SSL with different regularization parameters.}
\label{tab:ablation:reg}
\end{table}

\subsection{Ablation study on the data augmentations}\label{subsection: ablation study on the data augmentations}

\begin{table}[htbp]
\centering
\begin{tabular}{ccccccc}
\toprule
random  & grayscale & color  & random horizontal & \multicolumn{2}{c}{CIFAR-10}  \\
cropping & & distortion & flipping & Linear & $k$-nn \\
\midrule
$\checkmark$ & $\checkmark$ & $\checkmark$ & $\checkmark$ & \textbf{93.01} & \textbf{90.97}\\
$\checkmark$ & $\checkmark$ & $\checkmark$ & & 91.03 & 88.34\\
$\checkmark$ & $\checkmark$ & & & 89.18 & 85.65 \\
$\checkmark$ & & & & 79.32 & 69.81\\
\bottomrule
\end{tabular}
\caption{Downstream performance of Adv-SSL under different richness of augmentations.}
\label{tab:ablation:augmentation}
\end{table}

\subsection{Ablation study on the alignment term}\label{subsection: ablation study on the alignment term}

The loss function~\eqref{eq: f_hat} proposed in this work consists of an alignment term and a regularization term. In contrast, Barlow Twins~\cite{barlowtwins} does not require the alignment term. In this subsection, we show whether this additional alignment term necessary for this method.

\cite[Lemma 4.1]{huang} has demonstrated that the the diagonal part of the cross-correlation matrix serves as a alignment term under certain conditions. Indeed,
\begin{align*}
&\mathbb{E}_{\bm{x}_s \sim \P_s}\mathbb{E}_{\mathtt{x}_{s,1},\mathtt{x}_{s, 2}\in\mathcal{A}(\bm{x}_s)}\Big[\big\{f_{i}(\mathtt{x}_{s,1})-f_{i}(\mathtt{x}_{s,2})\big\}^{2}\Big] \\
&=\mathbb{E}_{\bm{x}_s \sim \P_s}\mathbb{E}_{\mathtt{x}_{s,1},\mathtt{x}_{s, 2}\in\mathcal{A}(\bm{x}_s)}\big\{f_{i}^{2}(\mathtt{x}_{s,1})+f_{i}^{2}(\mathtt{x}_{s,2})\big\}-2\mathbb{E}_{\bm{x}_s \sim \P_s}\mathbb{E}_{\mathtt{x}_{s,1},\mathtt{x}_{s, 2}\in\mathcal{A}(\bm{x}_s)}\big\{f_{i}(\mathtt{x}_{s,1})f_{i}(\mathtt{x}_{s,2})\big\} \\
&=2\mathbb{E}_{\bm{x}_s \sim \P_s}\mathbb{E}_{\mathtt{x}_s\in\mathcal{A}(\bm{x})}\big\{f^{2}_{i}(\mathtt{x}_s)\big\}-2\mathbb{E}_{\bm{x}_s \sim \P_s}\mathbb{E}_{\mathtt{x}_{s,1},\mathtt{x}_{s, 2}\in\mathcal{A}(\bm{x}_s)}\big\{f_{i}(\mathtt{x}_{s,1})f_{i}(\mathtt{x}_{s,2})\big\},
\end{align*}
where the second equality holds from that $\mathtt{x}_{s,1}$ and $\mathtt{x}_{s,2}$ follow the same distribution. The alignment risk is then related to the diagonal part of the cross-correlation matrix as:
\begin{align*}
\mathcal{L}_{\mathrm{align}}^{2}(f)
&=\Big(\sum_{i=1}^{d}\mathbb{E}_{\bm{x}_s \sim \P_s}\mathbb{E}_{\mathtt{x}_{s,1},\mathtt{x}_{s, 2}\in\mathcal{A}(\bm{x}_s)}\big\{\big(f_{i}(\mathtt{x}_{s,1})-f_{i}(\mathtt{x}_{s,2})\big)^{2}\big\}\Big)^{2} \\
&=4\Big(\sum_{i=1}^{d}\Big[\mathbb{E}_{\bm{x}_s \sim \P_s}\mathbb{E}_{\mathtt{x}_s\in\mathcal{A}(\bm{x})}\big\{f^{2}_{i}(\mathtt{x}_{s})\big\}-\mathbb{E}_{\bm{x}_s \sim \P_s}\mathbb{E}_{\mathtt{x}_{s,1},\mathtt{x}_{s, 2}\in\mathcal{A}(\bm{x}_s)}\big\{f_{i}(\mathtt{x}_{s,1})f_{i}(\mathtt{x}_{s,2})\big\}\Big]\Big)^{2} \\
&\leq 4d\sum_{i=1}^{d}\Big[\mathbb{E}_{\bm{x}_s \sim \P_s}\mathbb{E}_{\mathtt{x}_s\in\mathcal{A}(\bm{x})}\big\{f^{2}_{i}(\mathtt{x}_{s})\big\}-\mathbb{E}_{\bm{x}_s \sim \P_s}\mathbb{E}_{\mathtt{x}_{s,1},\mathtt{x}_{s, 2}\in\mathcal{A}(\bm{x}_s)}\big\{f_{i}(\mathtt{x}_{s,1})f_{i}(\mathtt{x}_{s,2})\big\}\Big]^{2},
\end{align*}
where the last inequality follows from Cauchy-Schwarz inequality. It is crucial that the right-hand side of the inequality is consistent with the alignment term in the loss function of Barlow Twins, provided that $\mathbb{E}_{\bm{x}_s \sim \P_s}\mathbb{E}_{\mathtt{x}_s\in\mathcal{A}(\bm{x})}\big\{f^{2}_{i}(\mathtt{x}_{s})\big\}=1$ for each $i\in\{1,\ldots,d\}$. However, this condition does not hold generally.

Therefore, from a theoretical perspective, the cross-correlation loss alone, as used in Barlow Twins~\cite{barlowtwins}, is insufficient for learning representations invariant to augmentations, as previously discussed. To address this, we introduce an additional explicit alignment term in the loss, as also suggested by~\cite{DBLP:conf/nips/HaoChenWK022, haochen2023theoretical}.

From a practical perspective, we conduct an ablation study comparing Adv-SSL with and without the explicit alignment term. Our results shown in Table~\ref{tab:ablation:alignment} indicate that the inclusion of the explicit alignment term improves Adv-SSL's performance.

\begin{table}[htbp]
\centering
\begin{tabular}{lcccccc}
\toprule
Method & \multicolumn{2}{c}{CIFAR-10} & \multicolumn{2}{c}{CIFAR-100}  \\
& Linear & $k$-nn & Linear & $k$-nn \\
\midrule
Adv-SSL without alignment & 92.42 & 90.01 & 67.27 & 58.10 \\
Adv-SSL with alignment & \textbf{93.01} & \textbf{90.97} & \textbf{68.94} & \textbf{58.50}  \\
\bottomrule
\end{tabular}
\caption{Comparisons of Adv-SSL without and with the alignment term.}
\label{tab:ablation:alignment}
\end{table}

\subsection{Transfer learning}\label{subsection: transfer learning}
To align the experiments with the theoretical settings, we conduct a simple additional experiment in terms of transfer learning. Specifically, we transfer the representation trained on CIFAR-100 to CIFAR-10. Compared to the performance of Barlow Twins \cite{barlowtwins}, we can see that Adv-SSL indeed has strong transferability in practice, as demonstrated in our theory.

\begin{table}[ht]
    \centering
    \begin{tabular}{ccc}
        \toprule
        Methods   & Linear & $k$-nn \\ 
        \midrule
        Barlow Twins\cite{barlowtwins} & 73.56 & 66.34 \\ 
        Beyond Separability\cite{DBLP:conf/nips/HaoChenWK022} & 74.11 & 66.79 \\
        \midrule
        Adv-SSL & \textbf{80.57} & \textbf{73.41} \\ 
        \bottomrule
    \end{tabular}
    \caption{Transfer learning from CIFAR-100 to CIFAR-10}
    \label{tab:transfer_learning}
\end{table}

\section{Additional Discussions on Assumptions}
\label{section:assumption:discussion}

\subsection{Discussions on assumption~\ref{assumption: lip augmentation}}\label{subsection: discussion on lip augmentation}

Assumption~\ref{assumption: lip augmentation} is mild and numerous commonly-used augmentation methods satisfy this assumption. Specifically, all of the augmentation methods used in our experiments meet this requirement.

As outlined in Section~\ref{subsection: experimental details}, the data augmentations used in our experiments, including crops, horizontal mirroring, color jittering, gray scaling, and Gaussian blurring, are indeed Lipschitz continuous. In the following, we provide a detailed justification for each of these transformations.

{\bf Crops:} For each image $\bm{x}\in\mathbb{R}^{d}$, a crop of this image is defined as $\operatorname{Crop}(\bm{x};I)=\bm{x}_{I}$, for some index set $I\subseteq\{1,\ldots,d\}$. The Lipschitz continuity follows from the fact that:
\begin{align*}
\|\operatorname{Crop}(\bm{x};I)-\operatorname{Crop}(\bm{y};I)\|_{2}^{2}
&=\|\bm{x}_I-\bm{y}_I\|_{2}^{2}=\sum_{i\in I}(\bm{x}_i-\bm{y}_i)^{2} \\
&\leq\sum_{i=1}^{d}(\bm{x}_i-\bm{y}_i)^{2}=\|\bm{x}-\bm{y}\|_{2}^{2}.
\end{align*}
Thus crop is 1-Lipschitz.

{\bf Horizontal mirroring:} For each image $\bm{x}\in\mathbb{R}^{d}$, a horizontal mirror of this image can be formulated as $\operatorname{Mirror}(\bm{x})=\bm{x}_{I}$, where the index set $I$ is a rearrangement of $\{1,\ldots,d\}$. We find
\begin{align*}
\|\operatorname{Mirror}(\bm{x})-\operatorname{Mirror}(\bm{y})\|_{2}^{2}
&=\|\bm{x}_I-\bm{y}_I\|_{2}^{2}=\sum_{i\in I}(\bm{x}_I-\bm{y}_I)^{2} \\
&=\sum_{i=1}^{d}(\bm{x}_I-\bm{y}_I)^{2}=\|\bm{x}-\bm{y}\|_{2}^{2}.
\end{align*}
Thus horizontal mirroring is 1-Lipschitz.

{\bf Color jittering:} As an example, consider the brightness adjustment for color jittering. The color jittering operator is defined as $\operatorname{Jitter}(\bm{x})=\operatorname{clip}(\alpha \bm{x},0,1)$ for some adjustment factor $\alpha>0$.  If $\alpha>1$, the image becomes brighter; if $\alpha<1$, the image becomes darker. Then 
\begin{align*}
\|\operatorname{Jitter}(\bm{x})-\operatorname{Jitter}(\bm{y})\|_{2}^{2}
&=\sum_{i=1}^{d}\big(\operatorname{clip}(\alpha \bm{x}_I,0,1)-\operatorname{clip}(\alpha \bm{y}_I,0,1)\big)^{2} \\
&\leq \sum_{i=1}^{d}(\alpha \bm{x}_I-\alpha \bm{y}_I)^{2}\leq\alpha^{2}\|\bm{x}-\bm{y}\|_{2}^{2}.
\end{align*}
Thus color jittering is $\alpha$-Lipschitz.

{\bf Grayscaling:} The grayscale transformation is a weighted sum of the RGB channels. For a RGB image $\bm{x}\in\mathbb{R}^{d}$, the red channel is defined as $\bm{x}_{R}:=(\bm{x}_{i})_{i=1}^{d/3}$, the green channel is defined as $\bm{x}_{G}:=(\bm{x}_{i})_{i=d/3+1}^{2d/3}$, and the blue channel is defined as $\bm{x}_{B}:=(\bm{x}_{i})_{i=2d/3}^{d}$. The grayscaling of this image is defined as $\operatorname{Gray}(\bm{x})=\alpha \bm{x}_{R}+\beta \bm{x}_{G}+\gamma \bm{x}_{B}$ for some $\alpha,\beta,\gamma\in(0,1)$. The Lipschitz continuity follows from:
\begin{align*}
\|\operatorname{Gray}(\bm{x})-\operatorname{Gray}(\bm{y})\|_{2}
&\leq\alpha\|\bm{x}_R-\bm{y}_R\|_{2}+\beta\|\bm{x}_G-\bm{y}_G\|_{2}+\gamma\|\bm{x}_B-\bm{y}_B\|_{2} \\
&\leq\max\{\alpha,\beta,\gamma\}\|\bm{x}-\bm{y}\|_{2},
\end{align*}
where the first inequality holds from Jensen's inequality. Thus grayscaling is $\max\{\alpha,\beta,\gamma\}$-Lipschitz.

{\bf Gaussian blurring:} Gaussian blurring applies a Gaussian kernel to smooth the image, reducing high-frequency noise and detail. The blurred image $\operatorname{GaussianBlur}(\bm{x};\sigma)=\bm{x} * K_{\sigma}$ is defined by convolving the original image $\bm{x}$ with a Gaussian kernel $K_{\sigma}$. Convolution is a linear operation, and it is well-known that convolution with a Gaussian kernel is Lipschitz continuous, where the Lipschitz constant depends on the kernel size and $\sigma$. Therefore, Gaussian blurring is Lipschitz continuous with a constant that depends on the kernel size and $\sigma$. 

\subsection{Discussions on assumption~\ref{assumption: existence of strong data augmentation sequence}}\label{subsection: discussion on augmentation sequence}
The concept of $(\sigma_{s},\delta_{s}, \sigma_t, \delta_t)$-augmentation is introduced to quantify the concentration of augmented data, which is a extensive version of $(\sigma_s, \delta_s)$ augmentations proposed by~\cite[Definition 1]{huang} in terms of transfer learning. We now provide a step-by-step explanation:
\begin{itemize}
\item Augmentation distance: for a given augmentation set $\mathcal{A}$, the augmentation distance between two samples $\bm{x}$ and $\bm{y}$ are defined as the minimum distance between their augmented views: $\|\bm{x}-\bm{y}\|_{\mathcal{A}}:=\min_{\mathtt{x}\in\mathcal{A}(\bm{x}),\mathtt{y}\in\mathcal{A}(\bm{y})}\|\mathtt{x}-\mathtt{y}\|_2$. Since augmentations can capture partial semantic meanings of the original sample through various views, this augmentation distance reflects the maximal semantic similarity between the two samples. 
\item $\sigma_{s}$-main-part of the latent class: for a latent class in the source domain $C_{s}(k)$, the $\sigma_{s}$-main-part is defined as $\widetilde{C}_{s}(k)\subseteq C_{s}(k)$ satisfying $\mathbb{P}_{s}\big(\bm{x}\in\widetilde{C}_{s}(k)\big)\geq\sigma_{s}\mathbb{P}_{s}\big(\bm{x}\in C_{s}(k)\big)$. The parameter $\sigma_{s}$ quantifies the concentration of the distribution $\mathbb{P}_{s}(k)(\cdot):=\mathbb{P}_{s}\big(\cdot|\bm{x}\in C_{s}(k)\big)$ of this latent class. Specifically, for fixed sets $\widetilde{C}_{s}(k)$ and $C_{s}(k)$, a larger value of $\sigma_{s}$ indicates a higher concentration of the distribution $\mathbb{P}_{s}(k)$.
\item Augmentation diameter of the $\sigma_{s}$-main-part: the parameter $\delta_{s}$ is defined as the diameter of the $\sigma_{s}$-main-part in augmentation distance, that is, $\sup_{\bm{x},\bm{y}\in\widetilde{C}_{s}(k)}\|\bm{x}-\bm{y}\|_{\mathcal{A}}$. For a fixed distribution $\mathbb{P}_{s}$ and a fixed parameter $\sigma_{s}$, the smaller value of the diameter $\delta_{s}$ means a higher concentration of the augmented distribution, as well as greater similarity between augmented data samples.
\item Summary for $(\sigma_s, \delta_s)$-augmentations: The concentration of the augmented distribution, as measured by the pair of parameters $(\sigma_{s},\delta_{s})$, depends on both the distribution $\mathbb{P}_{s}(k)$ and the augmentation set $\mathcal{A}$. Specifically, for a fixed augmentation set $\mathcal{A}$, a smaller value of $\sigma_{s}$ and a higher concentration of $\mathbb{P}_{s}(k)$ result in a smaller $\sigma_{s}$-main-part $\widetilde{C}_{s}(k)$, leading to a smaller value of $\delta_{s}$. Additionally, for a fixed distribution $\mathbb{P}_{s}(k)$, a smaller value of $\sigma_{s}$ and a larger augmentation set $\mathcal{A}$ lead to smaller augmentation distances $\|\bm{x}-\bm{y}\|_{\mathcal{A}}$ for each pair of samples $(\bm{x},\bm{y})$, resulting in a smaller value of $\delta_{s}$.
\item The conditions (\romannumeral1)-(\romannumeral4) in Definition~\ref{def: characterization of augmentaion} can be considered an extensive version that takes into account the difference between the source domain and the target domain.
\item The extra condition (\romannumeral5) in Definition~\ref{def: characterization of augmentaion} replaces the assumption $\mathcal{A}\big(C_t(i)\big) \cap \mathcal{A}\big(C_t(j)\big) = \emptyset$ required by \cite{huang}. This implies that the augmentation methods used should be intelligent enough to recognize objects that align with the image labels in multi-objective images. A straightforward alternative to this requirement is to assume that different classes $C_t(k)$ are pairwise disjoint, meaning that for all $i \neq j, C_t(i) \cap C_t(j) = \emptyset$, which implies that $\P_t\big(\cup_{k=1}^K\widetilde{C}_t(k)\big) = \sum_{k=1}^K\P_t\big(\widetilde{C}_t(k)\big) \geq \sigma_t\sum\limits_{k=1}^K\P_t\big(C_t(k)\big) = \sigma_t$.
\end{itemize}
To ensure readers can get quickly understanding for the Definition~\ref{def: characterization of augmentaion}, we provide following example:
\begin{example}
Suppose the samples in the $k$-th latent class follows the uniform distribution on $[0,R]$, i.e., $C_{s}(k)=[0,R]$ and $\mathbb{P}_{s}(k)=\mathsf{unif}(0,R)$. For each $\sigma_{s}\in(0,1]$, we can find a $\sigma_{s}$-main-part of $C_{s}(k)$ as $\widetilde{C}_{s}(k)=[0,\sigma_{s}R]$. Further, we define the augmentation set as $\mathcal{A}(x)=\{\mathtt{x}\in\mathbb{R}:|x-\mathtt{x}|\leq r\}$ for each $x\in\mathcal{X}$. Then the augmentation diameter $\delta_{s}$ of the $\sigma_{s}$-main-part is given as 
\begin{equation*}
\sup_{x,y\in\widetilde{C}_{s}(k)}\|x-y\|_{\mathcal{A}}=\max\{\sigma_{s}R-2r,0\}=:\delta_{s}.
\end{equation*}
The parameters $\sigma_{s}$, $\delta_{s}$, $r$ and $R$ are interrelated by this equality. Note that the parameter $R$ reflects the concentration of the distribution $\mathbb{P}_{s}(k)$ within the latent class. A smaller value of $R$ indicates a higher concentration of $\mathbb{P}_{s}(k)$, which in turn leads to a smaller value of the augmentation diameter $\delta_{s}$. Additionally, a larger augmentation set, i.e., a larger value of $r$, results in a smaller value of the augmentation diameter $\delta_{s}$.
\end{example}

\subsection{Discussions on assumption~\ref{assumption: distributions shift}}\label{subsection: discussion on distribution shift}

Assumption~\ref{assumption: distributions shift} is common in the theory of transfer learning, such as~\cite{ben2010theory,Germain2013PAC,Cortes2019Adaptation}. We now provide a concrete example for more intuition. Consider the following example using one-dimensional Gaussian mixtures. Specifically, we define the source and target distributions as follows:
\begin{align*}
\mathbb{P}_{s}&:=\sum_{k=1}^{K}w_{s}(k)\mathbb{P}_{s}(k), \quad \sum_{k=1}^{K}w_{s}(k)=1, \\
\mathbb{P}_{t}&:=\sum_{k=1}^{K}w_{t}(k)\mathbb{P}_{t}(k), \quad \sum_{k=1}^{K}w_{t}(k)=1, 
\end{align*}
where the distributions of each latent class are Gaussian:
\begin{equation*}
\mathbb{P}_{s}(k):=N\big(\mu_{s}(k),\sigma^{2}\big), \quad
\mathbb{P}_{t}(k):=N\big(\mu_{t}(k),\sigma^{2}\big), \quad 1\leq k\leq K.
\end{equation*}
Then the parameter $\epsilon_{1}$ is the maximum distance between the means of the source and target distributions for each latent class:
\begin{align*}
\epsilon_{1}=\max_{k\in [K]}\mathcal{W}\big(\mathbb{P}_{s}(k),\mathbb{P}_{t}(k)\big)=\max_{k \in [K]}\big\{|\mu_{s}(k)-\mu_{t}(k)|\big\}
\end{align*}
Additionally, the parameter $\epsilon_{2}$ is the maximum distance between the mixture weights of the source and target distributions:
\begin{equation*}
\epsilon_{2}=\max_{k\in [K]}|w_{s}(k)-w_{t}(k)|.
\end{equation*}
Thus, Assumption~\ref{assumption: distributions shift} not only requires that the source and target distributions for each latent class are close in terms of their means, but also that their mixture weights are similar. 

\section{Proof of Theorem~\ref{theorem: sample theorem}}\label{appendix: deferred proof}

\subsection{Proof sketch}
\label{proof sketch}
In this section, we focus on providing the proof sketch for Theorem~\ref{theorem: sample theorem}. Based on~\cite{huang}, we begin by exploring the sufficient condition regarding the downstream error bound, as shown in Lemma~\ref{lemma: sufficient condition of small Err}. Specifically, it reveals that the error bound $\mathrm{Err}\big(Q_f\big) \leq (1 - \sigma_t) + R_t(\eps, f)$ holds under the condition $\max_{i\neq j}\mu_t(i)^\top\mu_j < B_2^2\psi(\sigma_t, \delta_t, \eps, f)$. The naturally raised question is whether minimaxing the risk of Adv-SSL can help us meet the required condition. To answer this question, we establish Lemma~\ref{lemma: the effect of minimaxing our loss} in Section~\ref{subsection: the effect of minimaxing our loss}, which reveals that minimizing the risk of Adv-SSL can achieve the requirement $\max_{i\neq j}\mu_t(i)^\top\mu_j < B_2^2\psi(\sigma_t, \delta_t, \eps, f)$. Meanwhile, its directly induced corollary~\ref{lemma: E[Err] < E[L(f^hat)]} indicates that we should explore the sample complexity of $\E_{\widetilde{D}_s}\big\{\mathcal{L}(f)\big\}$. To this end, we begin by developing a novel error decomposition approach in Section~\ref{subsection: risk decomposition}, which decouples $\E_{\widetilde{D}_s}\big\{\mathcal{L}(f)\big\}$ into four terms: $\mathcal{L}(f^*)$, the statistical error $\mathcal{E}_{\mathrm{sta}}$ regarding the neural network class $\mathcal{F}$, the approximation error $\mathcal{E}_{\mathcal{F}}$, and the error induced by the dual variable $\mathcal{E}_\mathcal{G}$. We then deal with them individually in Sections~\ref{subsection: deal with L(f*)},~\ref{subsection: bound Esta},~\ref{subsection: bound EF}, and~\ref{subsection: bound EG} respectively. Subsequently, we conduct the tradeoff to obtain the sample complexity of $\E_{\widetilde{D}_s}\big\{\mathcal{L}(f)\big\}$ and the corresponding parameters of the network, including width, depth, and norm constraint. Based on these results, we can derive the desired error upper bound for Adv-SSL, as shown in Theorem~\ref{theorem: sample theorem}, which completes the proof.
\subsection{Sufficient condition of small misclassification rate}
\label{subsection: sufficient condition of small misclassification rate}
To begin with, let $\mu_t(k) := \E_{\bm{x}_t \in C_t(k)} \E_{\mathtt{x}_t \in \mathcal{A}(\bm{x}_t)}\big\{f(\mathtt{x}_t)\big\} = \frac{1}{p_t(k)} \E_{\bm{x}_t \sim \P_t} \E_{\mathtt{x}_t \in \mathcal{A}(\bm{x}_t)}\big[f(\mathtt{x}_t) \1\big\{\bm{x}_t \in C_t(k)\big\}\big]$, inspired by \cite{huang}, we have following lemma.
\begin{lemma} \label{lemma: sufficient condition of small Err}
Given a $(\sigma_s, \sigma_t, \delta_s, \delta_t)$-augmentation, if the encoder $f$ such that $B_1 \leq \norm{f}_2 \leq B_2$ is $\mathcal{K}$-Lipschitz and
\begin{align*}
\mu_t(i)^{\top}\mu_t(j) < B_2^2\psi(\sigma_t, \delta_t, \eps, f),
\end{align*}
holds for any pair of $(i,j)$ with $i \neq j$, then the downstream error rate of $Q_f$
\begin{align*}
\mathrm{Err}(Q_f) \leq (1 - \sigma_t) + R_t(\eps, f),
\end{align*}
where $R_t(\eps, f) = \P_t\big(\bm{x}_t \in \mathcal{X}_t: \sup_{\mathtt{x}_{t,1}, \mathtt{x}_{t,2} \in \mathcal{A}(\bm{x}_t)}\norm{f(\mathtt{x}_{t,1}) - f(\mathtt{x}_{t,2})}_2 > \eps\big)$, $\psi(\sigma_t, \delta_t, \eps,f) = \Gamma_{\min}(\sigma_t, \delta_t, \eps, f) - \sqrt{2 - 2\Gamma_{\min}(\sigma_t, \delta_t, \eps, f)} - \frac{1}{2}\Big(1 - \frac{\min_{k \in [K]}\norm{\hat{\mu}_t(k)}_2^2}{B_2^2}\Big) - \frac{2\max_{k \in [K]}\norm{\hat{\mu}_t(k) - \mu_t(k)}_2}{B_2}$, herein, $\Gamma_{\min}(\sigma_t, \delta_t, \eps, f) = \Big(\sigma_t - \frac{R_t(\eps, f)}{\min_ip_t(i)}\Big)\Big(1 + \Big(\frac{B_1}{B_2}\Big)^2 - \frac{\mathcal{K}\delta_t}{B_2} - \frac{2\eps}{B_2}\Big) - 1$.
\begin{proof}

For any encoder $f$, let $S_t(\eps, f):= \{\bm{x}_t \in \mathcal{X}_t: \sup_{\mathtt{x}_{t,1}, \mathtt{x}_{t,2} \in \mathcal{A}(\bm{x}_t)} \norm{f(\mathtt{x}_{t,1}) - f(\mathtt{x}_{t,2})}_2 \leq \eps\}$, if any $\bm{x}_t \in \big\{\widetilde{C}_t(1) \cup \cdots \cup \widetilde{C}_t(K)\big\}\cap S_t(\eps, f)$ can be correctly classified by $Q_f$, it turns out that $\mathrm{Err}(Q_f)$ can be bounded by $(1 - \sigma_t) + R_t(\eps, f)$. In fact,
\begin{align*}
\mathrm{Err}(Q_f) &= \P_t\Big\{Q_f(\bm{x}_t) \neq y\Big\} \leq \P_t\Big[\big\{\widetilde{C}_t(1) \cup \cdots \cup \widetilde{C}_t(K) \cap S_t(\eps, f)\big\}^c\Big] \\
&= \P_t\big[\{\widetilde{C}_t(1) \cup \cdots \cup \widetilde{C}_t(K)\}^c \cup \{S_t(\eps, f)\}^c\big] \leq (1 - \sigma_t) + \P_t\big[\{S_t(\eps, f)\}^c\big] \\
&= (1 - \sigma_t) + R_t(\eps, f).
\end{align*}
The first row is derived from the definition of $\mathrm{Err}(Q_f)$. Since any $\bm{x}_t \in \{\widetilde{C}_t(1) \cup \cdots \cup \widetilde{C}_t(K)\}\cap S_t(\eps, f)$ can be correctly classified by $Q_f$, we obtain the second row. De Morgan's laws imply the third row. The fourth row follows from Definition~\ref{def: characterization of augmentaion}. Finally, noting that $R_t(\eps, f) = \P_t[\{S_t(\eps, f)\}^c]$ yields the last line. 

Hence it suffices to show for given $i \in [K]$, $\bm{x}_t \in \widetilde{C}_t(i) \cap S_t(\eps, f)$ can be correctly classified by $Q_f$ if for any $j \neq i$,
\begin{align*}
\mu_t(i)^{\top}\mu_t(j) &< B_2^2\Big(\Gamma_i(\sigma_t, \delta_t, \eps, f) - \sqrt{2 - 2\Gamma_i(\sigma_t, \delta_t, \eps, f)} - \frac{1}{2}\Big( 1 - \frac{\min_{k \in [K]}\norm{\hat{\mu}_t(k)}_2^2}{B_2^2}\Big) \\
&\quad- \frac{\norm{\hat{\mu}_t(i) - \mu_t(i)}_2}{B_2}- \frac{\norm{\hat{\mu}_t(j) - \mu_t(j)}_2}{B_2}\Big),
\end{align*}
where $\Gamma_i(\sigma_t, \delta_t, \eps, f) = \Big(\sigma_t - \frac{R_t(\eps, f)}{p_t(i)}\Big)\Big(1 + \Big(\frac{B_1}{B_2}\Big)^2 - \frac{\mathcal{K}\delta_t}{B_2} - \frac{2\eps}{B_2}\Big) - 1$.

To this end, without losing generality, consider the case $i = 1$. To turn out $\bm{x}_t \in \widetilde{C}_t(1) \cap S_t(\eps, f)$ can be correctly classified by $Q_f$, by the definition of $\widetilde{C}_t(1)$ and $S_t(\eps, f)$, It just need to show $\forall k \neq 1, \norm{f(\bm{x}_t) - \hat{\mu}_t(1)}_2 < \norm{f(\bm{x}_t) - \hat{\mu}_t(k)}_2$, which is equivalent to
\begin{align*}
f(\bm{x}_t)^{\top}\hat{\mu}_t(1) - f(\bm{x}_t)^{\top}\hat{\mu}_t(k) - \Big(\frac{1}{2}\big\Vert\hat{\mu}_t(1)\big\Vert^2_2 - \frac{1}{2}\big\Vert\hat{\mu}_t(k)\big\Vert^2_2\Big) > 0.
\end{align*}
We first deal with the term $f(\bm{x}_t)^{\top}\hat{\mu}_t(1)$,
\begin{align}
\label{label 0}
f(\bm{x}_t)^{\top}\hat{\mu}_t(1) &= f(\bm{x}_t)^{\top}\mu_t(1) + f(\bm{x}_t)^{\top}\big(\hat{\mu}_t(1) - \mu_t(1)\big) \nonumber\\
&\geq f(\bm{x}_t)^{\top}\E_{\bm{x}_t \in C_t(1)}\E_{\mathtt{x}_t \in \mathcal{A}(\bm{x}_t)}\big\{f(\mathtt{x}_t)\big\} - \big\Vert f(\bm{x}_t)\big\Vert_2\big\Vert\hat{\mu}_t(1) - \mu_t(1)\big\Vert_2 \nonumber\\
&\geq \frac{1}{p_t(1)}f(\bm{x}_t)^{\top}\E_{\bm{x}_t \sim \P_t}\E_{\mathtt{x}_t \in \mathcal{A}(\bm{x}_t)}\Big[f(\mathtt{x}_t)\1\big\{\bm{x}_t \in C_t(1)\big\}\Big] - B_2\big\Vert\hat{\mu}_t(1) - \mu_t(1)\big\Vert_2 \nonumber\\
&= \frac{1}{p_t(1)}f(\bm{x}_t)^{\top}\E_{\bm{x}_t \sim \P_t}\E_{\mathtt{x}_t \in \mathcal{A}(\bm{x}_t)}\Big[f(\mathtt{x}_t)\1\big\{\bm{x}_t \in C_t(1) \cap \widetilde{C}_t(1) \cap S_t(\eps, f)\big\}\Big] \nonumber\\
&\quad + \frac{1}{p_t(1)}f(\bm{x}_t)^{\top}\E_{\bm{x}_t \sim \P_t}\E_{\mathtt{x}_t \in \mathcal{A}(\bm{x}_t)}\Big[f(\mathtt{x}_t)\1\Big\{\bm{x}_t \in C_t(1) \cap \big\{\widetilde{C}_t(1) \cap S_t(\eps, f)\big\}^c\Big\}\Big] \\ 
&\quad - B_2\big\Vert\hat{\mu}_t(1) - \mu_t(1)\big\Vert_2 \nonumber\\
&= \frac{\P_t\big\{\widetilde{C}_t(1) \cap S_t(\eps, f)\big\}}{p_t(1)}f(\bm{x}_t)^{\top}\E_{\bm{x}_t \in \widetilde{C}_t(1) \cap S_t(\eps, f)}\E_{\mathtt{x}_t \in \mathcal{A}(\bm{x}_t)}\big\{f(\mathtt{x}_t)\big\} \nonumber \\
&\quad + \frac{1}{p_t(1)}\E_{\bm{x}_t \sim \P_t}\Big[\E_{\mathtt{x}_t \in \mathcal{A}(\bm{x}_t)}\big\{f(\bm{x}_t)^{\top}f(\mathtt{x}_t)\big\}\1\big[\bm{x}_t \in C_t(1) \backslash \big\{\widetilde{C}_t(1) \cap S_t(\eps, f)\big\}\big]\Big] \nonumber\\
&\quad - B_2\norm{\hat{\mu}_t(1) - \mu_t(1)}_2 \nonumber \\
&\geq \frac{\P_t\{\widetilde{C}_t(1) \cap S_t(\eps, f)\}}{p_t(1)}f(\bm{x}_t)^{\top}\mathop{\E}_{\bm{x}_t \in \widetilde{C}_t(1) \cap S_t(\eps, f)}\mathop{\E}_{\mathtt{x}_t \in \mathcal{A}(\bm{x}_t)}\big\{f(\mathtt{x}_t)\big\} \nonumber\\
&\quad - \frac{B_2^2}{p_t(1)}\P_t\Big[C_t(1) \backslash \big\{\widetilde{C}_t(1) \cap S_t(\eps, f)\big\}\Big] - B_2\big\Vert\hat{\mu}_t(1) - \mu_t(1)\big\Vert_2.
\end{align}
The second row follows from the Cauchy–Schwarz inequality. The third and last rows are derived from the condition $\norm{f}_2 \leq B_2$. Note that
\begin{align}
\label{label 1}
\P_t\Big[C_t(1) \backslash \big\{\widetilde{C}_t(1) \cap S_t(\eps, f)\big\}\Big] &= \P_t\Big[\big\{C_t(1) \backslash \widetilde{C}_t(1)\big\} \cup \big[\widetilde{C}_t(1) \cap \{S_t(\eps, f)\}^c\big]\Big] \\
&\leq (1 - \sigma_t)p_t(1) + R_t(\eps, f),
\end{align}
and
\begin{align}
\label{label 2}
\P_t\big(\widetilde{C}_t(1) \cap S_t(\eps, f)\big) &= \P_t\big(C_t(1)\big) - \P_t\big(C_t(1) \backslash (\widetilde{C}_t(1) \cap S_t(\eps, f)\big)\big) \\
&\geq p_t(1) - \big\{(1 - \sigma_t)p_t(1) + R_t(\eps, f)\big\} \nonumber\\ 
&= \sigma_t p_t(1) - R_t(\eps, f).
\end{align}
Plugging~\eqref{label 1} and~\eqref{label 2} into~\eqref{label 0} yields
\begin{align}
\label{label 3}
f(\bm{x}_t)^{\top}\hat{\mu}_t(1) &\geq \Big(\sigma_t - \frac{R_t(\eps, f)}{p_t(1)}\Big)f(\bm{x}_t)^{\top}\mathop{\E}_{\bm{x}_t \in \widetilde{C}_t(1) \cap S_t(\eps, f)}\mathop{\E}_{\mathtt{x}_t \in \mathcal{A}(\bm{x}_t)}\{f(\mathtt{x}_t)\} - B_2^2\Big(1 - \sigma_t + \frac{R_t(\eps, f)}{p_t(1)}\Big)\nonumber\\
&\quad - B_2\norm{\hat{\mu}_t(1) - \mu_t(1)}_2.
\end{align}
Notice that $\bm{x}_t \in \widetilde{C}_t(1) \cap S_t(\eps, f)$. Thus, for any $\bm{x}_t^\prime \in \widetilde{C}_t(1) \cap S_t(\eps, f)$, by the definition of $\widetilde{C}_t(1)$, we have 
$\min_{\mathtt{x}_t \in \mathcal{A}(\bm{x}_t), \mathtt{x}_t^\prime \in \mathcal{A}(\bm{x}_t)} \|\mathtt{x}_t - \mathtt{x}_t^\prime\|_2 \leq \delta_t$. Further, denote $(\mathtt{x}_t^*, \mathtt{x}_t^{\prime*}) = \arg\min_{\mathtt{x}_t \in \mathcal{A}(\bm{x}_t), \mathtt{x}_t^\prime \in \mathcal{A}(\bm{x}_t)} \|\mathtt{x}_t - \mathtt{x}_t^\prime\|_2$. Then, we have $\|\mathtt{x}_t^* - \mathtt{x}_t^{\prime*}\|_2 \leq \delta_t$. Combining this with the $\mathcal{K}$-Lipschitz property of $f$, we obtain $\|f(\mathtt{x}_t^*) - f(\mathtt{x}_t^{\prime*})\|_2 \leq \mathcal{K} \|\mathtt{x}_t^* - \mathtt{x}_t^{\prime*}\|_2 \leq \mathcal{K} \delta_t$. Moreover, since $\bm{x}_t \in S_t(\eps, f)$, it follows that for all $\mathtt{x}_t^\prime \in \mathcal{A}(\bm{x}_t)$, $\|f(\mathtt{x}_t^\prime) - f(\mathtt{x}_t^{\prime*})\|_2 \leq \eps$. Similarly, as $\bm{x}_t \in S_t(\eps, f)$ and both $\bm{x}_t$ and $\mathtt{x}_t^*$ belong to $\mathcal{A}(\bm{x}_t)$, we know $\|f(\bm{x}_t) - f(\mathtt{x}_t^*)\|_2 \leq \eps$.

\noindent Therefore,
\begin{align}
\label{label 4}
&f(\bm{x}_t)^{\top}\E_{\bm{x}_t \in \widetilde{C}_t(1) \cap S_t(\eps, f)}\E_{\mathtt{x}_t \in \mathcal{A}(\bm{x}_t)}\big\{f(\mathtt{x}_t)\big\} = \E_{\bm{x}_t \in \widetilde{C}_t(1)\cap S_t(\eps, f)}\E_{\mathtt{x}_t \in \mathcal{A}(\bm{x}_t)}\big\{f(\bm{x}_t)^{\top}f(\mathtt{x}_t)\big\} \nonumber\\
&= \E_{\bm{x}_t \in \widetilde{C}_t(1)\cap S_t(\eps, f)}\E_{\mathtt{x}_t \in \mathcal{A}(\bm{x}_t)}\Big[f(\bm{x}_t)^{\top}\big\{f(\mathtt{x}_t) - f(\bm{x}_t) + f(\bm{x}_t)\big\}\Big] \nonumber \\
&\geq B_1^2 + \E_{\bm{x}_t \in \widetilde{C}_t(1)\cap S_t(\eps, f)}\E_{\mathtt{x}_t \in \mathcal{A}(\bm{x}_t)}\Big[f(\bm{x}_t)^{\top}\big\{f(\mathtt{x}_t) - f(\bm{x}_t)\big\}\Big] \nonumber \\
&= B_1^2 + \E_{\bm{x}_t \in \widetilde{C}_t(1)\cap S_t(\eps, f)}\E_{\mathtt{x}_t \in \mathcal{A}(\bm{x}_t)}\Big[f(\bm{x}_t)^{\top}\big\{\underbrace{f(\mathtt{x}_t) - f(\mathtt{x}_t^{\prime*})}_{\norm{\cdot}_2 \leq \eps} + \underbrace{f(\mathtt{x}_t^{\prime*}) - f(\mathtt{x}_t^*)}_{\norm{\cdot}_2 \leq \mathcal{K}\delta_t} +  \underbrace{f(\mathtt{x}_t^*) - f(\bm{x}_t)}_{\norm{\cdot}_2 \leq \eps}\big\}\Big] \nonumber \\
&\geq B_1^2 - \big(B_2\eps + B_2\mathcal{K}\delta_t + B_2\eps\big) \nonumber \\
&= B_1^2 - B_2\big(\mathcal{K}\delta_t + 2\eps\big),
\end{align}
where the fourth line is derived from $\norm{f}_2 \geq B_1$.

Plugging~\eqref{label 4} into the inequality~\eqref{label 3} yields
\begin{align*}
f(\bm{x}_t)^{\top}\hat{\mu}_t(1) &\geq \Big(\sigma_t - \frac{R_t(\eps, f)}{p_t(1)}\Big)f(\bm{x}_t)^{\top}\mathop{\E}\limits_{\bm{x}_t \in \widetilde{C}_t(1) \cap S_t(\eps, f)}\mathop{\E}\limits_{\mathtt{x}_t \in \mathcal{A}(\bm{x}_t)}\big\{f(\mathtt{x}_t)\big\} - B_2^2\Big(1 - \sigma_t + \frac{R_t(\eps, f)}{p_t(1)}\Big)  \\
&\quad- B_2\big\Vert\hat{\mu}_t(1) - \mu_t(1)\big\Vert_2 \\
&\geq \Big(\sigma_t - \frac{R_t(\eps, f)}{p_t(1)}\Big)\Big(B_1^2 - B_2(\mathcal{K}\delta_t + 2\eps)\Big) - B_2^2\Big\{1 - \sigma_t + \frac{R_t(\eps, f)}{p_t(1)}\Big\} \\
&\quad- B_2\big\Vert\hat{\mu}_t(1) - \mu_t(1)\big\Vert_2 \\
&= B_2^2\Big\{\Big(1 + \Big(\frac{B_1}{B_2}\Big)^2\Big)\Big(\sigma_t - \frac{R_t(\eps, f)}{p_t(1)}\Big) - \Big(\sigma_t - \frac{R_t(\eps, f)}{p_t(1)}\Big)\Big(\frac{\mathcal{K}\delta_t}{B_2} + \frac{2\eps}{B_2}\Big) - 1\Big\} \\
&\quad - B_2\big\Vert\hat{\mu}_t(1) - \mu_t(1)\big\Vert_2 \\
&= B_2^2\Big\{\Big(\sigma_t - \frac{R_t(\eps, f)}{p_t(1)}\Big)\Big( 1 + \Big(\frac{B_1}{B_2}\Big)^2 - \frac{\mathcal{K}\delta_t}{B_2} - \frac{2\eps}{B_2}\Big) - 1\Big\} - B_2\big\Vert\hat{\mu}_t(1) - \mu_t(1)\big\Vert_2\\
&= B_2^2\Gamma_1(\sigma_t, \delta_t, \eps, f) - B_2\big\Vert\hat{\mu}_t(1) - \mu_t(1)\big\Vert_2.
\end{align*}
Similar process can also turn out
\begin{align}
\label{inner product of f(z0) and mu(1)}
f(\bm{x}_t)^{\top}\mu_t(1) \geq B_2^2\Gamma_1(\sigma_t, \delta_t, \eps, f).
\end{align}
Combining with $\big\Vert\mu_t(k)\big\Vert_2 = \big\Vert\E_{\bm{x}_t \in \widetilde{C}_t(k)}\E_{\mathtt{x}_t \in \mathcal{A}(\bm{x}_t)}\big\{f(\mathtt{x}_t)\big\}\big\Vert_2 \leq \E_{\bm{x}_s \in \widetilde{C}_t(k)}\E_{\mathtt{x}_t \in \mathcal{A}(\bm{x}_t)}\big\Vert f(\mathtt{x}_t)\big\Vert_2 \leq B_2$ yields
\begin{align*}
f(\bm{x}_t)^{\top}\hat{\mu}_t(k) &\leq f(\bm{x}_t)^{\top}\mu_t(k) + f(\bm{x}_t)^{\top}\big(\hat{\mu}_t(k) - \mu_t(k)\big) \\
&\leq f(\bm{x}_t)^{\top}\mu_t(k) + \big\Vert f(\bm{x}_t)\big\Vert_2\big\Vert\hat{\mu}_t(k) - \mu_t(k)\big\Vert_2 \\
&\leq f(\bm{x}_t)^{\top}\mu_t(k) + B_2\big\Vert\hat{\mu}_t(k) - \mu_t(k)\big\Vert_2 \\
&= \{f(\bm{x}_t) - \mu_t(1)\}^{\top}\mu_t(k) + \mu_t(1)^{\top}\mu_t(k) + B_2\big\Vert\hat{\mu}_t(k) - \mu_t(k)\big\Vert_2 \\
&\leq \big\Vert f(\bm{x}_t) - \mu_t(1)\big\Vert_2 \cdot \big\Vert\mu_t(k)\big\Vert_2 + \mu_t(1)^{\top}\mu_t(k) + B_2\big\Vert\hat{\mu}_t(k) - \mu_t(k)\big\Vert_2 \\
&\leq B_2\sqrt{\big\Vert f(\bm{x}_t)\big\Vert^2_2 - 2f(\bm{x}_t)^{\top}\mu_t(1) + \big\Vert\mu_t(1)}_2^2\big\Vert + \mu_t(1)^{\top}\mu_t(k) + B_2\big\Vert\hat{\mu}_t(k) - \mu_t(k)\big\Vert_2\\
&\leq B_2\sqrt{2B_2^2 - 2f(\bm{x}_t)^{\top}\mu_t(1)} + \mu_t(1)^{\top}\mu_t(k) + B_2\big\Vert\hat{\mu}_t(k) - \mu_t(k)\big\Vert_2\\
&\leq B_2\sqrt{2B_2^2 - 2B_2^2\Gamma_1(\sigma_t, \delta_t, \eps, f)} + \mu_t(1)^{\top}\mu_t(k) + B_2\big\Vert\hat{\mu}_t(k) - \mu_t(k)\big\Vert_2 \\
&= \sqrt{2}B_2^2\sqrt{1 - \Gamma_1(\sigma_t, \delta_t, \eps, f)} + \mu_t(1)^{\top}\mu_t(k) + B_2\big\Vert\hat{\mu}_t(k) - \mu_t(k)\big\Vert_2,
\end{align*}
where the inequality in eighth row stems from~\eqref{inner product of f(z0) and mu(1)}. Moreover, we can conclude
\begin{align*}
&f(\bm{x}_t)^{\top}\hat{\mu}_t(1) - f(\bm{x}_t)^{\top}\hat{\mu}_t(k) - \Big(\frac{1}{2}\norm{\hat{\mu}_t(1)}_2^2 - \frac{1}{2}\norm{\hat{\mu}_t(k)}^2\Big) \\
&= f(\bm{x}_t)^{\top}\hat{\mu}_t(1) - f(\bm{x}_t)^{\top}\hat{\mu}_t(k) - \frac{1}{2}\norm{\hat{\mu}_t(1)}^2_2 + \frac{1}{2}\norm{\hat{\mu}_t(k)}^2_2 \\
&\geq f(\bm{x}_t)^{\top}\hat{\mu}_t(1) - f(\bm{x}_t)^{\top}\hat{\mu}_t(k) - \frac{1}{2}B_2^2 + \frac{1}{2}\min_{k \in [K]}\norm{\hat{\mu}_t(k)}_2^2 \\
&\geq B_2^2\Gamma_1(\sigma_t, \delta_t, \eps, f) - B_2\norm{\hat{\mu}_t(1) - \mu_t(1)}_2 - \sqrt{2}B_2^2\sqrt{1 - \Gamma_1(\sigma_t, \delta_t, \eps, f)}  - \mu_t(1)^{\top}\mu_t(k) \\
&\quad - B_2\norm{\hat{\mu}_t(k) - \mu_t(k)}_2 - \frac{1}{2}B_2^2\Big(1 - \frac{\min_{k \in [K]}\norm{\hat{\mu}_t(k)}_2^2}{B_2^2}\Big) > 0,
\end{align*}
where the last inequality is derived from the given condition in Lemma~\ref{lemma: sufficient condition of small Err}, which finishes the proof.
\end{proof}
\end{lemma}

\subsection{The effect of minimaxing Adv-SSL} \label{subsection: the effect of minimaxing our loss}
In this section, we explore the effect of minimaxing the risk of Adv-SSL, as demonstrated in Lemma~\ref{lemma: pop theorem}. We begin by showing that the required condition in Lemma~\ref{lemma: sufficient condition of small Err} can indeed be satisfied by our method. To achieve this, we first introduce Lemma~\ref{lemma: difference between mu_s and mu_t}, Lemma~\ref{lemma: bound variance of augmented view} as preparatory steps. We will begin with reviewing and introducing some necessary notations at first. 

Review that $p_s(k) = \P_s\big(\bm{x}_s \in C_s(k)\big)$ and $\P_s(k)(\cdot) = \P_s\big(\cdot\vert \bm{x}_s \in C_s(k)\big)$. Correspondingly, $p_t(k) = \P_t\big(\bm{x}_t \in C_t(k)\big)$ and $\P_t(k)(\cdot) = \P_t\big(\cdot\vert \bm{x}_t \in C_t(k)\big)$. We use the quantities 
\begin{equation}\label{eq:shift}
\begin{aligned}
\epsilon_1 = \max_{k \in [K]}\mathcal{W}\big(\P_s(k), \P_t(k)\big), \quad \epsilon_2 = \max_{k \in [K]}\abs{p_s(k) - p_t(k)},
\end{aligned}
\end{equation}
to measure the divergence between the source and the target domains. In addition, Following the notations in the target domain, we denote the center of the $k$-th latent class in the representation space as $\mu_s(k) := \E_{\bm{x}_s \in C_s(k)} \E_{\mathtt{x}_s \in \mathcal{A}(\bm{x}_s)}\big\{f(\mathtt{x}_s)\big\} = \frac{1}{p_s(k)} \E_{\bm{x}_s \sim \P_s} \E_{\mathtt{x}_s \in \mathcal{A}(\bm{x}_s)}\big[f(\mathtt{x}_s) \1\big\{\bm{x}_s \in C_s(k)\big\}\big]$. In this context, the Lemma~\ref{lemma: difference between mu_s and mu_t} can be presented as follow:
\begin{lemma}
\label{lemma: difference between mu_s and mu_t}
If the encoder $f$ is $\mathcal{K}$-Lipschitz continuous, then for any $k \in [K]$,
\begin{align*}
\big\Vert \mu_s(k) - \mu_t(k)\big\Vert_2 \leq \sqrt{d^*}M\mathcal{K}\epsilon_1.
\end{align*}
\begin{proof}
For all $k \in [K]$,
\begin{align*}
\norm{\mu_s(k) - \mu_t(k)}_2^2 &= \sum_{l=1}^{d^*}\Big[\big\{\mu_s(k)\big\}_l - \big\{\mu_t(k)\big\}_l\Big]^2 \\
&= \sum_{i=1}^{d^*}\Big[\E_{\bm{x}_s \in C_s(k)}\E_{\mathtt{x}_s \in \mathcal{A}(\bm{x}_s)}\big\{f_i(\mathtt{x}_s)\big\} - \E_{\bm{x}_t \in C_t(k)}\E_{\mathtt{x}_t\in \mathcal{A}(\bm{x}_t)}\big\{f_i(\mathtt{x}_t)\big\}\Big]^2 \\
&= \sum_{i=1}^{d^*}\Big[\frac{1}{m}\sum_{j=1}^m\Big(\E_{\bm{x}_s \in C_s(k)}\{f_i\big(A_j(\bm{x}_s)\big)\} - \E_{\bm{x}_t \in C_t(k)}\{f_i\big(A_j(\bm{x}_t)\big)\}\Big)\Big]^2 \\
&\leq d^*M^2\mathcal{K}^2\epsilon_1^2.
\end{align*}
The final inequality is obtained from $\epsilon_1 = \max_{k \in [K]}\mathcal{W}\big(\P_s(k), \P_t(k)\big)$ and the definition of Wasserstein distance, along with the fact that $f\big(A_j(\cdot)\big)$ is $M\mathcal{K}$-Lipschitz continuous. In fact, since $f \in \mathrm{Lip}(\mathcal{K})$, it follows that for every $i \in [d^*]$, $f_i \in \mathrm{Lip}(\mathcal{K})$. Combining this with the property that $A_j(\cdot) \in \mathrm{Lip}(M)$ stated in Assumption~\ref{assumption: lip augmentation}, we conclude that $f\big(A_i(\cdot)\big)$ is $M\mathcal{K}$-Lipschitz continuous. So that
\begin{align*}
\big\Vert\mu_s(k) - \mu_t(k)\big\Vert_2 \leq \sqrt{d^*}M\mathcal{K}\epsilon_1.
\end{align*}
\end{proof}
\end{lemma}

Next we present Lemma~\ref{lemma: bound variance of augmented view}.
\begin{lemma} \label{lemma: bound variance of augmented view}
Given a $(\sigma_s, \sigma_t, \delta_s, \delta_t)$-augmentation, if the encoder $f$ with $\norm{f}_2 \leq B_2$ is $\mathcal{K}$-Lipschitz continuous, then
\begin{small}
\begin{align*}
\mathop{\E}\limits_{\bm{x}_s \in C_s(k)}\mathop{\E}\limits_{\mathtt{x}_s \in \mathcal{A}(\bm{x}_s)} \Big\Vert f(\mathtt{x}_s) - \mu_s(k)\Big\Vert_2^2 \leq 4B_2^2\Big\{\Big(1  - \sigma_s + \frac{\mathcal{K}\delta_s + 2\eps}{2B_2} + \frac{R_s(\eps, f)}{p_s(k)}\Big)^2 + \Big(1 - \sigma_s + \frac{R_s(\eps, f)}{p_s(k)}\Big)\Big\},
\end{align*}
\end{small}
where $R_s(\eps, f) = \P_s\Big\{\bm{x}_s \in \mathcal{X}_s: \sup_{\mathtt{x}_{s,1}, \mathtt{x}_{s,2} \in \mathcal{A}(\bm{x}_s)}\norm{f(\mathtt{x}_{s,1}) - f(\mathtt{x}_{s,2})}_2 > \eps\Big\}$.
\begin{proof}
Let $S_s(\eps, f) := \big\{\bm{x}_s \in \mathcal{X}_s : \sup_{\mathtt{x}_{s,1}, \mathtt{x}_{s,2} \in \mathcal{A}(\bm{x}_s)}\big\Vert f(\mathtt{x}_{s,1}) - f(\mathtt{x}_{s,2})\big\Vert_2 \leq \eps\big\}$, for each $k \in [K]$,
\begin{align}
\label{label 5}
&\E_{\bm{x}_s \in C_s(k)}\E_{\mathtt{x}_s \in \mathcal{A}(\bm{x}_s)}\Big\Vert f(\mathtt{x}_s) - \mu_s(k)\Big\Vert_2^2 = \frac{1}{p_s(k)}\E_{\bm{x}_s \sim \P_s}\E_{\mathtt{x}_s \in \mathcal{A}(\bm{x}_s)}\Big[\1\big\{\bm{x}_s \in C_s(k)\big\}\big\Vert f(\mathtt{x}_s) - \mu_s(k)\big\Vert_2^2\Big] \nonumber\\
&= \frac{1}{p_s(k)}\E_{\bm{x}_s \sim \P_s}\E_{\mathtt{x}_s \in \mathcal{A}(\bm{x}_s)}\Big[\1\big\{\bm{x}_s \in \widetilde{C}_s(k) \cap S_s(\eps, f)\big\}\big\Vert f(\mathtt{x}_s) - \mu_s(k)\big\Vert_2^2\Big] \nonumber\\
&\quad+ \frac{1}{p_s(k)}\E_{\bm{x}_s \sim \P_s}\E_{\mathtt{x}_s \in \mathcal{A}(\bm{x}_s)}\Big[\1\big\{\bm{x}_s \in C_s(k) \backslash \big(\widetilde{C}_s(k) \cap S_s(\eps, f)\big)\big\}\big\Vert f(\mathtt{x}_s) - \mu_s(k)\big\Vert_2^2\Big] \nonumber\\
&\leq \frac{1}{p_s(k)}\E_{\bm{x}_s \sim \P_s}\E_{\mathtt{x}_s \in \mathcal{A}(\bm{x}_s)}\Big[\1\big\{\bm{x}_s \in \widetilde{C}_s(k) \cap S_s(\eps, f)\big\}\big\Vert f(\mathtt{x}_s) - \mu_s(k)\big\Vert_2^2\Big] + \frac{4B_2^2\P_s\big[C_s(k) \backslash \{\widetilde{C}_s(k) \cap S_s(\eps, f)\}\big]}{p_s(k)} \nonumber\\
&\leq \frac{1}{p_s(k)}\E_{\bm{x}_s \sim \P_s}\E_{\mathtt{x}_s \in \mathcal{A}(\bm{x}_s)}\Big[\1\big\{\bm{x}_s \in \widetilde{C}_s(k) \cap S_s(\eps, f)\big\}\big\Vert f(\mathtt{x}_s) - \mu_s(k)\big\Vert_2^2\Big] + 4B_2^2\Big(1 - \sigma_s + \frac{R_s(\eps, f)}{p_s(k)}\Big) \nonumber\\
&\leq \frac{\P_s\big(\widetilde{C}_s(k) \cap S_s(\eps, f)\big)}{p_s(k)}\mathop{\E}\limits_{\bm{x}_s \in \widetilde{C}_s(k) \cap S_s(\eps, f)}\mathop{\E}\limits_{\mathtt{x}_s \in \mathcal{A}(\bm{x}_s)}\big\Vert f(\mathtt{x}_s) - \mu_s(k)\big\Vert_2^2 + 4B_2^2\Big(1 - \sigma_s + \frac{R_s(\eps, f)}{p_s(k)}\Big) \nonumber\\
&\leq \E_{\bm{x}_s \in \widetilde{C}_s(k) \cap S_s(\eps, f)}\E_{\mathtt{x}_s \in \mathcal{A}(\bm{x}_s)}\big\Vert f(\mathtt{x}_s) - \mu_s(k)\big\Vert_2^2 + 4B_2^2\Big(1 - \sigma_s + \frac{R_s(\eps, f)}{p_s(k)}\Big),
\end{align}
where the second inequality is due to
\begin{align*}
\P_s\Big[C_s(k) \backslash \big\{\widetilde{C}_s(k) \cap S_s(\eps, f)\big\}\Big] &= \P_s\Big[\big\{C_s(k) \backslash \widetilde{C}_s(k)\big\} \cup \big\{C_s(k) \backslash S_s(\eps, f)\big\}\Big] \\
&\leq \big(1 - \sigma_s\big)p_s(k) + R_s(\eps, f).
\end{align*}
Furthermore,
\begin{align}
\label{label 6}
&\E_{\bm{x}_s \in \widetilde{C}_s(k) \cap S_s(\eps, f)}\E_{\mathtt{x}_s \in \mathcal{A}(\bm{x}_s)}\big\Vert f(\mathtt{x}_s) - \mu_s(k)\big\Vert_2^2 \\
&= \E_{\bm{x}_s \in \widetilde{C}_s(k) \cap S_s(\eps, f)}\E_{\mathtt{x}_s \in \mathcal{A}(\bm{x}_s)}\Big\Vert f(\mathtt{x}_s) - \E_{\bm{x}_s^\prime \in C_s(k)}\E_{\mathtt{x}_s^\prime \in \mathcal{A}(\mathtt{x}_s^\prime)}\big\{f(\mathtt{x}_s^\prime)\big\}\Big\Vert_2^2 \nonumber \\
&= \E_{\bm{x}_s \in \widetilde{C}_s(k) \cap S_s(\eps, f)}\E_{\mathtt{x}_s \in \mathcal{A}(\bm{x}_s)}\Big\Vert f(\mathtt{x}_s) - \frac{\P_s\big\{\widetilde{C}_s(k) \cap S_s(\eps, f)\big\}}{p_s(k)}\E_{\bm{x}_s^\prime \in \widetilde{C}_s(k) \cap S_s(\eps, f)}\E_{\mathtt{x}_s^\prime \in \mathcal{A}(\mathtt{x}_s)}\big\{f(\mathtt{x}_s^\prime)\big\} \nonumber 
\\ &\quad - \frac{\P_s\big[C_s(k) \backslash \{\widetilde{C}_s(k) \cap S_s(\eps, f)\}\big]}{p_s(k)}\E_{\bm{x}_s^\prime \in C_s(k) \backslash \{\widetilde{C}_s(k) \cap S_s(\eps, f)\}}\E_{\mathtt{x}_s^\prime \in \mathcal{A}(\bm{x}_s^\prime)}\big\{f(\mathtt{x}_s^\prime)\big\}\Big\Vert_2^2 \nonumber \\
&= \E_{\bm{x}_s \in \widetilde{C}_s(k) \cap S_s(\eps, f)}\E_{\mathtt{x}_s \in \mathcal{A}(\bm{x}_s)}\Big\Vert \frac{\P_s\{\widetilde{C}_s(k) \cap S_s(\eps, f)\}}{p_s(k)}\Big(f(\mathtt{x}_s) - \E_{\bm{x}_s^\prime \in \widetilde{C}_s(k) \cap S_s(\eps, f)}\E_{\mathtt{x}_s^\prime \in \mathcal{A}(\bm{x}_s^\prime)}\big\{f(\mathtt{x}_s^\prime)\big\}\Big) \nonumber \\
&\quad - \frac{\P_s\big[C_s(k) \backslash \{\widetilde{C}_s(k) \cap S_s(\eps, f)\}\big]}{p_s(k)}\Big(f(\mathtt{x}_s) - \E_{\bm{x}_s^\prime \in C_s(k) \backslash \{\widetilde{C}_s(k) \cap S_s(\eps, f)\}}\E_{\mathtt{x}_s^\prime \in \mathcal{A}(\bm{x}_s^\prime)}\big\{f(\mathtt{x}_s^\prime)\big\}\Big)\Big\Vert_2^2 \nonumber \\
&\leq \mathop{\E}\limits_{\bm{x}_s \in \widetilde{C}_s(k) \cap S_s(\eps, f)}\mathop{\E}\limits_{\mathtt{x}_s \in \mathcal{A}(\bm{x}_s)}\Big[\Big\Vert f(\mathtt{x}_s) - \mathop{\E}\limits_{\bm{x}_s \in \widetilde{C}_s(k) \cap S_s(\eps, f)}\mathop{\E}\limits_{\mathtt{x}_s^\prime \in \mathcal{A}(\bm{x}_s^\prime)}\big\{f(\mathtt{x}_s^\prime)\big\}\Big\Vert_2 +  2B_2\Big(1 - \sigma_s + \frac{R_s(\eps, f)}{p_s(k)}\Big)\Big]^2
\end{align}
For any $\bm{x}_s, \bm{x}_s^\prime \in \widetilde{C}_s(k) \cap S_s(\eps, f)$, by the definition of $\widetilde{C}_s(k)$, we can yield
\begin{align*}
    \min\limits_{\mathtt{x}_{s} \in \mathcal{A}(\bm{x}_s), \mathtt{x}_{s}^\prime \in \mathcal{A}(\bm{x}_s^\prime)}\norm{\mathtt{x}_s - \mathtt{x}_s^\prime}_2 \leq \delta_s,
\end{align*}
Thus, let $(\mathtt{x}_s^*, \mathtt{x}_s^{\prime*}) = \argmin_{\mathtt{x}_s \in \mathcal{A}(\bm{x}_s), \mathtt{x}_s^\prime \in \mathcal{A}(\bm{x}_s^\prime)}\norm{\mathtt{x}_s - \mathtt{x}_s^\prime}_2$, we have $\norm{\mathtt{x}_s^* - \mathtt{x}_s^{\prime*}}_2 \leq \delta_s$. Furthermore, by the $\mathcal{K}$-Lipschitz continuity of $f$, we yield $\norm{f(\mathtt{x}_s^*) - f(\mathtt{x}_s^{\prime*})}_2 \leq \mathcal{K}\norm{\mathtt{x}_s^* - \mathtt{x}_s^{\prime*}}_2 \leq \mathcal{K}\delta_s$. In addition, since $\bm{x}_s \in S_s(\eps, f)$, we know for any $\mathtt{x}_s \in \mathcal{A}(\bm{x}_s), \norm{f(\mathtt{x}_s) - f(\mathtt{x}_s^*)}_2 \leq \eps$. Similarly,  $\bm{x}_s^\prime \in S_s(\eps, f)$ implies $\norm{f(\mathtt{x}_s^\prime) - f(\mathtt{x}_s^{\prime*})}_2 \leq \eps$ for any $\mathtt{x}_s^\prime \in \mathcal{A}(\bm{x}_s^\prime)$. Therefore, for any $\bm{x}_s, \bm{x}_s^\prime \in \widetilde{C}_s(1) \cap S_s(\eps, f)$ and $\mathtt{x}_s \in \mathcal{A}(\bm{x}_s), \mathtt{x}_s^\prime \in \mathcal{A}(\bm{x}_s^\prime)$,
\begin{align}
\label{label 7}
\big\Vert f(\mathtt{x}_s) - f(\mathtt{x}_s^\prime)\big\Vert_2 &\leq \big\Vert f(\mathtt{x}_s) - f(\mathtt{x}_s^*)\big\Vert_2 + \big\Vert f(\mathtt{x}_s^*) - f(\mathtt{x}_s^{\prime*})\big\Vert_2 + \big\Vert f(\mathtt{x}_s^{\prime*}) - f(\mathtt{x}_s^\prime)\big\Vert_2 \nonumber\\
&\leq 2\eps + \mathcal{K}\delta_s.
\end{align} 
Combining inequalities~\eqref{label 5},~\eqref{label 6} and \eqref{label 7} concludes
\begin{align*}
&\E_{\bm{x}_s \in C_s(k)}\E_{\mathtt{x}_s \in \mathcal{A}(\bm{x}_s)} \big\Vert f(\mathtt{x}_s) - \mu_s(k)\big\Vert_2^2 \\ &\leq \Big[2\eps + \mathcal{K}\delta_s +  2B_2\Big(1 - \sigma_s + \frac{R_s(\eps, f)}{p_s(k)}\Big)\Big]^2 + 4B_2^2\Big(1 - \sigma_s + \frac{R_s(\eps, f)}{p_s(k)}\Big) \\
&=4B_2^2\Big[\Big(1  - \sigma_s + \frac{\mathcal{K}\delta_s}{2B_2} + \frac{\eps}{B_2} + \frac{R_s(\eps, f)}{p_s(k)}\Big)^2 + \Big(1 - \sigma_s + \frac{R_s(\eps, f)}{p_s(k)}\Big)\Big]
\end{align*}
\end{proof}
\end{lemma}

Subsequently, we state Lemma~\ref{lemma: the effect of minimaxing our loss} to establish the connection between Adv-SSL and the requirements shown in Lemma~\ref{lemma: sufficient condition of small Err}.
Following lemma reveals the upstream task can indeed render the representation space well-structured. 
\begin{lemma}\label{lemma: the effect of minimaxing our loss}
Given a $(\sigma_s, \sigma_t, \delta_s, \delta_t)$-augmentation, if $d^* > K$ and the encoder $f$ with $B_1 \leq \norm{f}_2 \leq B_2$ is $\mathcal{K}$-Lipschitz continuous, then for any $\eps > 0$,
\begin{align*}
R^2_s(\eps, f) &\leq \frac{m^4}{\eps^2}\mathcal{L}_{\mathrm{align}}(f), \\
R^2_t(\eps, f) \leq \frac{m^4}{\eps^2}\mathcal{L}_{\mathrm{align}}(f) &+ \frac{8m^4}{\eps^2}B_2d^*M\mathcal{K}\epsilon_1 + \frac{4m^4}{\eps^2}B_2^2d^*K\epsilon_2,
\end{align*}
and
\begin{align*}
\max_{i \neq j}\big\vert\mu_t(i)^{\top}\mu_t(j)\big\vert \leq \sqrt{\frac{2}{\min_{i \neq j}p_s(i)p_s(j)}\Big\{\mathcal{R}(f) + \varphi(\sigma_s, \delta_s, \eps, f)\Big\}} + 2\sqrt{d^*}B_2M\mathcal{K}\epsilon_1.
\end{align*}
where $\varphi(\sigma_s, \delta_s, \eps, f)$ $:= 4B_2^2\Big[\Big(1 - \sigma_s + \frac{\mathcal{K}\delta_s + 2\eps}{2B_2}\Big)^2 + (1 - \sigma_s) + KR_s(\eps, f)\Big(3 - 2\sigma_s + \frac{\mathcal{K}\delta_s + 2\eps}{B_2}\Big) + R^2_s(\eps, f)\Big(\sum_{k=1}^K\frac{1}{p_s(k)}\Big)\Big] + B_2\big(\eps^2 + 4B_2^2R_s(\eps, f)\big)^{\frac{1}{2}}$.
\begin{proof}
Since the measure on $\mathcal{A}$ is uniform, we have
\begin{align*}
\E_{\mathtt{x}_{t,1}, \mathtt{x}_{t,2} \in \mathcal{A}(\bm{x}_t)}\big\Vert f(\mathtt{x}_{t,1}) - f(\mathtt{x}_{t,2})\big\Vert_2 = \frac{1}{m^2}\sum_{i,j = 1}^m\big\Vert f\big(A_{i}(\bm{x}_t)\big) - f\big(A_j(\bm{x}_t)\big)\big\Vert_2,
\end{align*}
hence,
\begin{align*}
\sup_{\mathtt{x}_{t,1}, \mathtt{x}_{t,2} \in \mathcal{A}(\bm{x}_t)}\big\Vert f(\mathtt{x}_{t,1}) - f(\mathtt{x}_{t,2})\big\Vert_2 &= \sup_{i, j \in [m]}\big\Vert f\big(A_i(\bm{x}_t)\big) - f\big(A_j(\bm{x}_t)\big)\big\Vert_2 \\
&\leq \sum_{i,j = 1}^m\big\Vert f\big(A_i(\bm{x}_t)\big) - f\big(A_j(\bm{x}_t)\big)\big\Vert_2 \\
&= m^2\E_{\mathtt{x}_{t,1}, \mathtt{x}_{t,2} \in \mathcal{A}(\bm{x}_t)}\big\Vert f(\mathtt{x}_{t,1}) - f(\mathtt{x}_{t,2})\big\Vert_2.
\end{align*}
Denote $S: = \big\{\bm{x}_t: \E_{\mathtt{x}_{t,1}, \mathtt{x}_{t,2} \in \mathcal{A}(\bm{x}_t)}\big\Vert f(\mathtt{x}_{t,1}) - f(\mathtt{x}_{t,2})\big\Vert_2 > \frac{\eps}{m^2}\big\}$, by the definition of $R_t(\eps, f)$ along with Markov inequality, we have
\begin{align}
\label{label 8}
R^2_t(\eps, f) &\leq \P^2_t(S) \leq \Big(\frac{\E_{\bm{x}_t \sim \P_t}\E_{\mathtt{x}_{t,1}, \mathtt{x}_{t,2} \in \mathcal{A}(\bm{x}_t)}\big\Vert f(\mathtt{x}_{t,1}) - f(\mathtt{x}_{t,2})\big\Vert_2}{\frac{\eps}{m^2}}\Big)^2 \\
&\leq \frac{\E_{\bm{x}_t \sim \P_t}\E_{\mathtt{x}_{t,1}, \mathtt{x}_{t,2} \in \mathcal{A}(\bm{x}_t)}\big\Vert f(\mathtt{x}_{t,1}) - f(\mathtt{x}_{t,2})\big\Vert^2_2}{\frac{\eps^2}{m^4}} \nonumber\\
&= \frac{m^4}{\eps^2}\E_{\bm{x}_t \sim \P_t}\E_{\mathtt{x}_{t,1}, \mathtt{x}_{t,2} \in \mathcal{A}(\bm{x}_t)}\big\Vert f(\mathtt{x}_{t,1}) - f(\mathtt{x}_{t,2})\big\Vert^2_2.
\end{align}
Apart from that, similar process yields the first inequity to be justified in Lemma~\ref{lemma: the effect of minimaxing our loss}:
\begin{align*}
R^2_s(\eps, f) \leq \frac{m^4}{\eps^2}\E_{\bm{x}_s \sim \P_s}\E_{\mathtt{x}_{s,1}, \mathtt{x}_{s,2} \in \mathcal{A}(\bm{x}_s)}\big\Vert f(\mathtt{x}_{s,1}) - f(\mathtt{x}_{s,2})\big\Vert_2^2 =  \frac{m^4}{\eps^2}\mathcal{L}_{\mathrm{align}}(f).
\end{align*}
Furthermore, we can turn out
\begin{align*}
&\E_{\bm{x}_t \sim \P_t}\E_{\mathtt{x}_{t,1},\mathtt{x}_{t,2} \in \mathcal{A}(\bm{x}_t)}\big\Vert f(\mathtt{x}_{t,1}) - f(\mathtt{x}_{t,2})\big\Vert_2^2 \nonumber \\ 
&= \mathop{\E}\limits_{\bm{x}_s}\mathop{\E}\limits_{\mathtt{x}_{s,1},\mathtt{x}_{s,2} \in \mathcal{A}(\bm{x}_s)}\big\Vert f(\mathtt{x}_{s,1}) - f(\mathtt{x}_{s,2})\big\Vert_2^2 + \mathop{\E}\limits_{\bm{x}_t}\mathop{\E}\limits_{\mathtt{x}_{t,1},\mathtt{x}_{t,2} \in \mathcal{A}(\bm{x}_t)}\big\Vert f(\mathtt{x}_{t,1}) - f(\mathtt{x}_{t,2})\big\Vert_2^2 \\
&\quad - \mathop{\E}\limits_{\bm{x}_s}\mathop{\E}\limits_{\mathtt{x}_{s,1},\mathtt{x}_{s,2} \in \mathcal{A}(\bm{x}_s)}\big\Vert f(\mathtt{x}_{s,1}) - f(\mathtt{x}_{s,2})\big\Vert_2^2\\ 
&= \frac{1}{m^2}\sum_{i,j=1}^m\Big\{\E_{\bm{x}_t \sim \P_t}\big\Vert f\big(A_i(\bm{x}_t)\big) - f\big(A_j(\bm{x}_t)\big)\big\Vert_2^2 - \E_{\bm{x}_s \sim \P_s}\big\Vert f\big(A_i(\bm{x}_s)\big) - f\big(A_j(\bm{x}_s)\big)\big\Vert_2^2\Big\}  \\
&\quad + \E_{\bm{x}_s \sim \P_s}\E_{\mathtt{x}_{s,1},\mathtt{x}_{s,2} \in \mathcal{A}(\bm{x}_s)}\big\Vert f(\mathtt{x}_{s,1}) - f(\mathtt{x}_{s,2})\big\Vert_2^2 \\
&= \frac{1}{m^2}\sum_{i,j=1}^m\sum_{l=1}^{d^*}\Big[\E_{\bm{x}_t \sim \P_t}\Big\{f_l\big(A_i(\bm{x}_t)\big) - f_l\big(A_j(\bm{x}_t)\big)\Big\}^2 - \E_{\bm{x}_s \sim \P_s}\Big\{f_l\big(A_i(\bm{x}_s)\big) - f_l\big(A_j(\bm{x}_s)\big)\Big\}^2\Big] \\
&\quad + \E_{\bm{x}_s \sim \P_s}\E_{\mathtt{x}_{s,1},\mathtt{x}_{s,2} \in \mathcal{A}(\bm{x}_s)}\big\Vert f(\mathtt{x}_{s,1}) - f(\mathtt{x}_{s,2})\big\Vert_2^2,
\end{align*}
we subsequently focus on dealing with the first term. Since for all $\gamma \in [m], \beta \in [m]$ and $l \in [d^*]$,
\begin{align*}
&\E_{\bm{x}_t \sim \P_t}\Big\{f_l\big(A_i(\bm{x}_t)\big) - f_l\big(A_j(\bm{x}_t)\big)\Big\}^2 - \E_{\bm{x}_s \sim \P_s}\Big\{f_l\big(A_i(\bm{x}_s)\big) - f_l\big(A_j(\bm{x}_s)\big)\Big\}^2\\
&= \sum_{k=1}^{K}\Big[p_t(k)\E_{\bm{x}_t\in C_t(k)}\big\{f_l\big(A_i(\bm{x}_t)\big) - f_l\big(A_j(\bm{x}_t)\big)\big\}^2 - p_s(k)\E_{\bm{x}_s \in C_s(k)}\big\{f_l\big(A_i(\bm{x}_s)\big) - f_l\big(A_j(\bm{x}_s)\big)\big\}^2\Big] \\
&= \sum_{k=1}^{K}\Big[p_t(k)\Big\{\E_{\bm{x}_t\in C_t(k)}\big\{f_l\big(A_i(\bm{x}_t)\big) - f_l\big(A_j(\bm{x}_t)\big)\big\}^2 - \E_{\bm{x}_s \in C_s(k)}\underbrace{\big\{f_l\big(A_i(\bm{x}_s)\big) - f_l\big(A_j(\bm{x}_s)\big)\big\}^2}_{g(\bm{x}_s)}\Big\} \\
&\quad + \big\{p_t(k) - p_s(k)\big\}\E_{\bm{x}_s \in C_s(k)}\Big\{f_l\big(A_i(\bm{x}_s)\big) - f_l\big(A_j(\bm{x}_s)\big)\Big\}^2\Big] \\
&\leq 8B_2M\mathcal{K}\epsilon_1 + 4B_2^2K\epsilon_2.
\end{align*}
To obtain the last inequality, it suffices to show $g(\bm{x}_s) \in \mathrm{Lip}(8B_2M\mathcal{K})$. In fact, we know $\forall l \in [d^*], f_l \in \mathrm{Lip}(\mathcal{K})$ as $f \in \mathrm{Lip}(\mathcal{K})$, along with the fact that $A_i(\cdot)$ and $A_j(\cdot)$ are both $M$-Lipschitz continuous according to Assumption~\ref{assumption: lip augmentation}, we can conclude $f_l\big(A_i(\cdot)\big) - f_l\big(A_j(\cdot)\big) \in \mathrm{Lip}(2M\mathcal{K})$. Additionally, note that $\big\vert f_l\big(A_i(\cdot)\big) - f_l\big(A_j(\cdot)\big)\big\vert \leq 2B_2$ as $\norm{f}_2 \leq B_2$, we can turn out  outermost quadratic function remains locally $4B_2$-Lipschitz continuity in $[-2B_2, 2B_2]$, which implies that $g \in \mathrm{Lip}(8B_2M\mathcal{K})$. Furthermore, by the definition of Wasserstein distance, we yield
\begin{align*}
    &\sum_{k=1}^{K}\Big[p_t(k)\Big(\E_{\bm{x}_t\in C_t(k)}\big\{f_l\big(A_i(\bm{x}_t)\big) - f_l\big(A_j(\bm{x}_t)\big)\big\}^2 - \E_{\bm{x}_s \in C_s(k)}\big\{f_l\big(A_i(\bm{x}_s)\big) - f_l\big(A_j(\bm{x}_s)\big)\big\}^2\Big)\Big] \\
    &\leq 8B_2M\mathcal{K}\epsilon_1\sum_{k=1}^{K}p_t(k) = 8B_2M\mathcal{K}\epsilon_1,
\end{align*}
As for the second term in the last inequality, note that $f_l\big(A_i(\bm{x}_s)\big) - f_l\big(A_j(\bm{x}_s)\big) \leq 2B_2$ to yield
\begin{align*}
    \sum_{k=1}^{K}\Big[\big\{p_t(k) - p_s(k)\big\}\E_{\bm{x}_s \in C_s(k)}\big\{f_l\big(A_i(\bm{x}_s)\big) - f_l\big(A_j(\bm{x}_s)\big)\big\}^2\Big] \leq 4B_2^2K\epsilon_2.
\end{align*}
Therefore,
\begin{align}
\label{label 9}
\E_{\bm{x}_t \sim \P_t}\E_{\mathtt{x}_{t,1},\mathtt{x}_{t,2} \in \mathcal{A}(\bm{x}_t)}\Big\Vert f(\mathtt{x}_{t,1}) - f(\mathtt{x}_{t,2})\Big\Vert_2^2 &\leq \E_{\bm{x}_s \sim \P_s}\E_{\mathtt{x}_{s,1},\mathtt{x}_{s,2} \in \mathcal{A}(\bm{x}_s)}\Big\Vert f(\mathtt{x}_{s,1}) - f(\mathtt{x}_{s,2})\Big\Vert_2^2 \nonumber \\
&\quad + 8B_2d^*M\mathcal{K}\epsilon_1 + 4B_2^2d^*K\epsilon_2.
\end{align}
Combining~\eqref{label 8}~and\eqref{label 9} turns out the second inequality of Lemma~\ref{lemma: the effect of minimaxing our loss}.
\begin{align*}
R^2_t(\eps, f) \leq \frac{m^4}{\eps^2}\mathcal{L}_{\mathrm{align}}(f) + \frac{8m^4}{\eps^2}B_2d^*M\mathcal{K}\epsilon_1 + \frac{4m^4}{\eps^2}B_2^2d^*K\epsilon_2.
\end{align*}
To justify the third part of this Lemma, first recall Lemma~\ref{lemma: difference between mu_s and mu_t} that $\forall k \in [K]$, $\big\Vert \mu_s(k) - \mu_t(k)\big\Vert_2 \leq \sqrt{d^*}M\mathcal{K}\epsilon_1$. Hence, for any $i \neq j$, we have 
\begin{align*}
&\big\vert\mu_t(i)^{\top}\mu_t(j) - \mu_s(i)^{\top}\mu_s(j)\big\vert = \big\vert\mu_t(i)^{\top}\mu_t(j) - \mu_t(i)^{\top}\mu_s(j) + \mu_t(i)^{\top}\mu_s(j) - \mu_s(i)^{\top}\mu_s(j)\big\vert \\
&\leq \big\Vert\mu_t(i)\big\Vert_2\big\Vert\mu_t(j) - \mu_s(j)\big\Vert_2 + \big\Vert\mu_s(j)\big\Vert_2\big\Vert\mu_t(i) - \mu_s(i)\big\Vert_2 \leq 2\sqrt{d^*}B_2M\mathcal{K}\epsilon_1,
\end{align*}
so that we can further yield the relationship of class center divergence between the source domain and the target domain as follows:
\begin{align}
\label{label 10}
\max_{i \neq j}\big\vert\mu_t(i)^{\top}\mu_t(j)\big\vert \leq \max_{i \neq j}\big\vert\mu_s(i)^{\top}\mu_s(j)\big\vert + 2\sqrt{d^*}B_2M\mathcal{K}\epsilon_1.
\end{align}
We next derive the upper bound of $\max_{i \neq j}\big\vert\mu_s(i)^{\top}\mu_s(j)\big\vert$. To this end, let $U = \big(\sqrt{p_s(1)}\mu_s(1), \ldots, \sqrt{p_s(K)}\mu_s(K)\big) \in \R^{d^* \times K}$, then 
\begin{align*}
\Big\Vert\sum_{k=1}^Kp_s(k)\mu_s(k)\mu_s(k)^{\top} - I_{d^*}\Big\Vert_F^2 &= \Big\Vert UU^{\top} - I_{d^*}\Big\Vert^2_F \\
&= \text{\rm Tr}(UU^{\top}UU^{\top} - 2UU^{\top} + I_{d^*}) \tag{$\norm{A}^2_F = \mathrm{Tr}(A^{\top}A)$}\\
&= \text{\rm Tr}(U^{\top}UU^{\top}U - 2U^{\top}U) + \text{\rm Tr}(I_K) + d^* - K \tag{$\mathrm{Tr}(AB) = \mathrm{Tr}(BA)$}\\
&\geq \Big\Vert U^{\top}U - I_K\Big\Vert_F^2  \tag{$d^* > K$}\\
& = \sum_{i,j = 1}^K\big(\sqrt{p_s(i)p_s(j)}\mu_s(i)^{\top}\mu_s(j) - \delta_{kl}\big)^2 \\
&\geq p_s(i)p_s(j)\big(\mu_s(i)^{\top}\mu_s(j)\big)^2.
\end{align*}
Therefore,
\begin{small}
\begin{align}
\label{label 11}
&\Big(\mu_s(i)^{\top}\mu_s(j)\Big)^2 \leq \frac{\Big\Vert\sum_{k=1}^Kp_s(k)\mu_s(k)\mu_s(k)^{\top} - I_{d^*}\Big\Vert_{F}^2}{p_s(i)p_s(j)} \nonumber \\
&= \frac{\Big\Vert\mathop{\E}\limits_{\bm{x}_s}\mathop{\E}\limits_{\mathtt{x}_{s,1}, \mathtt{x}_{s,2} \in \mathcal{A}(\bm{x}_s)}\big\{f(\mathtt{x}_{s,1})f(\mathtt{x}_{s,2})^{\top}\big\} - I_{d^*} + \sum\limits_{k=1}^Kp_s(k)\mu_s(k)\mu_s(k)^{\top} - \mathop{\E}\limits_{\bm{x}_s}\mathop{\E}\limits_{\mathtt{x}_{s,1}, \mathtt{x}_{s,2} \in \mathcal{A}(\bm{x}_s)}\big\{f(\mathtt{x}_{s,1})f(\mathtt{x}_{s,2})^{\top}\big\} \Big\Vert_{F}^2}{p_s(i)p_s(j)} \nonumber \\
&\leq \frac{2\Big\Vert\mathop{\E}\limits_{\bm{x}_s}\mathop{\E}\limits_{\mathtt{x}_{s,1}, \mathtt{x}_{s,2} \in \mathcal{A}(\bm{x}_s)}\{f(\mathtt{x}_{s,1})f(\mathtt{x}_{s,2})^{\top}\} - I_{d^*}\Big\Vert^2_F + 2 \Big\Vert\sum\limits_{k=1}^Kp_s(k)\mu_s(k)\mu_s(k)^{\top} - \mathop{\E}\limits_{\bm{x}_s}\mathop{\E}\limits_{\mathtt{x}_{s,1}, \mathtt{x}_{s,2} \in \mathcal{A}(\bm{x}_s)}\big\{f(\mathtt{x}_{s,1})f(\mathtt{x}_{s,2})^{\top}\big\}\Big\Vert^2_F}{p_s(i)p_s(j)}
\end{align}
\end{small}
For the term $\Big\Vert\sum_{k=1}^Kp_s(k)\mu_s(k)\mu_s(k)^{\top} - \E_{\bm{x}_s \sim \P_s}\E_{\mathtt{x}_{s,1}, \mathtt{x}_{s,2} \in \mathcal{A}(\bm{x}_s)}\big\{f(\mathtt{x}_{s,1})f(\mathtt{x}_{s,2})^{\top}\big\}\Big\Vert^2_F$, note that
\begin{align}
\label{label 12}
&= \sum_{k=1}^Kp_s(k)\E_{\bm{x}_s \in C_s(k)}\E_{\mathtt{x}_{s,1}\in \mathcal{A}(\bm{x}_s)}\big\{f(\mathtt{x}_{s,1})f(\mathtt{x}_{s,1})^{\top}\big\} - \sum_{k=1}^Kp_s(k)\mu_s(k)\mu_s(k)^{\top}  \nonumber\\
&\quad + \sum_{k=1}^Kp_s(k)\E_{\bm{x}_s \in C_s(k)}\E_{\mathtt{x}_{s,1}, \mathtt{x}_{s,2} \in \mathcal{A}(\bm{x}_s)}\Big[f(\mathtt{x}_{s,1})\big\{f(\mathtt{x}_{s,2}) - f(\mathtt{x}_{s,1})\big\}^{\top}\Big] \nonumber \\
&= \sum_{k=1}^Kp_s(k)\E_{\bm{x}_s \in C_s(k)}\E_{\mathtt{x}_{s,1} \in \mathcal{A}(\bm{x}_s)}\Big[\{f(\mathtt{x}_{s,1}) - \mu_s(k)\}\big\{f(\mathtt{x}_{s,1}) - \mu_s(k)\big\}^{\top}\Big] \\
&\quad + \E_{\bm{x}_s \sim \P_s}\E_{\mathtt{x}_{s,1}, \mathtt{x}_{s,2} \in \mathcal{A}(\bm{x}_s)}\Big[f(\mathtt{x}_{s,1})\big\{f(\mathtt{x}_{s,2}) - f(\mathtt{x}_{s,1})\big\}^{\top}\Big],
\end{align}
where the last equation is derived from
\begin{align*}
&\E_{\bm{x}_s \in C_s(k)}\E_{\mathtt{x}_{s,1} \in \mathcal{A}(\bm{x}_s)}\big\{f(\mathtt{x}_{s,1})f(\mathtt{x}_{s,1})^{\top}\big\} - \mu_s(k)\mu_s(k)^{\top} \\
&= \E_{\bm{x}_s \in C_s(k)}\E_{\mathtt{x}_{s,1} \in \mathcal{A}(\bm{x}_s)}\big\{f(\mathtt{x}_{s,1})f(\mathtt{x}_{s,1})^{\top}\big\} + \mu_s(k)\mu_s(k)^{\top} \\
&\quad - \big(\E_{\bm{x}_s \in C_s(k)}\E_{\mathtt{x}_{s,1} \in \mathcal{A}(\bm{x}_s)}\big\{f(\mathtt{x}_{s,1})\}\big)\mu_s(k)^{\top}  - \mu_s(k)\big(\E_{\bm{x}_s \in C_s(k)}\E_{\mathtt{x}_{s,1} \in \mathcal{A}(\bm{x}_s)}\big\{f(\mathtt{x}_{s,1})\big\}\big)^{\top} \\
&= \E_{\bm{x}_s \in C_s(k)}\E_{\mathtt{x}_{s,1} \in \mathcal{A}(\bm{x}_s)}\Big[\big(f(\mathtt{x}_{s,1}) - \mu_s(k)\big)\big(f(\mathtt{x}_{s,1}) - \mu_s(k)\big)^{\top}\Big].
\end{align*}
So its norm is
\begin{align*}
&\Big\Vert \sum_{k=1}^Kp_s(k)\mu_s(k)\mu_s(k)^{\top} - \E_{\bm{x}_s \sim \P_s}\E_{\mathtt{x}_{s,1}, \mathtt{x}_{s,2} \in \mathcal{A}(\bm{x}_s)}\big\{f(\mathtt{x}_{s,1})f(\mathtt{x}_{s,2})^{\top}\big\} \Big\Vert_F \nonumber \\
&\leq \sum_{k=1}^Kp_s(k)\E_{\bm{x}_s \in C_s(k)}\E_{\mathtt{x}_{s,1} \in \mathcal{A}(\bm{x}_s)}\Big[\Big\Vert\{f(\mathtt{x}_{s,1}) - \mu_s(k)\big\}\big\{f(\mathtt{x}_{s,1}) - \mu_s(k)\big\}^{\top}\Big\Vert_F\Big] \\
&\quad + \E_{\bm{x}_s \sim \P_s}\E_{\mathtt{x}_{s,1}, \mathtt{x}_{s,2} \in \mathcal{A}(\bm{x}_s)}\Big[\Big\Vert f(\mathtt{x}_{s,1})\big\{f(\mathtt{x}_{s,2}) - f(\mathtt{x}_{s,1})\big\}^{\top}\Big\Vert_F\Big] \nonumber \\
&\leq \sum_{k=1}^Kp_s(k)\mathop{\E}\limits_{\bm{x}_s \in C_s(k)}\mathop{\E}\limits_{\mathtt{x}_{s,1} \in \mathcal{A}(\bm{x}_s)}\Big\{\big\Vert f(\mathtt{x}_{s,1}) - \mu_s(k)\big\Vert_2^2\Big\} + \mathop{\E}\limits_{\bm{x}_s}\mathop{\E}\limits_{\mathtt{x}_{s,1}, \mathtt{x}_{s,2} \in \mathcal{A}(\bm{x}_s)}\Big\{\big\Vert f(\mathtt{x}_{s,1})\big\Vert_2\big\Vert f(\mathtt{x}_{s,2}) - f(\mathtt{x}_{s,1})\big\Vert_2\Big\} \nonumber \\
&\leq \sum_{k=1}^Kp_s(k)\E_{\bm{x}_s \in C_s(k)}\E_{\mathtt{x}_{s,1} \in \mathcal{A}(\bm{x}_s)}\Big\{\big\Vert f(\mathtt{x}_{s,1}) - \mu_s(k)\big\Vert_2^2\Big\} \\
&\quad + \Big\{\E_{\bm{x}_s \sim \P_s}\E_{\mathtt{x}_{s,1} \in \mathcal{A}(\bm{x}_s)}\big\Vert f(\mathtt{x}_{s,1})\big\Vert_2^2\Big\}^{\frac{1}{2}}\Big\{\E_{\bm{x}_s \sim \P_s}\E_{\mathtt{x}_{s,1}, \mathtt{x}_{s,2} \in \mathcal{A}(\bm{x}_s)}\big\Vert f(\mathtt{x}_{s,2}) - f(\mathtt{x}_{s,1})\big\Vert^2_2\Big\}^{\frac{1}{2}} \nonumber \\
&\leq \sum_{k=1}^Kp_s(k)\E_{\bm{x}_s \in C_s(k)}\E_{\mathtt{x}_{s,1} \in \mathcal{A}(\bm{x}_s)}\Big[\big\Vert f(\mathtt{x}_{s,1}) - \mu_s(k)\big\Vert_2^2\Big] \\
&\quad + B_2\Big(\eps^2 + \E_{\bm{x}_s \sim \P_s}\E_{\mathtt{x}_{s,1}, \mathtt{x}_{s,2} \in \mathcal{A}(\bm{x}_s)}\big[\big\Vert f(\mathtt{x}_{s,2}) - f(\mathtt{x}_{s,1})\big\Vert^2_2\1\big\{\bm{x}_s \not\in S_s(\eps, f)\big\}\big]\Big)^{\frac{1}{2}} \nonumber \\
&\Big(\text{Review }S_s(\eps, f):= \big\{\bm{x}_s \in \mathcal{X}_s: \sup_{\mathtt{x}_{s,1}, \mathtt{x}_{s,2} \in \mathcal{A}(\bm{x}_s)} \big\Vert f(\mathtt{x}_{s,1}) - f(\mathtt{x}_{s,2})\big\Vert_2 \leq \eps\big\}\Big) \nonumber \\
&\leq \sum_{k=1}^Kp_s(k)\E_{\bm{x}_s \in C_s(k)}\E_{\mathtt{x}_{s,1} \in \mathcal{A}(\bm{x}_s)}\Big\{\big\Vert f(\mathtt{x}_{s,1}) - \mu_s(k)\big\Vert_2^2\Big\} \\
&\quad + B_2\Big(\eps^2 + 4B_2^2\E_{\bm{x}_s \sim \P_s}\Big[\1\{\bm{x}_s \not\in S_s(\eps, f)\}\Big]\Big)^{\frac{1}{2}} \nonumber \\
&= \sum_{k=1}^K p_s(k)\E_{\bm{x}_s \in C_s(k)}\E_{\mathtt{x}_{s,1} \in \mathcal{A}(\bm{x}_s)}\Big[\big\Vert f(\mathtt{x}_{s,1}) - \mu_s(k)\big\Vert_2^2\Big]  + B_2\Big(\eps^2 + 4B_2^2R_s(\eps, f)\Big)^{\frac{1}{2}} \nonumber \\
&\leq 4B_2^2\sum_{k=1}^Kp_s(k)\Big\{\Big(1  - \sigma_s + \frac{\mathcal{K}\delta_s}{2B_2} + \frac{\eps}{B_2} + \frac{R_s(\eps, f)}{p_s(k)}\Big)^2 + \Big(1 - \sigma_s + \frac{R_s(\eps, f)}{p_s(k)}\Big)\Big\} + B_2\big\{\eps^2 + 4B_2^2R_s(\eps, f)\big\}^{\frac{1}{2}} \tag{Lemma \ref{lemma: bound variance of augmented view}}\\
&= 4B_2^2\Big\{\Big(1 - \sigma_s + \frac{\mathcal{K}\delta_s + 2\eps}{2B_2}\Big)^2 + (1 - \sigma_s) + KR_s(\eps, f)\Big(3 - 2\sigma_s + \frac{\mathcal{K}\delta_s + 2\eps}{B_2}\Big) \\
&\quad + R^2_s(\eps, f)\Big(\sum_{k=1}^K\frac{1}{p_s(k)}\Big)\Big\} + B_2\Big\{\eps^2 + 4B_2^2R_s(\eps, f)\Big\}^{\frac{1}{2}} 
\end{align*}
If we define $\varphi(\sigma_s, \delta_s, \eps, f) := 4B_2^2\Big\{\Big(1 - \sigma_s + \frac{\mathcal{K}\delta_s + 2\eps}{2B_2}\Big)^2 + (1 - \sigma_s) + KR_s(\eps, f)\Big(3 - 2\sigma_s + \frac{\mathcal{K}\delta_s + 2\eps}{B_2}\Big) + R^2_s(\eps, f)\Big(\sum_{k=1}^K\frac{1}{p_s(k)}\Big)\Big\} + B_2\Big(\eps^2 + 4B_2^2R_s(\eps, f)\Big)^{\frac{1}{2}}$, above derivation implies
\begin{align}
\label{label 13}
\Big\Vert\sum_{k=1}^Kp_s(k)\mu_s(k)\mu_s(k)^{\top} - \E_{\bm{x}_s \sim \P_s}\E_{\mathtt{x}_{s,1}, \mathtt{x}_{s,2} \in \mathcal{A}(\bm{x}_s)}\big\{f(\mathtt{x}_{s,1})f(\mathtt{x}_{s,2})^{\top}\big\} \Big\Vert_F  \leq \varphi(\sigma_s, \delta_s, \eps, f).
\end{align}
Besides that, Note that  
\begin{align}
\label{label 14}
\mathcal{R} = \Big\Vert\E_{\bm{x}_s \sim \P_s}\E_{\mathtt{x}_{s,1}, \mathtt{x}_{s,2} \in \mathcal{A}(\bm{x}_s)}\big\{f(\mathtt{x}_{s,1})f(\mathtt{x}_{s,2})^{\top}\big\} - I_{d^*} \Big\Vert_F^2,
\end{align}
Combining~\eqref{label 11},~\eqref{label 12},~\eqref{label 13} and~\eqref{label 14} yields for any $i \neq j$
\begin{align*}
\big(\mu_s(i)^{\top}\mu_s(j)\big)^2 \leq \frac{2}{p_s(i)p_s(j)}\Big\{\mathcal{R}(f) + \varphi(\sigma_s, \delta_s, \eps, f)\Big\},
\end{align*}
which implies that
\begin{align*}
\max_{i \neq j}\Big\vert\mu_s(i)^{\top}\mu_s(j)\Big\vert \leq \sqrt{\frac{2}{\min_{i \neq j}p_s(i)p_s(j)}\Big\{\mathcal{R}(f) + \varphi(\sigma_s, \delta_s, \eps, f)\Big\}}.
\end{align*}
So we can get what we desired according to (\ref{label 10})
\begin{align*}
\max_{i \neq j}\Big\vert\mu_t(i)^{\top}\mu_t(j)\big\vert \leq \sqrt{\frac{2}{\min_{i \neq j}p_s(i)p_s(j)}\Big\{\mathcal{R}(f) + \varphi(\sigma_s, \delta_s, \eps, f)\Big\}} + 2\sqrt{d^*}B_2M\mathcal{K}\epsilon_1.
\end{align*}
\end{proof}
\end{lemma}

Next we present the population theorem as follows, which is a direct corollary of Lemma~\ref{lemma: the effect of minimaxing our loss} because of the facts that $\mathcal{R}(f) \lesssim \mathcal{L}(f)$ and $\mathcal{L}_{\mathrm{align}}(f) \lesssim \mathcal{L}(f)$.
\begin{lemma}\label{lemma: pop theorem}
    Given a $(\sigma_s, \sigma_t, \delta_s, \delta_t)$-augmentation, if $d^* > K$, Assumption~\ref{assumption: lip augmentation} holds and the encoder $f$ with $B_1 \leq \norm{f}_2 \leq B_2$ is $\mathcal{K}$-Lipschitz continuous, then for any $\eps > 0$,
    \begin{align*}
        \max_{i \neq j}\big\vert\mu_t(i)^{\top}\mu_t(j)\big\vert \lesssim \sqrt{\mathcal{L}(f) + \varphi(\sigma_s, \delta_s, \eps, f)} + \mathcal{K}\epsilon_{1}.
    \end{align*}
    Furthermore, if $\max_{i\neq j}\mu_t(i)^{\top}\mu_t(j) < B_2^2\psi(\sigma_t, \delta_t, \eps, f)$, then the misclassification rate of $Q_f$
    \begin{align*}
        \mathrm{Err}(Q_f) \lesssim (1 - \sigma_t) + \big\{\mathcal{L}_{\mathrm{align}}(f) + \mathcal{K}\epsilon_1 + \epsilon_2\big\}/\eps^2,
    \end{align*}
    where the specific formulations of $\varphi(\sigma_s, \delta_s, \eps, f)$ and $\psi(\sigma_t, \delta_t, \eps, f)$ can be found in Lemma~\ref{lemma: the effect of minimaxing our loss} and Lemma~\ref{lemma: sufficient condition of small Err}, respectively.
\end{lemma}

We apply Lemma~\ref{lemma: pop theorem} to the optimizer at sample level $\hat{f}_{n_s}$ to yield following corollary~\ref{lemma: E[Err] < E[L(f^hat)]}.
\begin{corollary} \label{lemma: E[Err] < E[L(f^hat)]}
Given a $(\sigma_s, \sigma_t, \delta_s, \delta_t)$-augmentation, for any $\eps > 0$, we have
    \begin{align}\label{eq: E[divergence] < E[L]}
        \E_{\widetilde{D}_s}\big\{\max_{i \neq j}\abs{\mu_t(i)^{\top}\mu_t(j)}\big\} \lesssim \sqrt{\E_{\widetilde{D}_s}\big\{\mathcal{L}(\hat{f}_{n_s})\big\} + \E_{\widetilde{D}_s}\big\{\varphi(\sigma_s, \delta_s, \eps, \hat{f}_{n_s})\big\}} + \mathcal{K}\epsilon_1.
    \end{align}
where $\E_{\widetilde{D}_s}\big\{\varphi\big(\sigma_s, \delta_s, \eps, R_s(\eps, \hat{f}_{n_s})\big)\big\} \lesssim \big(1 - \sigma_s + \mathcal{K}\delta_s + 2\eps\big)^2 + \frac{1}{\eps}\sqrt{\E_{\widetilde{D}_s}\big\{\mathcal{L}(\hat{f}_{n_s})\big\}}\big(3 - 2\sigma_s + \mathcal{K}\delta_s + 2\eps\big) + \frac{1}{\eps^2}\E_{\widetilde{D}_s}\big\{\mathcal{L}(\hat{f}_{n_s})\big\} + (1 - \sigma_s) + \Big(\eps^2 + \frac{1}{\eps}\sqrt{\E_{\widetilde{D}_s}\big\{\mathcal{L}(\hat{f}_{n_s})\big\}}\Big)^{\frac{1}{2}}$.
Furthermore, if $\max_{i \neq j}\abs{\mu_t(i)^{\top}\mu_t(j)} < B_2^2\psi(\sigma_t, \delta_t, \eps, f)$, then
\begin{align}\label{eq: E[Err] < (1 - σ) + E[L]}
\E_{\widetilde{D}_s}\big\{\mathrm{Err}(Q_{\hat{f}_{n_s}})\big\} \lesssim (1 - \sigma_t) + \frac{1}{\eps}\sqrt{\E_{\widetilde{D}_s}\big\{\mathcal{L}(\hat{f}_{n_s})\big\} + \mathcal{K}\epsilon_1 + \epsilon_2},
\end{align}
In addition, the following inequalities always hold
\begin{align}\label{eq: E[Rs] < E[L]}
\E_{\widetilde{D}_s}\big\{R_s(\eps, \hat{f}_{n_s})\big\} &\lesssim \frac{1}{\eps}\sqrt{\E_{\widetilde{D}_s}\big\{\mathcal{L}(\hat{f}_{n_s})\big\}} 
\end{align}
\begin{align}\label{eq: E[Rt] < E[L]}
\E_{\widetilde{D}_s}\{R_t(\eps, \hat{f}_{n_s})\} &\lesssim \frac{1}{\eps}\sqrt{\E_{\widetilde{D}_s}\{\mathcal{L}(\hat{f}_{n_s})\} + \mathcal{K}\epsilon_1 + \epsilon_2}.
\end{align}
\begin{proof}

Applying Lemma~\ref{lemma: the effect of minimaxing our loss} to $\hat{f}_{n_s}$ yields
\begin{equation}
\label{eq: squared R_s(eps, f) bound by L}
R_s^2(\eps, \hat{f}_{n_s}) \leq \frac{m^4}{\eps^2}\mathcal{L}(\hat{f}_{n_s})
\end{equation}
\begin{equation}
\label{eq: squared R_t(eps, f) bound by L}
R^2_t(\eps, \hat{f}_{n_s}) \leq \frac{m^4}{\eps^2}\mathcal{L}(\hat{f}_{n_s}) + \frac{8m^4}{\eps^2}B_2d^*M\mathcal{K}\epsilon_1 + \frac{4m^4}{\eps^2}B_2^2d^*K\epsilon_2
\end{equation}
and
\begin{align}
\label{eq: divergence bound by L}
\max_{i \neq j}\big\vert\mu_t(i)^{\top}\mu_t(j)\big\vert &\leq \sqrt{\frac{2}{\min_{i \neq j}p_s(i)p_s(j)}\Big(\frac{1}{\lambda}\mathcal{L}(\hat{f}_{n_s}) + \varphi(\sigma_s, \delta_s, \eps, \hat{f}_{n_s})\Big)} + 2\sqrt{d^*}B_2M\mathcal{K}\epsilon_1
\end{align}
Take the expectation with respect to $\widetilde{D}_s$ on both sides of~\eqref{eq: squared R_s(eps, f) bound by L},~\eqref{eq: squared R_t(eps, f) bound by L}, and~\eqref{eq: divergence bound by L}, using Jensen's inequality to obtain~\eqref{eq: E[divergence] < E[L]},~\eqref{eq: E[Rs] < E[L]} and~\eqref{eq: E[Rt] < E[L]}.
where $\E_{\widetilde{D}_s}\{\varphi(\sigma_s, \delta_s, \eps, \hat{f}_{n_s})\} = 4B_2^2\Big[\Big(1 - \sigma_s + \frac{\mathcal{K}\delta_s + 2\eps}{2B_2}\Big)^2 + (1 - \sigma_s) + K\E_{\widetilde{D}_s}\big\{R_s(\eps, \hat{f}_{n_s})\big\}\Big(3 - 2\sigma_s + \frac{\mathcal{K}\delta_s + 2\eps}{B_2}\Big) + \E_{\widetilde{D}_s}\big\{R^2_s(\eps, \hat{f}_{n_s})\big\}\Big(\sum_{k=1}^K\frac{1}{p_s(k)}\Big)\Big] + B_2\E_{\widetilde{D}_s}\Big[\big\{\eps^2 + 4B_2^2R_s(\eps, \hat{f}_{n_s})\big\}^{\frac{1}{2}}\Big]$. In this regard, further by Jensen inequality, we know that
\begin{align}\label{eq: E[varphi] < E[L]}
&\E_{\widetilde{D}_s}\big\{\varphi(\sigma_s, \delta_s, \eps, R_s(\eps, \hat{f}_{n_s})\big)\big\} \leq 4B_2^2\Big[\Big(1 - \sigma_s + \frac{\mathcal{K}\delta_s + 2\eps}{2B_2}\Big)^2 + (1 - \sigma_s) + K\E_{\widetilde{D}_s}\big\{R_s(\eps, \hat{f}_{n_s})\big\}\nonumber\\
&\quad \Big(3 - 2\sigma_s + \frac{\mathcal{K}\delta_s + 2\eps}{B_2}\Big) + \E_{\widetilde{D}_s}\big\{R^2_s(\eps, \hat{f}_{n_s})\big\}\Big(\sum_{k=1}^K\frac{1}{p_s(k)}\Big)\Big] + B_2\Big[\eps^2 + 4B_2^2\E_{\widetilde{D}_s}\big\{R_s(\eps, \hat{f}_{n_s})\big\}\Big]^{\frac{1}{2}} \nonumber \\
&\leq 4B_2^2\Big[\Big(1 - \sigma_s + \frac{\mathcal{K}\delta_s + 2\eps}{2B_2}\Big)^2 + \frac{Km^2}{\eps}\sqrt{\E_{\widetilde{D}_s}\{\mathcal{L}(\hat{f}_{n_s})\}}\Big(3 - 2\sigma_s + \frac{\mathcal{K}\delta_s + 2\eps}{B_2}\Big) \nonumber \\
&\quad + \frac{m^4}{\eps^2}\E_{\widetilde{D}_s}\big\{\mathcal{L}(\hat{f}_{n_s})\big\}\Big(\sum_{k=1}^K\frac{1}{p_s(k)}\Big)\Big] + \big(1 - \sigma_s\big) + B_2\Big(\eps^2 + \frac{4B_2^2m^2}{\eps}\sqrt{\E_{\widetilde{D}_s}\big\{\mathcal{L}(\hat{f}_{n_s})\big\}}\Big)^{\frac{1}{2}}.
\end{align}
which is same as what we desired.

Moreover, since Lemma \ref{lemma: sufficient condition of small Err} reveals that if $\max_{i \neq j}\big\vert\mu_t(i)^{\top}\mu_t(j)\big\vert < B_2^2\psi(\sigma_t, \delta_t, \eps, \hat{f}_{n_s})$, then 
$\mathrm{Err}(Q_{\hat{f}_{n_s}}) \leq (1 - \sigma_t) + R_t(\eps, \hat{f}_{n_s})$. Combining with what we have had to yield~\ref{eq: E[Err] < (1 - σ) + E[L]}, which completes the proof.
\end{proof}
\end{corollary}
Above corollary~\ref{lemma: E[Err] < E[L(f^hat)]} reveals the necessity of exploring the sample complexity of $\E_{\widetilde{D}_s}\big\{\mathcal{L}(\hat{f}_{n_s})\big\}$ for proving Theorem~\ref{theorem: sample theorem}. To this end, we need to introduce some basic concepts of learning theory in advance.
\subsection{The sample complexity of \texorpdfstring{$\E_{\widetilde{D}_s}\big\{\mathcal{L}(\hat{f}_{n_s})\big\}$}{E[L]}}
\label{Preliminaries for the proof of main theorem 2}

Recall that for given $\bm{x}_s \in \mathcal{X}_s$, $\mathtt{x}_{s,1}, \mathtt{x}_{s,2}$ is uniformly and independently sampled from $A(\bm{x}_s)$, we denote $\tilde{\mathtt{x}}_s = (\mathtt{x}_{s,1}, \mathtt{x}_{s,2}) \in \R^{2d^*}$. Moreover, we denote $\ell(\tilde{\mathtt{x}}_s, G) := \big\Vert f(\mathtt{x}_{s,1}) - f(\mathtt{x}_{s,2})\big\Vert^2_2 + \lambda \InnerProduct{f(\mathtt{x}_{s,1})f(\mathtt{x}_{s,2})^{\top} - I_{d^*}}{G}_F$. In this context, our risk at the sample level can be rewritten as
\begin{align*}
\widehat{\mathcal{L}}(f, G) := \frac{1}{n_s}\sum_{i=1}^{n_s}\Big\{\big\Vert f(\mathtt{x}_{s,1}^{(i)}) - f(\mathtt{x}_{s,2}^{(i)})\big\Vert^2_2 + \lambda \big\langle f(\mathtt{x}_{s,1}^{(i)})f(\mathtt{x}_{s,2}^{(i)})^{\top} - I_{d^*}, G\big\rangle_F\Big\} = \frac{1}{n_s}\sum_{i=1}^{n_s}\ell\big(\tilde{\mathtt{x}}_s^{(i)}, G\big).
\end{align*}
Furthermore, let $\mathcal{G}_1 := \big\{G \in \R^{d^* \times d^*}: \norm{G}_F \leq B_2^2 + \sqrt{d^*}\big\}$. It is obvious that $\mathcal{G}(f)$ is a subset of $\mathcal{G}_1$ for a given $f$ such that $\norm{f}_2 \leq B_2$. Likewise, $\widehat{\mathcal{G}}(f)$ is a subset as well for a given $f \in \mathcal{NN}_{d, d^*}(W, L, \mathcal{K}, B_1, B_2)$. On the other hand, following Proposition~\ref{proposition: lipschitz of ell} reveals that $\ell(\bm{x}, G)$ is a Lipschitz function defined on $\big\{\bm{x} \in \R^{2d^*}: \norm{\bm{x}}_2 \leq \sqrt{2}B_2\big\} \times \mathcal{G}_1 \subseteq \R^{2d^* + (d^*)^2}$. Consider these two facts together, we can regard $\ell$ as a Lipschitz function in subsequent context. More specifically, we summary the Lipschitz constants of $\ell(\bm{x}, G)$ with respect to $\bm{x} \in \big\{\bm{x} \in \R^{2d^*}: \norm{\bm{x}}_2 \leq \sqrt{2}B_2\big\}$ and $G \in \mathcal{G}_1$ in Table~\ref{Table: Lipschitz Constant}, the corresponding calculating process is deferred to Proposition~\ref{proposition: lipschitz of ell} for clarity of structure.

\begin{table}[ht]
\centering
\begin{tabular}{cc}
\toprule
\textbf{Function} & \textbf{Lipschitz constant} \\
\midrule
$\ell(\bm{x}, \cdot)$ & $\sqrt{2} B_2$ \\
$\ell(\cdot, G)$ & $2 \sqrt{2} B_2 (B_2^2 + \sqrt{d^*})$ \\
$\ell(\cdot)$ & $\max \left\{ \sqrt{2} B_2, 2 \sqrt{2} B_2 (B_2^2 + \sqrt{d^*}) \right\}$ \\
\bottomrule
\end{tabular}
\caption{Lipschitz constant of $\ell$ with respect to each component}
\label{Table: Lipschitz Constant}
\end{table}

Following Definition~\ref{def: rademacher},~\ref{def: covering number} and Lemma~\ref{lemma: vector-contraction principle},~\ref{lemma: finite max sub-Gaussian inequality} are all typical elements of learning theory, which will be involved by our further derivation.
\begin{definition}[Rademacher complexity]\label{def: rademacher}
Given a set $S \subseteq \R^n$, the Rademacher complexity of $S$ is defined as
\begin{align*}
\mathcal{R}_n(S):= \E_\xi\Big\{\sup_{(s_1, \ldots, s_n) \in S}\frac{1}{n}\sum_{i=1}^n\xi_is_i\Big\},
\end{align*}
where $\{\xi_i\}_{i \in [n]}$ is a sequence of i.i.d Radmacher random variables which take the values $1$ and $-1$ with equal probability $1/2$.
\end{definition}

Moreover, if we use $\ell_2$ to denote the Hilbert space of square summable sequences of real numbers, we have following vector-contraction principle.
\begin{lemma}[Vector-contraction principle]
\label{lemma: vector-contraction principle}
Let $\mathcal{X}$ be any set, $(x_1, \ldots, x_n) \in \mathcal{X}^n$, let $F$ be a class of functions $f:\mathcal{X} \rightarrow \ell_2$ and let $h_i: \ell_2 \rightarrow \R$ have Lipschitz norm $L$. Then
\begin{align*}
\E\sup_{f \in F}\Big\vert\sum_i\epsilon_ih_i(f(x_i)\big)\Big\vert \leq 2\sqrt{2}L\E\sup_{f \in F}\Big\vert\sum_{i,j}\eps_{ij}f_j(x_i)\Big\vert,
\end{align*}
where $\epsilon_{ij}$ is an independent doubly indexed Rademacher sequence and $f_j(x_i)$ is the $j$-th component of $f(x_i)$.
\begin{proof}
Combining \cite{maurer2016vectorcontraction} and Theorem 3.2.1 of \cite{gine2016mathematical} obtains the desired result.
\end{proof}
\end{lemma}
\begin{definition}[Covering number]\label{def: covering number}
Given $n \in \N^+$, $\mathcal{S} \subseteq \R^n$ and $\varrho > 0$, the set $\mathcal{N}$ is referred to as an $\varrho$-net of $\mathcal{S}$ with respect to a norm $\norm{\cdot}$ on $\R^n$, if $\mathcal{N} \subseteq \mathcal{S}$ and for any $\bm{x} \in \mathcal{S}$, there exists $\bm{v} \in \mathcal{N}$ such that $\norm{\bm{x} - \bm{v}} \leq \varrho$. Furthermore, the covering number of $\mathcal{S}$ is defined as
\begin{align*}
\mathcal{N}(\mathcal{S}, \norm{\cdot}, \varrho) := \min\big\{\abs{\mathcal{Q}}: \mathcal{Q} \text{ is an $\varrho$-cover of }\mathcal{S}\big\}
\end{align*}
where $\abs{\mathcal{Q}}$ represents the cardinality of the set $\mathcal{Q}$.
\end{definition}
In this context, denote $\mathcal{B}_2$ as the unit ball in $\R^n$. According to the Corollary 4.2.13 of \cite{hdp-books}, $\abs{\mathcal{N}(\mathcal{B}_2, \norm{\cdot}_2, \varrho)}$, which represents the the covering number of $\mathcal{B}_2$ regarding $2$-norm, can be bounded by $\big(3/\varrho\big)^{n}$. Based on this fact, if we denote $\mathcal{N}_{\mathcal{G}_1}(\varrho)$ is a cover of $\mathcal{G}_1$ with radius $\varrho$, whose cardinality $\abs{\mathcal{N}_{\mathcal{G}_1}(\varrho)}$ is identical with the covering number of $\mathcal{G}_1$, then $\abs{\mathcal{N}_{\mathcal{G}_1}(\varrho)} \leq \Big\{\frac{3}{(B_2^2 + \sqrt{d^*})\varrho}\Big\}^{(d^*)^2}$.

\begin{lemma}[Finite maximum inequality]\label{lemma: finite max sub-Gaussian inequality}
For any $N \geq 1$, if $X_i, i \leq N$, are sub-Gaussian random variables admitting constants $\sigma_i$, then
\begin{align*}
\E\big\{\max_{i \in [N]}\abs{X_i}\big\} \leq \sqrt{2\log 2N}\max_{i \leq N}\sigma_i
\end{align*}
\end{lemma}
The proof of this lemma can be found in~\cite{gine2016mathematical}, Lemma 2.3.4.

Recall $\mathcal{NN}_{d_1, d_2}(W, L, \mathcal{K}) := \{f_\theta(\bm{x}_s) = A_L\sigma\big(A_{L-1}\sigma(\cdots\sigma\big(A_0\bm{x}_s)\big): \kappa(\theta) \leq \mathcal{K}\}$, as defined in eq~\ref{eq: NN class}. The second lemma we will employ is related to the upper bound for the Rademacher complexity of the hypothesis space consisting of norm-constrained neural networks, which was provided by \cite{Golowich}.
\begin{lemma}[Theorem 3.2 of \cite{Golowich}]
\label{Golowich}
Given $n \in \N^+$, and $\mathtt{x}_{s,1}, \ldots, \bm{x}_n\in [-B,B]^d$ with $B \geq 1$, define $S := \big\{(f(\bm{x}_1), \ldots, f(\bm{x}_n)\big):f \in \mathcal{NN}_{d,1}(W, L, \mathcal{K})\big\} \subseteq \R^n$, then
\begin{align*}
\mathcal{R}_n(S) \leq \frac{1}{n}\mathcal{K}\sqrt{2\big(L+2+\log(d+1)\big)}\max_{1\leq j \leq d+1}\sqrt{\sum_{i=1}^n x^2_{i,j}} \leq\frac{B\mathcal{K}\sqrt{2(L+2+\log(d+1)\big)}}{\sqrt{n}},
\end{align*}
where $x_{i,j}$ is the $j$-th coordinate of the vector $(\bm{x_i}^{\top}, 1)^{\top} \in \R^{d+1}$, the definition of norm-constraint networks $\mathcal{NN}_{d_1,d_2}(W, L, \mathcal{K})$ is given by
\begin{align*}
    \mathcal{NN}_{d_1, d_2}(W, L, \mathcal{K}) := \{f_{\bm{\theta}}(\bm{x}_s) = A_L\sigma\big(A_{L-1}\sigma(\cdots\sigma\big(A_0\bm{x}_s)\big): \kappa(\bm{\theta}) \leq \mathcal{K}\},
\end{align*}
herein, review $\kappa(\bm{\theta})=\Vert A_L\Vert_\infty\prod_{l= 0}^{L-1}\max\{\|(A_{l},\bm{b}_{l})\|_{\infty}, 1\}$.
\end{lemma}

\subsubsection{Risk decomposition} \label{subsection: risk decomposition}
Let $\widehat{G}(f) = \frac{1}{n_s}\sum\limits_{i=1}^{n_s}f(\mathtt{x}_{s,1}^{(i)})f(\mathtt{x}_{s,2}^{(i)})^{\top} - I_{d^*}$, $G^*(f) = \E_{\bm{x}_s \sim \P_s}\E_{\mathtt{x}_{s,1}, \mathtt{x}_{s,2} \in \mathcal{A}(\bm{x}_s)}\{f(\mathtt{x}_{s,1})f(\mathtt{x}_{s,2})^{\top}\} - I_{d^*}$, we can decompose $\mathcal{E}(\hat{f}_{n_s})$ into three terms shown as follow and then deal each term successively. To achieve conciseness in subsequent conclusions, recall if $X$ and $Y$ are two quantities, we employ $X \lesssim Y$ or $Y \gtrsim X$ to indicate the statement that $X \leq CY$ form some $C > 0$. In addition, We denote $X \asymp Y$ when $X \lesssim Y \lesssim X$.
\begin{lemma}
The $\E_{\widetilde{D}_s}\{\mathcal{L}(\hat{f}_{n_s})\}$ has following decomposition
\begin{align*}
\E_{\widetilde{D}_s}\big\{\mathcal{L}(\hat{f}_{n_s})\big\} &\lesssim \mathcal{L}(f^*) + \mathcal{E}_{\mathrm{sta}} + \mathcal{E}_\mathcal{F} + \mathcal{E}_{\mathcal{G}}.
\end{align*}
where $\mathcal{E}_{\mathrm{sta}} := \E_{\widetilde{D}_s}\big\{\sup_{f \in \mathcal{F}, G \in \widehat{\mathcal{G}}(f)}\big\vert\mathcal{L}(f, G) - \widehat{\mathcal{L}}(f, G)\big\vert\big\}$ is referred to the statistical error, $\mathcal{E_{\mathcal{F}}}:=\inf_{f \in \mathcal{F}}\big\{\mathcal{L}(f) - \mathcal{L}(f^*)\big\}$ is called as the approximation error regarding $\mathcal{F}$, while $\mathcal{E}_{\mathcal{G}} := \E_{\widetilde{D}_s}\Big[\sup_{f \in \mathcal{F}}\big\{G^*(f) - \widehat{G}(f)\big\}\Big]$ is named as the error regarding $\mathcal{G}$.
\begin{proof}
Notice that $\mathcal{L}(f) = \sup_{G \in \mathcal{G}(f)}\mathcal{L}(f,G)$ holds for both $\hat{f}_{n_s}$ and $f^*$, then for any $f \in \mathcal{F}$ we have
\begin{align*}
&\mathcal{L}(\hat{f}_{n_s}) = \mathcal{L}(f^*) + \mathcal{L}(\hat{f}_{n_s}) - \mathcal{L}(f^*) = \mathcal{L}(f^*) + \sup_{G \in \mathcal{G}(\hat{f}_{n_s})}\mathcal{L}(\hat{f}_{n_s}, G) - \sup_{G \in \mathcal{G}(f^*)}\mathcal{L}(f^*, G)\\
&= \mathcal{L}(f^*) + \Big\{\sup_{G \in \mathcal{G}(\hat{f}_{n_s})}\mathcal{L}(\hat{f}_{n_s}, G) - \sup_{G \in \widehat{\mathcal{G}}(\hat{f}_{n_s})}\mathcal{L}(\hat{f}_{n_s}, G)\Big\} + \Big\{\sup_{G \in \widehat{\mathcal{G}}(\hat{f}_{n_s})}\mathcal{L}(\hat{f}_{n_s}, G) - \sup_{G \in \widehat{\mathcal{G}}(\hat{f}_{n_s})}\widehat{\mathcal{L}}(\hat{f}_{n_s}, G)\Big\} \\
&\quad +\Big\{\sup_{G \in \widehat{\mathcal{G}}(\hat{f}_{n_s})}\widehat{\mathcal{L}}(\hat{f}_{n_s}, G) - \sup_{G \in \widehat{\mathcal{G}}(f)}\widehat{\mathcal{L}}(f, G)\Big\} + \Big\{\sup_{G \in \widehat{\mathcal{G}}(f)}\widehat{\mathcal{L}}(f, G) - \sup_{G \in \widehat{\mathcal{G}}(f)}\mathcal{L}(f, G)\Big\} \\
&\quad+\Big\{\sup_{G \in \widehat{\mathcal{G}}(f)}\mathcal{L}(f, G) - \sup_{G \in \mathcal{G}(f)}\mathcal{L}(f, G)\Big\} + \Big\{\sup_{G \in \mathcal{G}(f)}\mathcal{L}(f, G) - \sup_{G \in \mathcal{G}(f^*)}\mathcal{L}(f^*, G)\Big\},
\end{align*}
Firstly, both the second and fourth terms can be bounded by $\mathcal{E}_{\mathrm{sta}}$. Specifically, as for the fourth term, we have 
\begin{align*}
    \sup_{G \in \widehat{\mathcal{G}}(f)}\widehat{\mathcal{L}}(f, G) - \sup_{G \in \widehat{\mathcal{G}}(f)}\mathcal{L}(f, G) &\leq \sup_{G \in \widehat{\mathcal{G}}(f)}\big\{\widehat{\mathcal{L}}(f, G) - \mathcal{L}(f, G)\big\} \leq \sup_{G \in \widehat{\mathcal{G}}(f)}\big\vert\widehat{\mathcal{L}}(f, G) - \mathcal{L}(f, G)\big\vert \\
    &\leq \sup_{f \in \mathcal{F}, G \in \widehat{\mathcal{G}}(f)}\abs{\widehat{\mathcal{L}}(f, G) - \mathcal{L}(f, G)},
\end{align*}
A similar bound holds for the second term.

Next, we note that the sum of the first and fifth terms can be bounded by $\mathcal{E}_{\mathcal{G}}$. In particular, for the first term
\begin{align}\label{eq: term 1 of EG}
    &\sup_{G \in \mathcal{G}(\hat{f}_{n_s})}\mathcal{L}(\hat{f}_{n_s}, G) - \sup_{G \in \widehat{\mathcal{G}}(\hat{f}_{n_s})}\mathcal{L}(\hat{f}_{n_s}, G) \leq \sup_{f \in \mathcal{F}}\big\{\sup_{G \in \mathcal{G}(f)}\mathcal{L}(f, G) - \sup_{G \in \widehat{\mathcal{G}}(f)}\mathcal{L}(f, G)\big\} \nonumber\\
    &\leq \sup_{f \in \mathcal{F}} \big\{\sup_{G \in \mathcal{G}(f)}\mathcal{L}(f, G) - \mathcal{L}\big(f, \widehat{G}(f)\big)\big\} = \sup_{f \in \mathcal{F}}\big\{\mathcal{L}\big(f, G^*(f)\big) - \mathcal{L}\big(f, \widehat{G}(f)\big)\big\} \nonumber\\
    &\leq \sqrt{2}B_2 \sup_{f \in \mathcal{F}}\big\Vert G^*(f) - \widehat{G}(f)\big\Vert_F \nonumber\\
    &\leq \sqrt{2}B_2\sup_{f \in \mathcal{F}}\Big\Vert\E_{\bm{x}_s \sim \P_s}\E_{\mathtt{x}_{s,1}, \mathtt{x}_{s,2} \in \mathcal{A}(\bm{x}_s)}\big\{f(\mathtt{x}_{s,1})f(\mathtt{x}_{s,2})^{\top}\big\} - \frac{1}{n_s}\sum_{i=1}^{n_s}f(\mathtt{x}_{s,1}^{(i)})f(\mathtt{x}_{s,2}^{(i)})^{\top}\Big\Vert_F.
\end{align}
where the second inequality follows from $\widehat{G}(f) \in \widehat{\mathcal{G}}(f)$, while the third inequality follows from the fact that $\ell(\bm{x}, \cdot) \in \mathrm{Lip}(\sqrt{2}B_2)$, which can be found in Table~\ref{Table: Lipschitz Constant}. Meanwhile, regarding the fifth term, we can derive
\begin{align}\label{eq: term 2 of EG}
    &\sup_{G \in \widehat{\mathcal{G}}(f)}\mathcal{L}(f, G) - \sup_{G \in \mathcal{G}(f)}\mathcal{L}(f, G) \nonumber= \sup_{G \in \widehat{\mathcal{G}}(f)}\E_{\widetilde{D}_s}\Big\{\InnerProduct{\widehat{G}(f)}{G}_F\Big\} - \sup_{G \in \mathcal{G}(f)}\InnerProduct{G^*(f)}{G}_F \\
    &\leq \E_{\widetilde{D}_s}\Big\{\sup_{G \in \widehat{\mathcal{G}}(f)}\InnerProduct{\widehat{G}(f)}{G}_F\Big\} - \sup_{G \in \mathcal{G}(f)}\InnerProduct{G^*(f)}{G}_F = \E_{\widetilde{D}_s}\big\{\big\Vert\widehat{G}(f)\big\Vert_F^2\big\} - \big\Vert G^*(f)\big\Vert_F^2 \nonumber\\
    &\leq 2\big(B_2^2 + \sqrt{d^*}\big)\Big(\E_{\widetilde{D}_s}\big\{\big\Vert\widehat{G}(f)\big\Vert_F\big\} - \big\Vert G^*(f)\big\Vert_F\Big) \nonumber \\
    &\leq 2\big(B_2^2 + \sqrt{d^*}\big)\Big(\sup_{f \in \mathcal{F}}\Big[\E_{\widetilde{D}_s}\big\{\big\Vert\widehat{G}(f)\big\Vert_F\big\} - \big\Vert G^*(f)\big\Vert_F\Big]\Big) \nonumber\\
    &\lesssim \sup_{f \in \mathcal{F}}\Big[\E_{\widetilde{D}_s}\Big\{\Big\Vert\frac{1}{n_s}\sum_{i=1}^{n_s}f(\mathtt{x}_{s,1}^{(i)})f(\mathtt{x}_{s,2}^{(i)})^{\top} - I_{d^*}\Big\Vert_F - \Big\Vert\E_{\bm{x}_s \sim \P_s}\E_{\mathtt{x}_{s,1}, \mathtt{x}_{s,2} \in \mathcal{A}(\bm{x}_s)}\big\{f(\mathtt{x}_{s,1})f(\mathtt{x}_{s,2})^{\top}\big\} - I_{d^*}\Big\Vert_F\Big\}\Big] \nonumber\\
    &\leq \sup_{f \in \mathcal{F}}\Big[\E_{\widetilde{D}_s}\Big\{\Big\Vert\frac{1}{n_s}\sum_{i=1}^{n_s}f(\mathtt{x}_{s,1}^{(i)})f(\mathtt{x}_{s,2}^{(i)})^{\top} - \E_{\bm{x}_s \sim \P_s}\E_{\mathtt{x}_{s,1}, \mathtt{x}_{s,2} \in \mathcal{A}(\bm{x}_s)}\big\{f(\mathtt{x}_{s,1})f(\mathtt{x}_{s,2})^{\top}\big\}\Big\Vert_F\Big\}\Big] \nonumber\\
    &\leq \E_{\widetilde{D}_s}\Big[\sup_{f \in \mathcal{F}}\Big\{\Big\Vert\frac{1}{n_s}\sum_{i=1}^{n_s}f(\mathtt{x}_{s,1}^{(i)})f(\mathtt{x}_{s,2}^{(i)})^{\top} - \E_{\bm{x}_s \sim \P_s}\E_{\mathtt{x}_{s,1}, \mathtt{x}_{s,2} \in \mathcal{A}(\bm{x}_s)}\big\{f(\mathtt{x}_{s,1})f(\mathtt{x}_{s,2})^{\top}\big\}\Big\Vert_F\Big\}\Big]
\end{align}
where the first equality is derived from $\InnerProduct{G^*(f)}{G}_F = \E_{\widetilde{D}_s}\left\{\InnerProduct{\widehat{G}(f)}{G}_F\right\}$, while the second inequality is derived from $\big\Vert\widehat{G}(f)\big\Vert_F \leq B_2^2 + \sqrt{d^*}$ and $\big\Vert G^*(f)\big\Vert_F \leq B_2^2 + \sqrt{d^*}$. Combining~\eqref{eq: term 1 of EG} and~\eqref{eq: term 2 of EG} yields $\mathcal{E}_\mathcal{G}$.

Furthermore, it is easy to conclude the third term $\sup_{G \in \widehat{\mathcal{G}}(\hat{f}_{n_s})}\widehat{\mathcal{L}}(\hat{f}_{n_s}, G) - \sup_{G \in \widehat{\mathcal{G}}(f)}\widehat{\mathcal{L}}(f, G) \leq 0$ according to the definition of $\hat{f}_{n_s}$. Taking infimum over all $f \in \mathcal{NN}_{d, d^*}(W, L, \mathcal{K}, B_1, B_2)$ on both sides yields
\begin{align*}
\E_{\widetilde{D}_s}\big\{\mathcal{L}(\hat{f}_{n_s})\big\} &\lesssim \mathcal{L}(f^*) + \mathcal{E}_{\mathrm{sta}} + \mathcal{E_{F}} + \mathcal{E}_{\mathcal{G}},
\end{align*}
which completes the proof.
\end{proof}
\end{lemma}
Next, the remaining task is to handle each term on the right-hand side individually.
\subsubsection{Vanishing \texorpdfstring{$\mathcal{L}(f^*)$}{L(f*)}} \label{subsection: deal with L(f*)}
In this section we will show the optimal encoder $f^*$ can indeed vanish $\mathcal{L}(f^*)$. The justification comprises a total of two steps. At first, we assert that if there exists a measurable map $f$ such that $\Sigma = \E_{\bm{x}_s \sim \P_s}\big\{f(\bm{x}_s)f(\bm{x}_s)^{\top}\big\}$ be positive definite, then we can conduct a series of minor modifications on $f$ to obtain a $\tilde{f}$ such that $\mathcal{L}(\tilde{f}) = 0$. In the second step, we will demonstrate that the required $f$ indeed exists under Assumption~\ref{assumption: distribution partition}, and that the modification $\tilde{f}$ also satisfies the constraint $B_1 \leq \big\Vert\tilde{f}\big\Vert_2 \leq B_2$, which implies that $\mathcal{L}(f^*) = 0$, since the definition of $f^*$ indicates that $\mathcal{L}(f^*) \leq \mathcal{L}(\tilde{f})$.

To this end, it suffices to find a $\tilde{f}: B_1 \leq \big\Vert\tilde{f}\big\Vert_2 \leq B_2$ satisfying both $\mathcal{L}_{\mathrm{align}}(\tilde{f}) = 0$ and $\big\Vert\E_{\bm{x}_s \sim \P_s}\E_{\bm{x_1, x_2} \in \mathcal{A}(\bm{x}_s)}\big\{f(\mathtt{x}_{s,1})f(\mathtt{x}_{s,2})^{\top}\big\} - I_{d^*}\big\Vert_F = 0$. First note that
	\begin{align*}
    	&\Big\Vert\E_{\bm{x}_s \sim \P_s}\E_{\bm{x_1, x_2} \in \mathcal{A}(\bm{x}_s)}\big\{f(\mathtt{x}_{s,1})f(\mathtt{x}_{s,2})^{\top}\big\} - I_{d^*}\Big\Vert_F \\
    		&= \Big\Vert\E_{\bm{x}_s \sim \P_s}\E_{\bm{x_1, x_2} \in \mathcal{A}(\bm{x}_s)}\big\{f(\mathtt{x}_{s,1})f(\mathtt{x}_{s,1})^{\top}\big\} + \E_{\bm{x}_s \sim \P_s}\E_{\bm{x_1, x_2} \in \mathcal{A}(\bm{x}_s)}\Big[f(\mathtt{x}_{s,1})\big\{f(\mathtt{x}_{s,2}) - f(\mathtt{x}_{s,1})\big\}^{\top}\Big] - I_{d^*}\Big\Vert_F \\
    		&\leq \Big\Vert\E_{\bm{x}_s \sim \P_s}\E_{\mathtt{x}_{s,1} \in \mathcal{A}(\bm{x}_s)}\big\{f(\mathtt{x}_{s,1})f(\mathtt{x}_{s,1})^{\top}\big\} - I_{d^*}\Big\Vert_F + \E_{\bm{x}_s \sim \P_s}\E_{\mathtt{x}_{s,1}, \mathtt{x}_{s,2} \in \mathcal{A}(\bm{x}_s)}\Big\{\big\Vert f(\mathtt{x}_{s,1})\big\Vert_2\big\Vert f(\mathtt{x}_{s,1}) - f(\mathtt{x}_{s,2})\big\Vert_2\Big\} \\
    		&\leq\Big\Vert\E_{\bm{x}_s \sim \P_s}\E_{\mathtt{x}_s \in \mathcal{A}(\bm{x}_s)}\big\{f(\mathtt{x}_s)f(\mathtt{x}_s)^{\top}\big\} - I_{d^*}\Big\Vert_F + B_2\E_{\bm{x}_s \sim \P_s}\E_{\mathtt{x}_{s,1}, \mathtt{x}_{s,2} \in \mathcal{A}(\bm{x}_s)}\big\Vert f(\mathtt{x}_{s,1}) - f(\mathtt{x}_{s,2})\big\Vert_2. \tag{$\norm{f}_2 \leq B_2$}
        \end{align*}
        
It reveals that, to achieve our destination, we just need to construct a $\tilde{f}: B_1 \leq \big\Vert\tilde{f}\big\Vert_2 \leq B_2$ such that $\mathcal{L}_{\mathrm{align}}(\tilde{f}) = 0$, and well as $\big\Vert\E_{\bm{x}_s \sim \P_s}\E_{\mathtt{x}_s \in \mathcal{A}(\bm{x}_s)}\big\{\tilde{f}(\mathtt{x}_s)\tilde{f}(\mathtt{x}_s)^{\top}\big\} - I_{d^*}\big\Vert_F = 0$. To this end, we provide following lemma:
\begin{lemma}\label{lemma: modifications on f}
    If there exists a measurable encoder $f$ making $\Sigma = \E_{\bm{x}_s \sim \P_s}\{f(\bm{x}_s)f(\bm{x}_s)^{\top}\}$ positive definite, then there exists a measurable encoder $\tilde{f}$ such that
    \begin{align*}
        \mathcal{L}_{\mathrm{align}}(\tilde{f}) = 0, \quad \Big\Vert\E_{\bm{x}_s \sim \P_s}\E_{\mathtt{x}_s \in \mathcal{A}(\bm{x}_s)}\big\{\tilde{f}(\mathtt{x}_s)\tilde{f}(\mathtt{x}_s)^{\top}\big\} - I_{d^*}\Big\Vert_F = 0.
    \end{align*}
    \begin{proof}
    We conduct following modifications on the given $f$ as follows: for any $\bm{x}_s \in \mathcal{X}_s$, define
		\begin{align*}
		  \tilde{f}_{\bm{x}_s}(\mathtt{x}_s) = \begin{cases}
		      V^{-1}f(\bm{x}_s) &\quad\text{if } \mathtt{x}_s \in \mathcal{A}(\bm{x}_s)\\
		      f(\bm{x}_s) &\quad\text{if } \mathtt{x}_s \not\in \mathcal{A}(\bm{x}_s) \\
		  \end{cases}
		\end{align*}
        where $\Sigma = VV^{\top}$, which is the Cholesky decomposition of $\Sigma$. Here the positivity of $\Sigma$ ensure $V$ is well-defined. Iteratively repeat this modification for all $\bm{x}_s \in \mathcal{X}$ to yield $\tilde{f}$. As the result, we have
        \begin{align*}
            \E_{\bm{x}_s \sim \P_s}\E_{\mathtt{x}_s \in \mathcal{A}(\bm{x}_s)}\big\{\tilde{f}(\mathtt{x}_s)\tilde{f}(\mathtt{x}_s)^{\top}\big\} = V^{-1}\E_{\bm{x}_s \sim \P_s}\{f(\bm{x}_s)f(\bm{x}_s)^{\top}\}V^{-\top} = I_{d^*}
        \end{align*}
        and
        \begin{align*}
            \forall \bm{x}_s \in \mathcal{X}, \mathtt{x}_{s,1}, \mathtt{x}_{s,2} \in \mathcal{A}(\bm{x}_s), \Big\Vert\tilde{f}(\mathtt{x}_{s,1}) - \tilde{f}(\mathtt{x}_{s,2})\Big\Vert_2 = \Big\Vert\tilde{f}(\bm{x}_s) - \tilde{f}(\bm{x}_s)\Big\Vert_2 = 0.
        \end{align*}
        That is precisely what we desire.
    \end{proof}
    \begin{remark}
        Based on the construction approach in Lemma~\ref{lemma: modifications on f}, we just need to show there exists a encoder $f$ such that $\Sigma$ are positive definite. In fact, if we have a measurable partition $\mathcal{X} = \cup_{i=1}^{d^*}\mathcal{P}_i$ as shown in Assumption \ref{assumption: distribution partition} such that $\mathcal{P}_i \cap \mathcal{P}_j = \emptyset$ and $\forall i \in [d^*], \frac{1}{B_2^2} \leq \P_s(\mathcal{P}_i) \leq \frac{1}{B_1^2}$, just set the $f(\bm{x}_s) = \bm{e}_i$ if $\bm{x}_s \in \mathcal{P}_i$, where $\bm{e}_i$ is the standard basis of $\R^{d^*}$, then $\Sigma = \mathrm{diag}\{\P_s(\mathcal{P}_1), \ldots, \P_s(\mathcal{P}_i), \ldots,  \P_s(\mathcal{P}_{d^*})\}, V^{-1} = \mathrm{diag}\Big\{\sqrt{\frac{1}{\P_s(\mathcal{P}_1)}}, \ldots, \sqrt{\frac{1}{\P_s(\mathcal{P}_i)}}, \ldots,  \sqrt{\frac{1}{\P_s(\mathcal{P}_{d*})}}\Big\}, \tilde{f}(\bm{x}_s) = \sqrt{\frac{1}{\P_s(\mathcal{P}_i)}}\bm{e}_i$ if $\bm{x}_s \in \mathcal{P}_i$, it is obvious that $B_1 \leq \big\|\tilde{f}\big\|_2 \leq B_2$.
    \end{remark}
\end{lemma}

\subsubsection{Upper bound of \texorpdfstring{$\mathcal{E}_{\mathrm{sta}}$}{Esta}}
\label{subsection: bound Esta}
\begin{lemma}
Regarding the statistical error $\mathcal{E}_{\mathrm{sta}}$, we have
\begin{align*}
\mathcal{E}_{\mathrm{sta}} \lesssim \frac{\mathcal{K}\sqrt{L}}{\sqrt{n_s}}.
\end{align*}
\end{lemma}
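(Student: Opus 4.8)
The plan is to bound $\mathcal E_{\mathrm{sta}}=\sup_{f\in\mathcal F,\,G\in\widehat{\mathcal G}(f)}\big|\mathcal L(f,G)-\widehat{\mathcal L}(f,G)\big|$ by a classical symmetrization argument, pushing the supremum over $\mathcal F$ through the Lipschitz loss $\ell$ with the vector-contraction principle (Lemma~\ref{vector-contraction principle}) and then invoking the norm-based Rademacher bound of Lemma~\ref{Golowich}; the one piece of real work is removing the $f$-dependent inner supremum over $G$.

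\textbf{Discretizing the $G$-variable.} Since $\widehat{\mathcal G}(f)\subseteq\mathcal G_1=\{G:\|G\|_F\le B_2^2+\sqrt{d^*}\}$ for every $f\in\mathcal F$, I first replace $\sup_{G\in\widehat{\mathcal G}(f)}$ by $\sup_{G\in\mathcal G_1}$. As $\ell(\bm u,\cdot)$ is $\sqrt2 B_2$-Lipschitz on $\mathcal G_1$ (Table~\ref{Table: Lipschitz Constant}), so are $\mathcal L(f,\cdot)$ and $\widehat{\mathcal L}(f,\cdot)$ uniformly in $f$, so passing to a $\varrho$-net $\mathcal N_{\mathcal G_1}(\varrho)$ of $\mathcal G_1$ gives
\[
\mathcal E_{\mathrm{sta}}\ \le\ \max_{G\in\mathcal N_{\mathcal G_1}(\varrho)}\ \sup_{f\in\mathcal F}\big|\mathcal L(f,G)-\widehat{\mathcal L}(f,G)\big|\ +\ 2\sqrt2 B_2\varrho ,
\]
with $\log\big|\mathcal N_{\mathcal G_1}(\varrho)\big|\le (d^*)^2\log\!\big(3/((B_2^2+\sqrt{d^*})\varrho)\big)$.

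\textbf{Bounding the process for fixed $G$.} For a fixed $G$, write $\widehat{\mathcal L}(f,G)=\tfrac1{n_s}\sum_i\ell\big((f(\bm x_1^{(i)}),f(\bm x_2^{(i)})),G\big)$, so $\mathcal L(f,G)$ is its expectation over the i.i.d.\ pairs $\tilde{\bm x}^{(i)}$. Symmetrization gives $\E_{D_s}\sup_f|\mathcal L(f,G)-\widehat{\mathcal L}(f,G)|\le 2\,\E_{D_s}\E_\xi\sup_f\tfrac1{n_s}\big|\sum_i\xi_i\,\ell((f(\bm x_1^{(i)}),f(\bm x_2^{(i)})),G)\big|$. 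The map $\bm u\mapsto\ell(\bm u,G)$ is $L_\ell$-Lipschitz with $L_\ell=2\sqrt2 B_2(B_2^2+\sqrt{d^*})$ (Proposition~\ref{Lipschitz of ell} and Table~\ref{Table: Lipschitz Constant}), and the stacked vector $(f(\bm x_1^{(i)}),f(\bm x_2^{(i)}))\in\R^{2d^*}$ has coordinates lying in $\mathcal{NN}_{d,1}(W,L,\mathcal K)$ (a coordinate of a network in $\mathcal{NN}_{d,d^*}(W,L,\mathcal K)$ has $\kappa$ no larger than that of the whole network). Applying the vector-contraction principle and then the triangle inequality over the $2d^*$ output coordinates reduces the right-hand side to a sum of $2d^*$ Rademacher averages of $\mathcal{NN}_{d,1}(W,L,\mathcal K)$ evaluated at the augmented samples; each of these is $\le\mathcal K\sqrt{2(L+2+\log(d+1))}/\sqrt{n_s}$ by Lemma~\ref{Golowich} (the augmented inputs live in a fixed box $[-B,B]^d$ because the $A_\gamma$ are $M$-Lipschitz on $[0,1]^d$). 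Treating $d,d^*,B_1,B_2,\lambda,m$ as constants and using $\sqrt{L+2+\log(d+1)}\lesssim\sqrt L$, this yields $\E_{D_s}\sup_f|\mathcal L(f,G)-\widehat{\mathcal L}(f,G)|\lesssim\mathcal K\sqrt L/\sqrt{n_s}$, uniformly over $G\in\mathcal G_1$.

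\textbf{From $\max_G\E$ to $\E\max_G$, and conclusion.} Each summand satisfies $|\ell|\le M_\ell$ for a constant $M_\ell$ depending only on $B_2,d^*,\lambda$, so changing one $\tilde{\bm x}^{(i)}$ perturbs $X_G:=\sup_f|\mathcal L(f,G)-\widehat{\mathcal L}(f,G)|$ by at most $2M_\ell/n_s$; McDiarmid's inequality then makes $X_G-\E X_G$ sub-Gaussian with parameter $O(M_\ell/\sqrt{n_s})$, and the finite maximum inequality (Lemma~\ref{finite max sub-Gaussian inequality}) gives $\E_{D_s}\max_{G\in\mathcal N_{\mathcal G_1}(\varrho)}X_G\le\max_G\E X_G+O\big(M_\ell\sqrt{\log|\mathcal N_{\mathcal G_1}(\varrho)|}/\sqrt{n_s}\big)$. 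Choosing $\varrho=n_s^{-1/2}$ makes the net logarithm $O(\log n_s)$ and the discretization term $2\sqrt2 B_2\varrho=O(n_s^{-1/2})$, so — since $\mathcal K\ge1$ — all these extra contributions are $\lesssim\mathcal K\sqrt L/\sqrt{n_s}$, and combining with the previous step finishes the proof. The main obstacle is precisely the middle step: driving the supremum over $\mathcal F$ and over the high-dimensional, data-dependent set $\widehat{\mathcal G}(f)$ down to the network's norm-based Rademacher complexity (rather than a width- or parameter-count-dependent bound); the joint Lipschitz control of $\ell$, the vector-contraction principle, and Lemma~\ref{Golowich} are exactly the tools that make this possible, while the covering of $\mathcal G_1$ and the finite-maximum inequality only absorb lower-order logarithmic terms.
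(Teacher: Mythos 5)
Your proposal is correct and reaches the stated bound with the same three key ingredients as the paper — the Lipschitz control of $\ell$ (Proposition \ref{Lipschitz of ell}), the vector-contraction principle (Lemma \ref{vector-contraction principle}) combined with Lemma \ref{Golowich} for the norm-constrained network class, and a $\varrho$-net of $\mathcal{G}_1$ with the finite-maximum sub-Gaussian inequality (Lemma \ref{finite max sub-Gaussian inequality}) — but it organizes the $G$-supremum differently. The paper keeps the joint supremum over $(f,G)$, symmetrizes once, and applies the contraction to the map $(\tilde f(\tilde{\bm x}),G)\mapsto\ell$, so that after contraction the $G$-dependence survives only as a linear Rademacher process over $\mathcal{G}_1$, which is then handled by covering plus Lemma \ref{finite max sub-Gaussian inequality}; no concentration inequality for the suprema themselves is needed. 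You instead discretize $G$ at the level of the loss before symmetrization (paying $2\sqrt2 B_2\varrho$ via the Lipschitzness of $\ell(\bm u,\cdot)$), prove the fixed-$G$ bound uniformly over the net, and then recombine with an extra tool the paper does not use: McDiarmid's bounded-differences inequality to make each $X_G-\E X_G$ sub-Gaussian of size $O(n_s^{-1/2})$ before invoking Lemma \ref{finite max sub-Gaussian inequality}. Both routes work and give the same $\mathcal{K}\sqrt{L}/\sqrt{n_s}+\sqrt{\log n_s/n_s}$ profile; yours is conceptually more modular (fixed-$G$ empirical process plus a union-type step), while the paper's avoids McDiarmid at the cost of a slightly more intricate contraction step. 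Two small remarks: your observation that a coordinate of a network in $\mathcal{NN}_{d,d^*}(W,L,\mathcal{K})$ lies in $\mathcal{NN}_{d,1}(W,L,\mathcal{K})$ is exactly the reduction the paper uses implicitly, and is fine; however, your final absorption of the net term is stated as following from $\mathcal{K}\ge1$, which is not quite enough — $\sqrt{\log n_s/n_s}\lesssim\mathcal{K}\sqrt L/\sqrt{n_s}$ requires $\mathcal{K}\sqrt L\gtrsim\sqrt{\log n_s}$, the same side condition the paper states explicitly ("if $\mathcal{K}\gtrsim\sqrt{\log n_s}$") and which holds in the final application since $\mathcal{K}$ is polynomial in $n_s$. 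With that caveat made explicit, your argument is complete.
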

\begin{proof}
\label{proof of subsection: bound Esta}
To obtain the desired conclusion, it is necessary to clarify several definitions in advance. For any $f : \R^{d} \rightarrow \R^{d^*}$, define $\tilde{f} : \R^{2d} \rightarrow \R^{2d^*}$ such that $\tilde{f}(\tilde{\mathtt{x}}_s) = (f(\mathtt{x}_{s,1}), f(\mathtt{x}_{s,2})\big)$, where $\tilde{\mathtt{x}}_s = (\mathtt{x}_{s,1}, \mathtt{x}_{s,2}) \in \R^{2d}$. Furthermore, let $\widetilde{\mathcal{F}} := \{\tilde{f}: f \in \mathcal{NN}_{d, d^*}(W, L, \mathcal{K})\}$. In addition, denote $\widetilde{D}_s^\prime = \{\tilde{\mathtt{x}}_s^{\prime(i)}\}_{i=1}^{n_s}$, which is a collection consisting of $n_s$ independent samples. The distribution of these samples is identical to that of $\widetilde{D}_s$; $\widetilde{D}_s^\prime$ is therefore referred to as the ghost samples of $\widetilde{D}_s$. Moreover, recall $\mathcal{G}_1 := \{G \in \R^{d^* \times d^*}: \norm{G}_F \leq B_2^2 + \sqrt{d^*}\}$, by the definition of $\mathcal{E}_{\mathrm{sta}}$, we have:
\begin{align}
\mathcal{E}_{\mathrm{sta}} &= \E_{\widetilde{D}_s}\Big\{\sup_{f \in \mathcal{NN}_{d, d^*}(W, L, \mathcal{K}, B_1, B_2), G \in \widehat{\mathcal{G}}(f)}\Big\vert\mathcal{L}(f, G) - \widehat{\mathcal{L}}(f, G)\Big\vert\Big\} \nonumber \\
&\leq \E_{\widetilde{D}_s}\Big\{\sup_{(f, G) \in \mathcal{NN}_{d, d^*}(W, L, \mathcal{K}, B_1, B_2) \times \mathcal{G}_1}\Big\vert\mathcal{L}(f, G) - \widehat{\mathcal{L}}(f, G)\Big\vert\Big\} \tag{$\forall f \in \mathcal{NN}_{d, d^*}(W, L, \mathcal{K}, B_1, B_2), \widehat{\mathcal{G}}(f) \subseteq \mathcal{G}_1$} \nonumber\\
&\leq \E_{\widetilde{D}_s}\Big\{\sup_{(f, G) \in \mathcal{NN}_{d, d^*}(W, L, \mathcal{K}) \times \mathcal{G}_1}\Big\vert\mathcal{L}(f, G) - \widehat{\mathcal{L}}(f, G)\Big\vert\Big\} \tag{$\mathcal{NN}_{d, d^*}(W, L, \mathcal{K}, B_1, B_2) \subseteq \mathcal{NN}_{d, d^*}(W, L, \mathcal{K})$} \nonumber \\
&= \E_{\widetilde{D}_s}\Big\{\sup_{(\tilde{f}, G) \in \widetilde{\mathcal{F}} \times \mathcal{G}_1} \Big\vert\frac{1}{n_s}\sum_{i=1}^{n_s}\E_{\widetilde{D}^\prime_s}\big\{\ell(\tilde{f}(\tilde{\mathtt{x}}_s^{\prime(i)}), G)\big\} - \frac{1}{n_s}\sum_{i=1}^{n_s}\ell(\tilde{f}(\tilde{\mathtt{x}}_s^{(i)}), G)\Big\vert\Big\} \nonumber \\
&\leq \E_{\widetilde{D}_s, \widetilde{D}^\prime_s}\Big\{\sup_{(\tilde{f}, G) \in \widetilde{\mathcal{F}} \times \mathcal{G}_1} \Big\vert\frac{1}{n_s}\sum_{i=1}^{n_s}\ell(\tilde{f}(\tilde{\mathtt{x}}_s^{\prime(i)}), G) - \frac{1}{n_s}\sum_{i=1}^{n_s}\ell(\tilde{f}(\tilde{\mathtt{x}}_s^{(i)}), G)\Big\vert\Big\} \nonumber \\
&= \E_{\widetilde{D}_s, \widetilde{D}^\prime_s, \xi}\Big\{\sup_{(\tilde{f}, G) \in \widetilde{\mathcal{F}} \times \mathcal{G}_1} \Big\vert\frac{1}{n_s}\sum_{i=1}^{n_s}\xi_i\big(\ell(\tilde{f}(\tilde{\mathtt{x}}_s^{\prime(i)}), G) - \ell(\tilde{f}(\tilde{\mathtt{x}}_s^{(i)}), G)\big)\Big\vert\Big\} \label{label 15} \\
&\leq 2\E_{\widetilde{D}_s, \xi}\Big\{\sup_{(\tilde{f}, G) \in \widetilde{\mathcal{F}} \times \mathcal{G}_1}\Big\vert\frac{1}{n_s}\sum_{i=1}^{n_s}\xi_i\ell(\tilde{f}(\tilde{\mathtt{x}}_s^{(i)}), G)\Big\vert\Big\} \nonumber \\
&\leq 4\sqrt{2}\norm{\ell}_{\mathrm{Lip}}\Big(\E_{\widetilde{D}_s, \xi}\Big\{\sup_{f \in \mathcal{NN}_{d,d^*}(W, L, \mathcal{K})}\Big\vert\frac{1}{n_s}\sum_{i=1}^{n_s}\sum_{j=1}^{d^*}\xi_{i,j,1}f_j(\mathtt{x}_{s,1}^{(i)}) + \xi_{i,j,2}f_j(\mathtt{x}_{s,2}^{(i)})\Big\vert\Big\} \nonumber \\
&\quad + \E_{\xi}\Big\{\sup_{G \in \mathcal{G}_1}\Big\vert\frac{1}{n_s}\sum_{i=1}^{n_s}\sum_{j=1}^{d^*}\sum_{k=1}^{d^*}\xi_{i,j,k}G_{jk}\Big\vert\Big\}\Big) \label{label 16} \\
&\leq 8\sqrt{2}\norm{\ell}_{\mathrm{Lip}}\E_{\widetilde{D}_s, \xi}\Big\{\sup_{f \in \mathcal{NN}_{d,d^*}(W, L, \mathcal{K})}\Big\vert\frac{1}{n_s}\sum_{i=1}^{n_s}\sum_{j=1}^{d^*}\xi_{i,j,1}f_j(\mathtt{x}_{s,1}^{(i)})\Big\vert\Big\} + 4\sqrt{2}d^*\norm{\ell}_{\mathrm{Lip}}\varrho \nonumber \\
&\quad + 4\sqrt{2}\norm{\ell}_{\mathrm{Lip}}\E_{\xi}\Big\{\max_{G \in \mathcal{N}_{\mathcal{G}_1}(\varrho)}\Big\vert\frac{1}{n_s}\sum_{i=1}^{n_s}\sum_{j=1}^{d^*}\sum_{k=1}^{d^*}\xi_{i,j,k}G_{jk}\Big\vert\Big\} \label{eq: alter G to be the center of covering}\\
&\leq 8\sqrt{2}\norm{\ell}_{\mathrm{Lip}}\E_{\widetilde{D}_s, \xi}\Big\{\sup_{f \in \mathcal{NN}_{d,d^*}(W, L, \mathcal{K})}\Big\vert\frac{1}{n_s}\sum_{i=1}^{n_s}\sum_{j=1}^{d^*}\xi_{i,j}f_j(\mathtt{x}_{s,1}^{(i)})\Big\vert\Big\} + 4\sqrt{2}d^*\norm{\ell}_{\mathrm{Lip}}\varrho \nonumber \\
&\quad + 4\sqrt{2}(B_2^2 + \sqrt{d^*})\norm{\ell}_{\mathrm{Lip}}\sqrt{\frac{2\log \big(2\abs{\mathcal{N}_{\mathcal{G}_1}(\varrho)}\big)}{n_s}}\label{label 17}\\
&\leq 8\sqrt{2}d^*\norm{\ell}_{\mathrm{Lip}}\E_{\widetilde{D}_s, \xi}\Big\{\sup_{f \in \mathcal{NN}_{d,1}(W, L, \mathcal{K})}\Big\vert\frac{1}{n_s}\sum_{i=1}^{n_s}\xi_if(\mathtt{x}_{s,1}^{(i)})\Big\vert\Big\} + 4\sqrt{2}d^*\norm{\ell}_{\mathrm{Lip}}\varrho \nonumber \\
&\quad + 4\sqrt{2}(B_2^2 + \sqrt{d^*})\norm{\ell}_{\mathrm{Lip}}\sqrt{\frac{2\log \big(2(\frac{3}{(B_2^2 + \sqrt{d^*})\varrho})^{(d^*)^2}\big)}{n_2}} \tag{$\big\vert\mathcal{N}_{\mathcal{G}_1}(\varrho)\big\vert \leq (\frac{3}{(B_2^2 + \sqrt{d^*})\varrho})^{(d^*)^2}$}\\
&\lesssim \frac{\mathcal{K}\sqrt{L}}{\sqrt{n_s}} + \sqrt{\frac{\log n_s}{n_s}} \tag{Lemma \ref{Golowich} and set $\varrho = \mathcal{O}(1/\sqrt{n_s})$}\\
&\lesssim \frac{\mathcal{K}\sqrt{L}}{\sqrt{n_s}} \tag{If $\mathcal{K} \gtrsim \sqrt{\log n_s}$}
\end{align}
Where (\ref{label 15}) stems from the fact that $\xi_i\big(\ell(\tilde{f}(\tilde{\mathtt{x}}_s^{\prime(i)}), G) - \ell(\tilde{f}(\tilde{\mathtt{x}}_s^{(i)}), G)\big)$ has identical distribution with $\ell(\tilde{f}(\tilde{\mathtt{x}}_s^{\prime(i)}), G) - \ell(\tilde{f}(\tilde{\mathtt{x}}_s^{(i)}), G)$. In addition, notice that we have shown $\norm{\ell}_{\mathrm{Lip}} < \infty$, applying Lemma~\ref{lemma: vector-contraction principle} obtains (\ref{label 16}). Regarding~\ref{eq: alter G to be the center of covering}, since $\mathcal{N}_{\mathcal{G}_1}(\varrho)$ is a $\varrho$-covering, thus for any fixed $G\in \mathcal{G}_1$, we can find a $\widetilde{G} \in \mathcal{N}_{\mathcal{G}_1}(\varrho)$ satisfying $\norm{G - \widetilde{G}}_F \leq \varrho$, therefore we have
\begin{align*}
     &\E_\xi\Big\{\max_{G \in \mathcal{G}_1}\Big\vert \frac{1}{n_s}\sum_{i=1}^{n_s}\sum_{j=1}^{d^*}\sum_{k=1}^{d^*}\xi_{i,j,k}\big(\widetilde{G}_{jk} + G_{jk} - \widetilde{G}_{jk}\big) \Big\vert\Big\} \\
     &\leq \E_\xi\Big\{\max_{G \in \mathcal{G}_1}\Big\vert \frac{1}{n_s}\sum_{i=1}^{n_s}\sum_{j=1}^{d^*}\sum_{k=1}^{d^*}\xi_{i,j,k}\widetilde{G}_{jk} \Big\vert\Big\} + \E_\xi\Big\{\max_{G \in \mathcal{G}_1}\Big\vert \frac{1}{n_s}\sum_{i=1}^{n_s}\sum_{j=1}^{d^*}\sum_{k=1}^{d^*}\xi_{i,j,k}\big(G_{jk} - \widetilde{G}_{jk}\big) \Big\vert\Big\} \\
     &\leq \E_\xi\Big\{\max_{G \in \mathcal{N}_{\mathcal{G}_1}(\varrho)}\Big\vert \frac{1}{n_s}\sum_{i=1}^{n_s}\sum_{j=1}^{d^*}\sum_{k=1}^{d^*}\xi_{i,j,k}G_{jk} \Big\vert\Big\} + \frac{1}{n_s}\sqrt{(d^*)^2n_s}\sqrt{n_s\sum_{j=1}^{d^*}\sum_{k=1}^{d^*}\big(G_{jk} - \widetilde{G}_{jk}\big)^2} \tag{Cauchy-Schwarz inequality}\\
     &\leq \E_\xi\Big\{\max_{G \in \mathcal{N}_{\mathcal{G}_1}(\varrho)}\Big\vert \frac{1}{n_s}\sum_{i=1}^{n_s}\sum_{j=1}^{d^*}\sum_{k=1}^{d^*}\xi_{i,j,k}G_{jk} \Big\vert\Big\} + d^*\varrho.
\end{align*}
To handle the last term of (\ref{label 17}), notice that $\norm{G}_F \leq B_2^2 + \sqrt{d^*}$ implies that $\sum\limits_{j=1}^{d^*}\sum\limits_{k=1}^{d^*}\xi_{i,j,k}G_{jk} \sim \mathrm{subG}(B_2^2 + \sqrt{d^*})$. Therefore, $\frac{1}{n_s}\sum\limits_{i=1}^{n_s}\sum\limits_{j=1}^{d^*}\sum\limits_{k=1}^{d^*}\xi_{i,j,k}G_{jk} \sim \mathrm{subG}(B_2^2 + \sqrt{d^*})$, just apply Lemma~\ref{lemma: finite max sub-Gaussian inequality} to complete the proof.
\end{proof}

\subsubsection{Upper bound of \texorpdfstring{$\mathcal{E_\mathcal{F}}$}{EF}}
\label{subsection: bound EF}
If we define
\begin{align*}
\mathcal{E}(\mathcal{H}^{\alpha}, \mathcal{NN}_{d, 1}(W, L, \mathcal{K})\big) := \sup_{g \in \mathcal{H}^{\alpha}}\inf_{f \in \mathcal{NN}_{d, 1}(W ,L, \mathcal{K})}\norm{f - g}_{C([0,1]^d)},
\end{align*}
where $C([0,1]^d)$ is the space of continuous functions on $[0,1]^d$ equipped with the sup-norm.
According to Theorem 3.2 of \cite{Jiao}, we have following lemma:
\begin{lemma}[Theorem 3.2 of \cite{Jiao}]\label{lemma: theorem 3.2 of Jiao}
Let $d \in \N$ and $\alpha = r + \beta > 0$, where $r \in \N_0$ and $\beta \in (0,1]$. There exists $c > 0$ such that for any $\mathcal{K} \geq 1$, any $W \geq c\mathcal{K}^{(2d+\alpha)/(2d+2)}$ and $L \geq 2\ceil{\log_2(d+r)} + 2$,
\begin{align*}
\mathcal{E}(\mathcal{H}^\alpha, \mathcal{NN}_{d, 1}(W, L, \mathcal{K})\big) \lesssim \mathcal{K}^{-\alpha/ (d+1)}.
\end{align*}
\end{lemma}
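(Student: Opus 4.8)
The statement is quoted verbatim as Theorem~3.2 of \citet{Jiao}, so within the present paper it is invoked as a black box; the following is a sketch of the construction that proves it. The plan is to fix an arbitrary $g \in \mathcal{H}^\alpha$ and build, for an integer resolution parameter $N$ to be tuned against $\mathcal{K}$, a ReLU network $\phi \in \mathcal{NN}_{d,1}(W,L,\mathcal{K})$ with $\norm{\phi - g}_{C([0,1]^d)} \lesssim N^{-\alpha}$; bookkeeping on the size and on the quantity $\kappa(\bm\theta)$ of the construction then forces $\kappa(\bm\theta) \lesssim N^{d+1}$ and $W \gtrsim N^{d+\alpha/2}$, and the choice $N \asymp \mathcal{K}^{1/(d+1)}$ turns these into the displayed rate $\mathcal{K}^{-\alpha/(d+1)}$ together with the stated width bound $W \gtrsim \mathcal{K}^{(2d+\alpha)/(2d+2)}$.

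First I would reduce $g$ to a piecewise Taylor surrogate. Partition $[0,1]^d$ into $N^d$ cubes of side $1/N$, replace $g$ on each cube by its order-$r$ Taylor polynomial centred at the cube midpoint, and glue these local polynomials against a continuous, ReLU-representable partition of unity subordinate to the cubes. The defining bounds of the Hölder ball $\mathcal{H}^\alpha$ (the uniform bounds on $\partial^{\bm s}g$ for $\norm{\bm s}_1 \le r$ and the $\beta$-Hölder continuity of the top-order derivatives) give a uniform error $\lesssim N^{-\alpha}$ between $g$ and this surrogate, which is a finite sum of terms each equal to a localising bump function times a monomial of degree $\le r$ in the cube-shifted coordinates.

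Next I would realise the surrogate as a composition of standard ReLU gadgets, tracking the norm contribution at every step. The ingredients are: (i) trapezoidal bump functions localising to each cube, which are $\mathcal{O}(1)$-depth piecewise-linear maps whose only large parameter is a slope $\asymp N$ forced by the thin transition layers; (ii) Yarotsky's logarithmic-depth ReLU approximation of $t\mapsto t^2$, hence, via polarisation, of products, iterated $\mathcal{O}(\log(d+r))$ times to synthesise all monomials of degree $\le r$ --- this is precisely where the requirement $L \ge 2\ceil{\log_2(d+r)} + 2$ enters; and (iii) a bit-extraction / selection layer that, using the width $W$, identifies the cube containing the input and accumulates the corresponding local polynomial, doing the $N^d$ local computations in parallel. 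Since $\kappa(\bm\theta) = \norm{\bm A_L}_\infty \prod_{\ell} \max\{\norm{(\bm A_\ell,\bm b_\ell)}_\infty, 1\}$ is multiplicative across layers, each gadget contributes only an $\mathcal{O}(1)$ factor except the localisation slope ($\asymp N$) and the cube-selection step, which alone needs a product of layer $\infty$-norms $\gtrsim N^d$ to distinguish among $N^d$ cubes; combining these gives $\kappa(\bm\theta) \lesssim N^{d+1}$ up to constants depending on $d$ and $r$, while packing the construction into constant depth costs width $W \gtrsim N^{d+\alpha/2}$.

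The main obstacle --- and the genuine content of \citet{Jiao} over earlier, width-and-depth-only approximation theorems --- is the \emph{simultaneous} control of the sup-norm error and of $\kappa(\bm\theta)$: one must verify that sharp cube localisation and cube selection can be implemented with the product of layer $\infty$-norms growing no faster than $N^{d+1}$, that the rescalings needed to enforce $\kappa(\bm\theta)\le\mathcal{K}$ do not inflate the $N^{-\alpha}$ error, and that the width budget genuinely suffices for the parallel lookup. Balancing the Taylor error $N^{-\alpha}$ against the norm budget $\mathcal{K}\asymp N^{d+1}$ is what produces the exponent $\alpha/(d+1)$ in place of the $\alpha/d$ familiar from unconstrained ReLU approximation, and making that trade-off rigorous is the delicate part of the argument.
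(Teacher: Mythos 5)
The paper does not prove this lemma at all: it is imported verbatim as Theorem 3.2 of the cited reference and used as a black box, which is exactly how you treat it. Your additional sketch of the underlying norm-constrained approximation argument is plausible but goes beyond anything the paper verifies, so there is nothing in the paper to check it against; as far as this paper is concerned, your handling matches its approach.
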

Based on Lemma~\ref{lemma: theorem 3.2 of Jiao}, we yield 
\begin{align*}
&\inf_{f \in \mathcal{NN}_{d, d^*}(W, L, \mathcal{K})}\big\Vert f(\bm{x}) - f^*(\bm{x})\big\Vert_2 = \inf_{f \in \mathcal{NN}_{d, d^*}(W ,L, \mathcal{K})}\sqrt{\sum_{i=1}^{d^*}\big\{f_i(\bm{x}) - f_i^*(\bm{x})\big\}^2} \\
&\leq \inf_{f \in \mathcal{NN}_{d, d^*}(W ,L, \mathcal{K})}\sqrt{\sum_{i=1}^{d^*}\norm{f_i - f_i^*}_{C([0,1]^d)}^2} \leq \sup_{g \in \mathcal{H}^{\alpha}}\inf_{f \in \mathcal{NN}_{d, d^*}(W, L, \mathcal{K})}\sqrt{\sum_{i=1}^{d^*}\norm{f_i - g}_{C([0,1]^d)}^2} \\
&\leq \sup_{g \in \mathcal{H}^{\alpha}}\sqrt{\sum_{i=1}^{d^*}\inf_{f \in \mathcal{NN}_{d, 1}(\floor{W / d^*} ,L, \mathcal{K})}\norm{f - g}_{C([0,1]^d)}^2} \leq \sqrt{d^*}\mathcal{E}\big(\mathcal{H}^\alpha, \mathcal{NN}_{d,1}(\floor{W / d^*}, L, \mathcal{K})\big) \\
&\lesssim \mathcal{K}^{-\alpha/(d + 1)},
\end{align*}
where the third inequality is because following fact: if we have a total of $d^*$ function $f_i \in \mathcal{NN}_{d, 1}(\floor{W/ d^*}, L, \mathcal{K}), i\in [d^*]$ of independent parameters, according to following Proposition~\ref{proposition: concatenation}, the concatenation $f = (f_1, f_2, \cdots, f_{d^*})^{\top}$ can be regarded as an elements of $\mathcal{NN}_{d, d^*}(W, D, \mathcal{K})$ with specific parameters, that is, $f \in \mathcal{NN}_{d, d^*}(W, L, \mathcal{K})$.
\begin{proposition}[(iii) of Proposition 2.5 in \cite{Jiao}]\label{proposition: concatenation}
    Let $f_1 \in \mathcal{NN}_{d, d^*_1}(W_1, L_1, \mathcal{K}_1)$ and $f_2 \in \mathcal{NN}_{d, d_2^*}(W_2, L_2, \mathcal{K}_2)$, define $f(\bm{x}_s) := (f_1(\bm{x}_s), f_2(\bm{x}_s)\big)$, then $f \in \mathcal{NN}_{d, d_1^* + d_2^*}(W_1 + W_2, \max\{L_1, L_2\}, \max\{\mathcal{K}_1, \mathcal{K}_2\})$.
\end{proposition}

Above conclusion implies optimal approximation element of $f^*$ in $\mathcal{NN}_{d, d^*}(W, L, \mathcal{K})$ can be arbitrarily close to $f^*$ under the setting that $\mathcal{K}$ is large enough. Hence we can conclude optimal approximation element of $f^*$ is also contained in $\mathcal{F} = \mathcal{NN}_{d, d^*}(W, L, \mathcal{K}, B_1, B_2)$ under the setting that $B_1 \leq \norm{f^*}_2 \leq B_2$. Therefore, if we denote 
\begin{align*}
\mathcal{T}(f) := \mathop{\E}\limits_{\bm{x}_s \sim \P_s}\mathop{\E}\limits_{\mathtt{x}_{s,1}, \mathtt{x}_{s,2} \in \mathcal{A}(\bm{x}_s)}\Big\{\Big\Vert f(\mathtt{x}_{s,1}) - f(\mathtt{x}_{s,2})\Big\Vert_2^2\Big\} + \lambda\Big\Vert\mathop{\E}\limits_{\bm{x}_s \sim \P_s}\mathop{\E}\limits_{\mathtt{x}_{s,1}, \mathtt{x}_{s,2} \in \mathcal{A}(\bm{x}_s)}\big\{f(\mathtt{x}_{s,1})f(\mathtt{x}_{s,2})^{\top}\big\} - I_{d^*}\Big\Vert_F^2,
\end{align*}
then we have  
\begin{align*}
\mathcal{E_\mathcal{F}} &= \inf_{f \in \mathcal{F}}\Big\{\sup_{G \in \mathcal{G}(f)}\mathcal{L}(f, G) - \sup_{G \in \mathcal{G}(f^*)}\mathcal{L}(f^*, G)\Big\} = \inf_{f \in \mathcal{F}}\big\{\mathcal{T}(f) - \mathcal{T}(f^*)\big\} \\
&= \inf_{f \in \mathcal{NN}_{d,d^*}(W, L, \mathcal{K})}\{\mathcal{T}(f) - \mathcal{T}(f^*)\} \leq \norm{\ell}_{\mathrm{Lip}}\inf_{f \in \mathcal{NN}_{d, d^*}(W ,L, \mathcal{K})}\E_{\bm{x}_s \sim \P_s}\E_{\tilde{\mathtt{x}}_s}\norm{\tilde{f}(\tilde{\mathtt{x}}_s) - \tilde{f^*}(\tilde{\mathtt{x}}_s)}_2 \\
&\leq \norm{\ell}_{\mathrm{Lip}}\inf_{f \in \mathcal{NN}_{d, d^*}(W ,L, \mathcal{K})}\E_{\bm{x}_s \sim \P_s}\E_{\mathtt{x}_s \in \mathcal{A}(\bm{x}_s)}\sqrt{2\sum_{i=1}^{d^*}\big\{f_i(\mathtt{x}_s) - f_i^*(\mathtt{x}_s)\big\}^2} \\
&\leq \sqrt{2d^*} \norm{\ell}_{\mathrm{Lip}} \sup_{g \in \mathcal{H}^{\alpha}}\inf_{f \in \mathcal{NN}_{d, 1}(\lfloor{W / d^*}\rfloor, L, \mathcal{K} / \sqrt{d^*})}\norm{f - g}_{C([0,1]^d)} \\
&\leq \sqrt{2d^*} \norm{\ell}_{\mathrm{Lip}} \mathcal{E}\big(\mathcal{H}^{\alpha}, \mathcal{NN}_{d, 1}(\floor{W / d^*}, L, \mathcal{K} / \sqrt{d^*})\big) \\
&\lesssim \mathcal{K}^{-\alpha/(d+1)}. 
\end{align*}
where the first inequality is because of Proposition~\ref{proposition: lipschitz of ell}.

\subsubsection{Upper bound of \texorpdfstring{$\mathcal{E}_{\mathcal{G}}$}{EG}} \label{subsection: bound EG}
Let $\mathcal{M}(\bm{x}) = \bm{x}_1\bm{x}_2^{\top}$, $\bm{x}_1, \bm{x}_2 \in \R^{d^*}$, which is a Lipchitz map on $\big\{\bm{x} \in \R^{2d^*} : \bm{x} \leq \sqrt{2}B_2\big\}$ as presented in Proposition~\ref{proposition: lipschitz of ell}. Then
\begin{align*}
\mathcal{E}_{\mathcal{G}} &\lesssim \E_{\widetilde{D}_s}\Big\{\sup_{f \in \mathcal{F}}\Big\Vert  \E_{\bm{x}_s \sim \P_s}\E_{\mathtt{x}_{s,1}, \mathtt{x}_{s,2} \in \mathcal{A}(\bm{x}_s)}\Big[\frac{1}{n_s}\sum_{i=1}^{n_s} \Big\{\mathcal{M}\big(\tilde{f}(\tilde{\mathtt{x}}_s)\big) - \mathcal{M}\big(\tilde{f}(\tilde{\mathtt{x}}_s^{(i)})\big)\Big\}\Big]\Big\Vert_F\Big\} \\
&\leq \norm{\mathcal{M}}_{\mathrm{Lip}}\E_{\widetilde{D}_s}\Big[\Big\Vert\E_{\bm{x}_s \sim \P_s}\E_{\mathtt{x}_{s,1}, \mathtt{x}_{s,2} \in \mathcal{A}(\bm{x}_s)}\big\{\tilde{f}(\tilde{\mathtt{x}}_s)\big\} - \frac{1}{n_s}\sum_{i=1}^{n_s}\tilde{f}(\tilde{\mathtt{x}}_s^{(i)})\Big\Vert_2\Big]
\end{align*}
Furthermore, according to the multidimensional Chebyshev's inequality, we can turn out $\P_s\Big(\Big\Vert\frac{1}{n_s}\sum_{i=1}^{n_s}\tilde{f}(\tilde{\mathtt{x}}_s^{(i)}) - \E_{\bm{x}_s \sim \P_s}\E_{\mathtt{x}_{s,1}, \mathtt{x}_{s,2} \in \mathcal{A}(\bm{x}_s)}\big\{\tilde{f}(\tilde{\mathtt{x}}_s)\big\}\Big\Vert_2 \geq \frac{1}{n_s^{1/4}}\Big) \leq \frac{\E\norm{\tilde{f}(\tilde{\mathtt{x}}_s) - \E\{\tilde{f}(\tilde{\mathtt{x}}_s)\}}_2^2}{\sqrt{n_s}} \leq \frac{8B^2_2}{\sqrt{n_s}}$ as $\big\Vert\tilde{f}(\tilde{\mathtt{x}}_s)\big\Vert_2 \leq \sqrt{2}B_2$. Therefore,
\begin{align*}
\mathcal{E}_{\mathcal{G}} &\lesssim \frac{1}{n_s^{1/4}} \cdot \P_s\Big(\Big\Vert\frac{1}{n_s}\sum_{i=1}^{n_s}\tilde{f}(\tilde{\mathtt{x}}_s^{(i)}) - \E_{\bm{x}_s \sim \P_s}\E_{\mathtt{x}_{s,1}, \mathtt{x}_{s,2} \in \mathcal{A}(\bm{x}_s)}\big\{\tilde{f}(\tilde{\mathtt{x}}_s)\big\}\Big\Vert_2 \geq \frac{1}{n_s^{1/4}}\Big) + 2\sqrt{2}B_2\cdot\frac{8B_2^2}{\sqrt{n_s}}\\
&\leq \frac{1}{n_s^{1/4}} + 16\sqrt{2}B_2^3\frac{1}{\sqrt{n_s}} \lesssim \frac{1}{n_s^{1/4}},
\end{align*}
where the first inequity is due to $\big\Vert\tilde{f}(\tilde{\mathtt{x}}_s)\big\Vert_2 \leq \sqrt{2}B_2$.

\subsubsection{Trade-off on several errors}
\label{subsection: trade off between statistical error and approximation error}
Let $W \geq c\mathcal{K}^{(2d+\alpha)/(2d+2)}$ and $L \geq 2\ceil{\log_2(d+r)} + 2$, combining all bounds yields
\begin{align*}
\E_{\widetilde{D}_s}\big\{\mathcal{L}(\hat{f}_{n_s})\big\} &\lesssim \mathcal{E}_{\mathrm{sta}} + \mathcal{E_{F}} + \mathcal{E}_{\mathcal{G}} \lesssim \frac{\mathcal{K}}{\sqrt{n_s}} + \mathcal{K}^{-\alpha/(d+1)}.
\end{align*}
Setting $\mathcal{K} \asymp n_s^{\frac{d + 1}{2(\alpha + d + 1)}}$ yields $\E_{\widetilde{D}_s}\{\mathcal{L}(\hat{f}_{n_s})\} \lesssim n_s^{-\frac{\alpha}{2(\alpha + d + 1)}}$ under conditions $W \geq c n_s^\frac{2d + \alpha}{4(\alpha + d + 1)}$ $L  \geq 2\ceil{\log_2(d+r)} + 2$.
\subsubsection{The proof of primary theorem}
Based on the previous preparation, we next prove the primary theorem~\ref{theorem: sample theorem}. Before that, we summary here all crucial conclusions which have obtained so far.
\begin{itemize}
    \item If $W \gtrsim n_s^{\frac{2d+\alpha}{4(\alpha + d + 1)}}, L \geq 2\ceil{\log_2(d+r)} + 2, \mathcal{K} \asymp n_s^{\frac{d+1}{2(\alpha + d + 1)}}$, then $\E_{\widetilde{D}_s}\big\{\mathcal{L}(\hat{f}_{n_s})\big\} \lesssim n_s^{-\frac{\alpha}{2(\alpha + d + 1)}}$.
    \item According to Assumption~\ref{assumption: existence of strong data augmentation sequence}, $\max\{\delta_s^{(n_s)}, \delta_t^{(n_s)}\} \lesssim n_s^{-\frac{\epsilon_{\mathcal{A}} + d + 1}{2(\alpha +d + 1)}}, \min\{\sigma_s^{(n_s)}, \sigma_t^{(n_s)}\} \to 1$ when $n_s \to \infty$.
    \item According to Assumption~\ref{assumption: distributions shift}, $\epsilon_1 \lesssim n_s^{-\frac{\epsilon_{\mathrm{ds} + d + 1}}{2(\alpha + d + 1)}}, \epsilon_2 \lesssim n_s^{-\frac{\epsilon_{\mathrm{ds}}}{2(\alpha + d + 1)}}$.
    \item According to Lemma~\ref{lemma: E[Err] < E[L(f^hat)]}, we have
    \begin{align*}
        \E_{\widetilde{D}_s}\big\{\max_{i \neq j}\abs{\mu_t(i)^{\top}\mu_t(j)}\big\} \lesssim \sqrt{\E_{\widetilde{D}_s}\big\{\mathcal{L}(\hat{f}_{n_s})\big\} + \E_{\widetilde{D}_s}\big\{\varphi(\sigma_s^{(n_s)}, \delta_s^{(n_s)}, \eps_{n_s}, \hat{f}_{n_s})\big\}} + \mathcal{K}\epsilon_1.
    \end{align*}
    where $\E_{\widetilde{D}_s}\left\{\varphi\big(\sigma_s^{(n_s)}, \delta_s^{(n_s)}, \eps_{(n_s)}, R_s(\eps, \hat{f}_{n_s})\big)\right\} \lesssim \big(1 - \sigma_s^{(n_s)} + \mathcal{K}\delta_s^{(n_s)} + 2\eps_{n_s}\big)^2 + \frac{1}{\eps_{n_s}}\sqrt{\E_{\widetilde{D}_s}\{\mathcal{L}(\hat{f}_{n_s})\}}\big(3 - 2\sigma_s^{(n_s)} + \mathcal{K}\delta_s^{(n_s)} + 2\eps_{n_s}\big) + \frac{1}{\eps_{n_s}^2}\E_{\widetilde{D}_s}\{\mathcal{L}(\hat{f}_{n_s})\} + (1 - \sigma_s^{(n_s)}) + \Big(\eps_{n_s}^2 + \frac{1}{\eps_{n_s}}\sqrt{\E_{\widetilde{D}_s}\{\mathcal{L}(\hat{f}_{n_s})\}}\Big)^{\frac{1}{2}}$. Furthermore, if $\max_{i \neq j}\abs{\mu_t(i)^{\top}\mu_t(j)} < B_2^2\psi(\sigma_t, \delta_t, \eps, f)$, then
    \begin{align}
    \mathrm{Err}(Q_{\hat{f}_{n_s}}) \lesssim (1 - \sigma_t) + \frac{1}{\eps}\sqrt{\mathcal{L}(\hat{f}_{n_s}) + \mathcal{K}\epsilon_1 + \epsilon_2},
    \end{align}
    In addition, the following inequalities always hold
    \begin{align}
        \E_{\widetilde{D}_s}\big\{R_s(\eps_{n_s}, \hat{f}_{n_s})\big\} &\lesssim \frac{1}{\eps}\sqrt{\E_{\widetilde{D}_s}\big\{\mathcal{L}(\hat{f}_{n_s})\big\}}
    \end{align}
    \begin{align*}
    \E_{\widetilde{D}_s}\{R_t(\eps_{n_s}, \hat{f}_{n_s})\} &\lesssim \frac{1}{\eps}\sqrt{\E_{\widetilde{D}_s}\big\{\mathcal{L}(\hat{f}_{n_s})\big\} + \mathcal{K}\epsilon_1 + \epsilon_2}.
    \end{align*}
\end{itemize}

\begin{mythm}{\ref{theorem: sample theorem}}
When Assumptions~\ref{assumption: f*∈Hölder}-\ref{assumption: distributions shift} all hold, set $\eps_{n_s} \asymp n_s^{-\frac{\min\{\alpha, \epsilon_{\mathrm{ds}}, \epsilon_{\mathcal{A}}\}}{8(\alpha + d + 1)}}, W \gtrsim n_s^\frac{2d + \alpha}{4(\alpha + d + 1)}$, $L  \geq 2\ceil{\log_2(d+r)} + 2, \mathcal{K} \asymp n_s^{\frac{d + 1}{2(\alpha + d + 1)}}$ and $\mathcal{A} = \mathcal{A}_{n_s}$ in Assumption \ref{assumption: existence of strong data augmentation sequence}, then we have
\begin{align*}
\E_{\widetilde{D}_s, \widetilde{D}_t}\big\{\mathrm{Err}(Q_{\hat{f}_{n_s}})\big\} \lesssim \big(1 - \sigma_s^{(n_s)}\big) + {n_s}^{-\frac{\min\{\alpha, \epsilon_{\mathcal{A}}, \epsilon_{\mathrm{ds}}\}}{32(\alpha + d + 1)}}+ \frac{1}{\min_{k}\sqrt{n_t(k)}}
\end{align*}
for sufficiently large $n_s$.
\end{mythm}

\begin{proof}
\label{proof of primary theorem}
Define $\mathcal{C} = \big\{\max_{i \neq j}\abs{\mu_t(i)^\top\mu_t(j)} < B_2^2\psi(\sigma_t^{(n_s)}, \delta_t^{(n_s)}, \eps_{n_s}, \hat{f}_{n_s})\big\}$, which is a event defined on the product measure space $(\mathcal{X}_s\times \mathcal{X}_t, \P)$ with $\P$ is the joint distribution on $\mathcal{X}_s\times \mathcal{X}_t$.

\begin{align}\label{eq: E[Err(Q)] < (1 - sigma) + E[L] + P}
    \E_{\widetilde{D}_s, \widetilde{D}_t}\big\{\mathrm{Err}(Q_{\hat{f}_{n_s}})\big\} &= \E_{\widetilde{D}_s, \widetilde{D}_t}\big\{\mathrm{Err}(Q_{\hat{f}_{n_s}})\1_{\mathcal{C}}\big\} + \E_{\widetilde{D}_s, \widetilde{D}_t}\big\{\mathrm{Err}(Q_{\hat{f}_{n_s}})\1_{\mathcal{C}^c}\big\} \nonumber\\
    &\leq \E_{\widetilde{D}_s, \widetilde{D}_t}\Big[\{(1 - \sigma_t^{(n_s)}) + R_t(\eps_{n_s}, \hat{f}_{n_s})\}\1_{\mathcal{C}}\Big] + \E_{\widetilde{D}_s, \widetilde{D}_t}\big(\1_{\mathcal{C}^c}\big) \nonumber \\
    &\leq \big(1 - \sigma_t^{(n_s)}\big) + \E_{\widetilde{D}_s}\big\{R_t(\eps_{n_s}, \hat{f}_{n_s})\big\} + \P(\mathcal{C}^c) \nonumber \\
    &\lesssim \big(1 - \sigma_t^{(n_s)}\big) + \eps^{-1}\E_{\widetilde{D}_s}\Big[\big\{\mathcal{L}(\hat{f}_{n_s}) + \epsilon_1 + \epsilon_2\big\}^{\frac{1}{2}}\Big] + \P(\mathcal{C}^c) \nonumber \\
    &\leq \big(1 - \sigma_t^{(n_s)}\big) + \eps^{-1}\Big[\E_{\widetilde{D}_s}\big\{\mathcal{L}(\hat{f}_{n_s})\big\} + \epsilon_1 + \epsilon_2\Big]^{\frac{1}{2}} + \P(\mathcal{C}^c)
\end{align}
Since we have known the sample complexity of each terms except for $\P(\mathcal{C}^c)$, the remaining question is to estimate $\P(\mathcal{C}^c)$. To this end,  first recall $\psi(\sigma_t^{(n_s)}, \delta_t^{(n_s)}, \eps_{n_s}, \hat{f}_{n_s}) = \Gamma_{\min}(\sigma_t^{(n_s)}, \delta_t^{(n_s)}, \eps_{n_s}, \hat{f}_{n_s}) - \sqrt{2 - 2\Gamma_{\min}(\sigma_t^{(n_s)}, \delta_t^{(n_s)}, \eps_{n_s},  \hat{f}_{n_s})} - \frac{1}{2}\Big(1 - \frac{\min_{k \in [K]}\norm{\hat{\mu}_t(k)}_2^2}{B_2^2}\Big) - \frac{2\max_{k \in [K]}\norm{\hat{\mu}_t(k) - \mu_t(k)}_2}{B_2}$.
Notice that~\eqref{eq: E[Rt] < E[L]} and dominated convergence theorem imply $R_t(\eps_{n_s}, \hat{f}_{n_s}) \rightarrow 0$ a.s., thus 
\begin{align*}
    \Gamma_{\min}(\sigma_t^{(n_s)}, \delta_t^{(n_s)}, \eps_{n_s}, \hat{f}_{n_s}) &= \Big(\sigma_t^{(n_s)} - \frac{R_t(\eps_{n_s}, \hat{f}_{n_s})}{\min_ip_t(i)}\Big)\Big(1 + \Big(\frac{B_1}{B_2}\Big)^2 - \frac{\mathcal{K}\delta_t^{(n_s)}}{B_2} - \frac{2\eps_{n_s}}{B_2}\Big) - 1 \\
    &\to \Big(\frac{B_1}{B_2}\Big)^2.
\end{align*}
Combining with the fact that $\frac{1 - \min_{k \in [K]}\norm{\hat{\mu}_t(k)}_2^2/B_2^2}{2} < \frac{1}{2}$ can yield 
\begin{align*}
    \Gamma_{\min}(\sigma_t^{(n_s)}, \delta_t^{(n_s)}, \eps_{n_s}, \hat{f}_{n_s}) &- \sqrt{2 - 2\Gamma_{\min}(\sigma_t^{(n_s)}, \delta_t^{(n_s)}, \eps_{n_s}, \hat{f}_{n_s})} - \frac{1}{2}\Big(1 - \min\limits_{k \in [K]}\norm{\hat{\mu}_t(k)}_2^2/B_2^2\Big) \\
    &> \frac{1}{2},
\end{align*}
if $B_1$ is sufficiently close to $B_2$. On the other hand, by Multidimensional Chebyshev's inequality, we yield
\begin{align*}
    \P_t\Big(\Big\Vert\hat{\mu}_t(k) - \mu_t(k)\Big\Vert_2 \geq \frac{B_2}{8}\Big) \leq \frac{64\sqrt{\E_{\bm{x}_t \in \widetilde{C}_t(k)}\E_{\mathtt{x}_t \in \mathcal{A}(\bm{x}_t)}\big\Vert f(\mathtt{x}_t) - \mu_t(k)\big\Vert_2^2}}{B_2^2\sqrt{2n_t(k)}} \leq \frac{128}{B_2\sqrt{n_t(k)}},
\end{align*}
which implies that $\psi(\sigma_t^{(n_s)}, \delta_t^{(n_s)}, \eps_{n_s}, \hat{f}_{n_s}) \geq \frac{1}{4}$ with probability at least $1 - \frac{128K}{B_2\sqrt{\min_k n_t(k)}}$ when $n_s$ is sufficiently large. Therefore, with probability at least $1 - \mathcal{O}\big(\frac{1}{\min_{k}\sqrt{n_t(k)}}\big)$, we have $\mathcal{C}^c \subseteq \big\{\max_{i \neq j}\abs{\mu_t(i)^\top\mu_t(j)} \geq \frac{B_2^2}{8}\big\}$

        \begin{align*}
        \P(\mathcal{C}^c) &= \P_s\Big(\mathcal{C}^c \Big\vert\mathcal{C}^c \subseteq \big\{\max_{i \neq j}\abs{\mu_t(i)^\top\mu_t(j)} \geq \frac{B_2^2}{8}\big\}\Big)\cdot\P_t\Big(\mathcal{C}^c \subseteq \big\{\max_{i \neq j}\abs{\mu_t(i)^\top\mu_t(j)} \geq \frac{B_2^2}{8}\big\}\Big) \\
        &\quad+\P_s\Big(\mathcal{C}^c \Big\vert\mathcal{C}^c \not\subseteq \big\{\max_{i \neq j}\abs{\mu_t(i)^\top\mu_t(j)} \geq \frac{B_2^2}{8}\big\}\Big)\cdot\P_t\Big(\mathcal{C}^c \not\subseteq\big\{\max_{i \neq j}\abs{\mu_t(i)^\top\mu_t(j)} \geq \frac{B_2^2}{8}\big\}\Big) \\
        &\leq \P_s\Big(\max_{i \neq j}\abs{\mu_t(i)^\top\mu_t(j)} \geq \frac{B_2^2}{8} \Big\vert\mathcal{C}^c \subseteq \big\{\max_{i \neq j}\abs{\mu_t(i)^\top\mu_t(j)} \geq \frac{B_2^2}{8}\big\}\Big)  \\
        &\quad +  \P_t\Big(\mathcal{C}^c \not\subseteq\big\{\max_{i \neq j}\abs{\mu_t(i)^\top\mu_t(j)} \geq \frac{B_2^2}{8}\big\}\Big)\\
        &\leq \P_s\Big(\max_{i \neq j}\abs{\mu_t(i)^\top\mu_t(j)} \geq \frac{B_2^2}{8}\Big) / \P_t\Big(\mathcal{C}^c \subseteq \big\{\max_{i \neq j}\abs{\mu_t(i)^\top\mu_t(j)} \geq \frac{B_2^2}{8}\big\}\Big)  \\
        &\quad +  \P_t\Big(\mathcal{C}^c \not\subseteq\big\{\max_{i \neq j}\abs{\mu_t(i)^\top\mu_t(j)} \geq \frac{B_2^2}{8}\big\}\Big)\\
        &\leq \frac{\P_s\Big(\max_{i \neq j}\abs{\mu_t(i)^\top\mu_t(j)} \geq \frac{B_2^2}{8}\Big)}{1 - \mathcal{O}\big(1 / \min_k\sqrt{n_t(k)}\big)}  +  \P_t\Big(\mathcal{C}^c \not\subseteq\big\{\max_{i \neq j}\abs{\mu_t(i)^\top\mu_t(j)} \geq \frac{B_2^2}{8}\big\}\Big) \\
        &\lesssim \P_s\Big(\max_{i \neq j}\abs{\mu_t(i)^\top\mu_t(j)} \geq \frac{B_2^2}{8}\Big) +  \P_t\Big(\mathcal{C}^c \not\subseteq\big\{\max_{i \neq j}\abs{\mu_t(i)^\top\mu_t(j)} \geq \frac{B_2^2}{8}\big\}\Big)\\
        &\lesssim  (1 - \sigma_s^{(n_s)}) + {n_s}^{-\frac{\min\{\alpha, 2\epsilon_{\mathcal{A}}\}}{16(\alpha + d + 1)}}+ \frac{1}{\min_{k}\sqrt{n_t(k)}}.
    \end{align*}
    wherein, as for the term $\P_s\Big(\max_{i \neq j}\abs{\mu_t(i)^\top\mu_t(j)} \geq \frac{B_2^2}{8}\Big)$ in the last inequality, applying Markov inequality obtains
    \begin{align*}
        P_s\Big(\max_{i \neq j}\abs{\mu_t(i)^\top\mu_t(j)} \geq \frac{B_2^2}{8}\Big) &\lesssim \E_{\widetilde{D}_s}\Big[\max_{i \neq j}\Big\vert\mu_t(i)^\top\mu_t(j)\Big\vert\Big] \\
        &\lesssim \sqrt{\E_{\widetilde{D}_s}\big\{\mathcal{L}(\hat{f}_{n_s})\big\} + \E_{\widetilde{D}_s}\big\{\varphi(\sigma_s^{(n_s)}, \delta_s^{(n_s)}, \eps_{n_s}, \hat{f}_{n_s})\big\}} + \mathcal{K}\epsilon_1,
    \end{align*}
    where the sample complexities of both $\E_{\widetilde{D}_s}\big\{\mathcal{L}(\hat{f}_{n_s})\big\}$ and $\epsilon_1$ on the right-hand side has been thoroughly explored. Therefore, the final step is to investigate the sample complexity of $\E_{\widetilde{D}_s}\big\{\varphi(\sigma_s^{(n_s)}, \delta_s^{(n_s)}, \eps_{n_s}, \hat{f}_{n_s})\big\}$. In fact,
    \begin{align*}
        &\E_{\widetilde{D}_s}\{\varphi\big(\sigma_s^{(n_s)}, \delta_s^{(n_s)}, \eps_{(n_s)}, R_s(\eps, \hat{f}_{n_s})\big)\} \lesssim \big(1 - \sigma_s^{(n_s)} + \mathcal{K}\delta_s^{(n_s)} + 2\eps_{n_s}\big)^2 + \frac{1}{\eps_{n_s}}\sqrt{\E_{\widetilde{D}_s}\{\mathcal{L}(\hat{f}_{n_s})\}}\\
        &\quad\big(3 - 2\sigma_s^{(n_s)} + \mathcal{K}\delta_s^{(n_s)} + 2\eps_{n_s}\big) + \frac{1}{\eps_{n_s}^2}\E_{\widetilde{D}_s}\{\mathcal{L}(\hat{f}_{n_s})\} + (1 - \sigma_s^{(n_s)}) + \Big(\eps_{n_s}^2 + \frac{1}{\eps_{n_s}}\sqrt{\E_{\widetilde{D}_s}\{\mathcal{L}(\hat{f}_{n_s})\}}\Big)^{\frac{1}{2}} \\
        &\leq \big(1 - \sigma_s^{(n_s)} + \mathcal{K}\delta_s^{(n_s)} + 2\eps_{n_s}\big) + \frac{2}{\eps_{n_s}}\sqrt{\E_{\widetilde{D}_s}\{\mathcal{L}(\hat{f}_{n_s})\}} +\frac{1}{\eps_{n_s}^2}\E_{\widetilde{D}_s}\{\mathcal{L}(\hat{f}_{n_s})\}\\
        &\quad + (1 - \sigma_s^{(n_s)} + \mathcal{K}\delta_s^{(n_s)} + 2\eps_{n_s}\big) + \Big(\eps_{n_s}^2 + \frac{1}{\eps_{n_s}}\sqrt{\E_{\widetilde{D}_s}\{\mathcal{L}(\hat{f}_{n_s})\}}\Big)^{\frac{1}{2}} \\
        &\lesssim \big(1 - \sigma_s^{(n_s)}\big) + n_s^{-\frac{\min\{\alpha, \epsilon_{\mathcal{A}}, \epsilon_{\mathrm{ds}}\}}{16(\alpha + d + 1)}}.  
    \end{align*}
 Substituting this conclusion back to~\eqref{eq: E[Err(Q)] < (1 - sigma) + E[L] + P} yields our conclusion, that is
 \begin{align*}
     \E_{\widetilde{D}_s, \widetilde{D}_t}\big\{\mathrm{Err}(Q_{\hat{f}_{n_s}})\big\} \lesssim \big(1 - \sigma_s^{(n_s)}\big) + {n_s}^{-\frac{\min\{\alpha, \epsilon_{\mathcal{A}}, \epsilon_{\mathrm{ds}}\}}{32(\alpha + d + 1)}}+ \frac{1}{\min_{k}\sqrt{n_t(k)}}.
 \end{align*}
 \end{proof}
\section{Auxiliary Lemmas}
\label{section: auxiliary lemmas}
\subsection{\texorpdfstring{$\mathcal{K}$}{K}-Lipschitz property of \texorpdfstring{$\mathcal{NN}_{d_1,d_2}(W, L, \mathcal{K}, B_1, B_2)$}{NN}}\label{subsection: Lip < K}
\begin{proof}
To demonstrate that any function $\phi \in \mathcal{NN}_{d_1, d_2}(W, L, \mathcal{K}, B_1, B_2)$ is a $\mathcal{K}$-Lipschitz function, we first define two special classes. The first class is given by
\begin{align} \label{eq: NN class}
    \mathcal{NN}_{d_1, d_2}(W, L, \mathcal{K}) := \Big\{\phi(\bm{x}) = A_L\sigma\big(A_{L-1}\sigma(\cdots\sigma\big(A_0\bm{x})\big): \kappa(\bm{\theta}) \leq \mathcal{K}\Big\},
\end{align}
which is equivalent to $\mathcal{NN}_{d_1,d_2}(W,L,\mathcal{K},B_{1},B_{2})$ when ignoring the condition $\|\phi\|_2 \in [B_1, B_2]$. The second class is defined as
\begin{align*}
\mathcal{SNN}_{d_1, d_2}(W, L, \mathcal{K}) := \{\tilde{\phi}(\bm{x}) = \tilde{A}_L\sigma(\tilde{A}_{L-1}\sigma(\cdots\sigma(\tilde{A}_0\tilde{\bm{x}})\big): \prod_{l=1}^L\|\tilde{A}_l\|_\infty \leq \mathcal{K}\}, \quad \tilde{\bm{x}} := \begin{pmatrix}
\bm{x} \\
1
\end{pmatrix},
\end{align*}
where $\tilde{A}_l \in \mathbb{R}^{N_{l+1} \times N_l}$ with $N_0 = d_1 + 1$.

It is clear that $\mathcal{NN}_{d_1,d_2}(W, L, \mathcal{K}, B_1, B_2) \subseteq \mathcal{NN}_{d_1,d_2}(W, L, \mathcal{K})$, and every element in $\mathcal{SNN}_{d_1, d_2}(W, L, \mathcal{K})$ is a $\mathcal{K}$-Lipschitz function due to the 1-Lipschitz property of the ReLU activation function. Thus, it suffices to show that 
\begin{align*}
\mathcal{SNN}_{d_1, d_2}(W, L, \mathcal{K}) \subseteq \mathcal{NN}_{d_1, d_2}(W, L, \mathcal{K}) \subseteq \mathcal{SNN}_{d_1, d_2}(W + 1, L, \mathcal{K})
\end{align*}
to establish our claim.

To begin, any function $\phi(\bm{x}) = A_L\sigma\big(A_{L-1}\sigma(\cdots\sigma\big(A_0\bm{x} + \bm{b}_0)\big) + \bm{b}_{L-1}) \in \mathcal{NN}_{d_1, d_2}(W, L, \mathcal{K})$ can be restructured as 
$\tilde{\phi}(\bm{x}) = \tilde{A}_L\sigma(\tilde{A}_{L-1}\sigma(\cdots \sigma(\tilde{A}_0\tilde{\bm{x}})\big))$, where
\begin{align*}
\tilde{\bm{x}} := \begin{pmatrix}
\bm{x} \\ 1
\end{pmatrix}, \quad \tilde{A} = \big(A_L, \bm{0}), \quad \tilde{A}_l = 
\begin{pmatrix}
A_l & \bm{b}_l \\
\bm{0} & 1
\end{pmatrix}, \quad l = 0, \ldots, L-1.
\end{align*}

Notably, we have $\prod_{l=0}^L \|\tilde{A}_l\|_\infty = \|A_L\|_\infty \prod_{l=0}^{L-1} \max\{\|\big(A_l, \bm{b}_l)\|_\infty, 1\} = \kappa(\bm{\theta}) \leq \mathcal{K}$, which implies that $\phi \in \mathcal{SNN}_{d_1, d_2}(W + 1, L, \mathcal{K})$.

Conversely, since any $\tilde{\phi} \in \mathcal{SNN}(W, L, \mathcal{K})$ can also be parameterized as $A_L\sigma\big(A_{L-1}\sigma(\cdots\sigma\big(A_0\bm{x} + \bm{b}_0)\big) + \bm{b}_{L-1})$ with $\bm{\theta} = (\tilde{A}_0, (\tilde{A}_1, \bm{0}), \ldots, (\tilde{A}_{L-1}, \bm{0}), \tilde{A}_L)$, we can use the absolute homogeneity of the ReLU function to rescale $\tilde{A}_l$ such that $\|\tilde{A}_L\|_\infty \leq \mathcal{K}$ and $\|\tilde{A}_l\|_\infty = 1$ for $l \neq L$. Consequently, we have $\kappa(\bm{\theta}) = \prod_{l=0}^L \|\tilde{A}_l\|_\infty \leq \mathcal{K}$, which yields $\tilde{\phi} \in \mathcal{NN}(W, L, \mathcal{K})$. This completes the proof.
\end{proof}

\subsection{Lipschitz property of \texorpdfstring{$\ell$}{l}}
\begin{proposition}
\label{proposition: lipschitz of ell}
$\ell$ is a Lipschitz function on the domain $\{\bm{x} \in \R^{2d^*} : \norm{\bm{x}}_2 \leq \sqrt{2}B_2\} \times \mathcal{G}_1$.
\begin{proof}
We begin by proving \( \|\ell(\cdot, G)\|_{\mathrm{Lip}} < \infty \) for any fixed \( G \in \mathcal{G}_1 \). Let \( \bm{x} = (\bm{x}_1, \bm{x}_2) \in \R^{2d^*}\), where \( \bm{x}_1, \bm{x}_2 \in \mathbb{R}^{d^*} \). We first demonstrate that \( g(\bm{x}) = \|\bm{x}_1 - \bm{x}_2\|_2^2 \) is a Lipschitz function. To this end, let \( p(\bm{x}) := \bm{x}_1 - \bm{x}_2 \), then we have:
\begin{align*}
\|p(\bm{x}) - p(\bm{y})\|_2^2 &= \|\bm{x}_1 - \bm{x}_2 - \bm{y}_1 + \bm{y}_2\|_2^2 \leq \left(\|\bm{x}_1 - \bm{y}_1\|_2 + \|\bm{x}_2 - \bm{y}_2\|_2\right)^2 \\
&= \|\bm{x}_1 - \bm{y}_1\|_2^2 + \|\bm{x}_2 - \bm{y}_2\|_2^2 + 2\|\bm{x}_1 - \bm{y}_1\|_2 \|\bm{x}_2 - \bm{y}_2\|_2 \\
&\leq 2\left(\|\bm{x}_1 - \bm{y}_1\|_2^2 + \|\bm{x}_2 - \bm{y}_2\|_2^2\right) = 2\|\bm{x} - \bm{y}\|_2^2,
\end{align*}
which implies that \( p(\bm{x}) \in \mathrm{Lip}(\sqrt{2}) \). Moreover, it is easy to notice that \( p \) satisfies \( \|p(\bm{x})\|_2 = \|\bm{x}_1 - \bm{x}_2\|_2 \leq \|\bm{x}_1\|_2 + \|\bm{x}_2\|_2 \leq 2\|\bm{x}\|_2 \leq 2\sqrt{2}B_2 \). on the other hand, let \( q(\bm{y}) := \|\bm{y}\|_2^2 \). We have:
\begin{align*}
\left\|\frac{\partial q}{\partial \bm{y}}\big(p(\bm{x})\big)\right\|_2 = 2\big\|p(\bm{x})\big\|_2 \leq 4\sqrt{2}B_2.
\end{align*}
Combining these facts together, we know \( g(\bm{x}) = q(p(\bm{x})\big) = \|\bm{x}_1 - \bm{x}_2\|_2^2 \in \mathrm{Lip}(8B_2) \). Now, we show that \( h(\bm{x}) = \langle \bm{x}_1 \bm{x}_2^{\top} - I_{d^*}, G \rangle_F \) is also a Lipschitz function. Define \( r(\bm{x}) := \bm{x}_1 \bm{x}_2^{\top} \). We have:
\begin{align*}
\|r(\bm{x}) - r(\bm{y})\|_F &= \|\bm{x}_1 \bm{x}_2^{\top} - \bm{y}_1 \bm{y}_2^{\top}\|_F = \|\bm{x}_1 \bm{x}_2^{\top} - \bm{x}_1 \bm{y}_2^{\top} + \bm{x}_1 \bm{y}_2^{\top} - \bm{y}_1 \bm{y}_2^{\top}\|_F \\
&= \|\bm{x}_1 (\bm{x}_2 - \bm{y}_2)^{\top} + (\bm{x}_1 - \bm{y}_1) \bm{y}_2^{\top}\|_F \leq \|\bm{x}_1\|_F \|\bm{x}_2 - \bm{y}_2\|_F + \|\bm{x}_1 - \bm{y}_1\|_F \|\bm{y}_2\|_F \\
&\leq (\|\bm{x}_1\|_2 + \|\bm{y}_2\|_2) \|\bm{x} - \bm{y}\|_2 \leq 2\sqrt{2}B_2 \|\bm{x} - \bm{y}\|_2.
\end{align*}
Additionally, define \( t(A) := \langle A - I_{d^*}, G \rangle_F \). It is obvious that \( \|\nabla t(A)\|_F = \|G\|_F \leq B_2^2 + \sqrt{d^*} \). Based on these, we can conclude \( h(\bm{x}) = t(r(\bm{x})\big) \in \mathrm{Lip}(2\sqrt{2}B_2(B_2^2 + \sqrt{d^*})\big) \).
By combining above results, we yield for any \( G \in \mathcal{G}_1 \), \( \|\ell(\cdot, G)\|_{\mathrm{Lip}} < \infty \) on the domain \( \{\bm{x} : \|\bm{x}\|_2 \leq \sqrt{2}B_2\} \). Furthermore, for a fixed \( \bm{x} \in \mathbb{R}^{2d^*} \) such that \( \|\bm{x}\|_2 \leq \sqrt{2}B_2 \), we have
\begin{align*}
|\ell(\bm{x}, G_1) - \ell(\bm{x}, G_2)| &= |\langle \bm{x}, G_1 - G_2 \rangle_F| \leq \|\bm{x}\|_2 \|G_1 - G_2\|_F = \sqrt{2}B_2 \|G_1 - G_2\|_F,
\end{align*}
which implies that \( \ell(\bm{x}, \cdot) \in \mathrm{Lip}(\sqrt{2}B_2) \). Finally, we can conclude
\begin{align*}
\big|\ell(\bm{x}_1, G_1) - \ell(\bm{x}_2, G_2)\big|^2 &\leq \big\{\big|\ell(\bm{x}_1, G_1) - \ell(\bm{x}_2, G_1)\big| + \big|\ell(\bm{x}_2, G_1) - \ell(\bm{x}_2, G_2)\big|\big\}^2 \\
&\leq \left[\big\{\sqrt{2} + 2\sqrt{2}B_2(B_2^2 + \sqrt{d^*})\big\} \big\|\bm{x}_1 - \bm{x}_2\big\|_2 + \sqrt{2}B_2 \big\|G_1 - G_2\big\|_F\right]^2 \\
&\leq 2\Big\{\sqrt{2} + 2\sqrt{2}B_2(B_2^2 + \sqrt{d^*})\Big\}^2 \big\|\bm{x}_1 - \bm{x}_2\big\|_2^2 + 4B_2^2 \big\|G_1 - G_2\big\|_F^2 \\
&\leq C \big\|\mathrm{vec}(\bm{x}_1, G_1) - \mathrm{vec}(\bm{x}_2, G_2)\big\|_2^2,
\end{align*}
where \( C \) is a constant such that \( C \geq \max\Big\{2\big(\sqrt{2} + 2\sqrt{2}B_2(B_2^2 + \sqrt{d^*})\big)^2, 4B_2^2\Big\} \) and $\mathrm{vec}(\cdot)$ represents vectorized operator.
\end{proof}	
\end{proposition}
\end{appendices}

\end{document}